\newtheorem{theorem}{Theorem}
\newtheorem{lemma}{Lemma}
\newtheorem{assumption}{Assumption}
\newtheorem{remark}{Remark}
\title{A Provably-Efficient Model-Free Algorithm for Constrained Markov Decision Processes}
\author{Honghao Wei, Xin Liu, and Lei Ying\\University of Michigan, Ann Arbor}
\date{}
\begin{document}
\maketitle

\begin{abstract}
This paper presents the first {\em model-free}, {\em simulator-free} reinforcement learning algorithm  for Constrained Markov Decision Processes (CMDPs) with sublinear regret and zero constraint violation. The algorithm is named {\em Triple-Q} because it includes three key components: a Q-function (also called action-value function) for the cumulative reward, a Q-function for the cumulative utility for the constraint, and a virtual-Queue that (over)-estimates the cumulative constraint violation. Under Triple-Q, at each  step, an action is chosen based on the pseudo-Q-value that is a combination of the three ``Q'' values. The algorithm updates the reward and utility Q-values with learning rates that depend on the visit counts to the corresponding (state, action) pairs and are periodically reset. In the episodic CMDP setting, Triple-Q achieves $\tilde{\cal O}\left(\frac{1 }{\delta}H^4 S^{\frac{1}{2}}A^{\frac{1}{2}}K^{\frac{4}{5}} \right)$ regret\footnote{{\bf Notation:} $f(n) = \tilde{\mathcal O}(g(n))$ denotes $f(n) = {\mathcal O}(g(n){\log}^k n)$ with $k>0.$ The same applies to $\tilde{\Omega}.$ $\mathbb R^+$ denotes non-negative real numbers. $[H]$ denotes the set $\{1,2,\cdots, H\}.$ }, where $K$ is the total number of episodes, $H$ is the number of steps in each episode, $S$ is the number of states, $A$ is the number of actions, and $\delta$ is Slater's constant. Furthermore, {Triple-Q} guarantees zero constraint violation, both on expectation and with a high probability, when $K$ is sufficiently large. Finally, the computational complexity of {Triple-Q} is similar to SARSA for unconstrained MDPs, and is computationally efficient.\end{abstract}

\section{Introduction}
Reinforcement learning (RL), with its success in gaming and robotics, has been widely viewed as one of the most important technologies for next-generation, AI-driven complex systems such as autonomous driving, digital healthcare, and smart cities. However, despite the significant advances  (such as deep RL) over the last few decades, a major obstacle in applying RL in practice is the lack of ``safety'' guarantees. 
Here ``safety'' refers to a broad range of operational constraints.  
The objective of a traditional RL problem is to maximize the expected cumulative reward, but in practice, many applications need to be operated under a variety of constraints, such as collision avoidance in robotics and autonomous driving \cite{OnoPavKuw_15,GarFer_12,FisAkaZei_18}, 
legal and business restrictions in financial engineering \cite{AbePreCez_10}, and resource and budget constraints in healthcare systems \cite{YuLiuNem_19}. 
These applications with operational constraints can often be modeled as  Constrained Markov Decision Processes (CMDPs), in which the agent's goal is to learn a policy that maximizes the expected cumulative reward subject to the constraints. 

Earlier studies on CMDPs assume the model is known. A comprehensive study of these early results can be found in \cite{Alt_99}. RL for unknown CMDPs has been a topic of great interest recently because of its importance in Artificial Intelligence (AI) and Machine Learning (ML). The most noticeable advances recently are {\em model-based} RL for CMDPs, where the transition kernels are learned and used to solve the linear programming (LP) problem for the CMDP \cite{SinGupShr_20,BraDudLyk_20,KriRahPie_20,EfrManPir_20}, or the LP problem in the primal component of a primal-dual algorithm \cite{QiuWeiYan_20,EfrManPir_20}. If the transition kernel is linear, then it can be learned in a sample efficient manner even for infinite state and action spaces, and then be used in the policy evaluation and improvement in a primal-dual algorithm \cite{DinWeiYan_20}. \cite{DinWeiYan_20} also proposes a model-based algorithm (Algorithm 3) for the tabular setting (without assuming a linear transition kernel). 
\begin{table*}[t]
	\caption{The Exploration-Exploitation Tradeoff in Episodic CMDPs.}
	\label{ta:algorithms}
	\begin{center}
		\begin{tabular}{|c|l|l|l|}
			\toprule
			& {\bf Algorithm} & {\bf Regret} & {\bf Constraint Violation} \\
			\hline
			 \multirow{3}{*}{Model-based} & OPDOP \cite{DinWeiYan_20} & $\tilde{\mathcal{O}}(H^3\sqrt{S^2AK})$& $\tilde{\mathcal{O}}(H^3\sqrt{S^2AK})$ \\\cline{2-4}
				&  OptDual-CMDP \cite{EfrManPir_20}  &	$\tilde{\mathcal{O}}(H^2\sqrt{S^3AK})$ & $\tilde{\mathcal{O}}(H^2\sqrt{S^3AK})$  \\\cline{2-4}
				&  OptPrimalDual-CMDP \cite{EfrManPir_20}  &	$\tilde{\mathcal{O}}(H^2\sqrt{S^3AK})$ & $\tilde{\mathcal{O}}(H^2\sqrt{S^3AK})$  \\ 
		\bottomrule
		%\toprule
		%Known-Transition &C-UCRL \citep{ZheRat_20} &$\tilde{\mathcal{O}}(H^\frac{3}{4}T^{\frac{3}{4}}) $ &0 \\
		%\bottomrule
			\toprule
		Model-free & \cellcolor{lightgray} Triple-Q &$\tilde{\cal O}\left(\frac{1 }{\delta}H^4 S^{\frac{1}{2}}A^{\frac{1}{2}}K^{\frac{4}{5}} \right)$ &0 \\
			\bottomrule
		\end{tabular}
	\end{center}
\end{table*}

The performance of a model-based RL algorithm depends on how accurately a model can be estimated. For some complex environments, building accurate models is challenging computationally and data-wise \cite{SutBar_18}. For such environments, model-free RL algorithms often are more desirable. However, there has been little development on model-free RL algorithms for CMDPs with provable optimality or regret guarantees, with the exceptions \cite{DinZhaBas_20,XuLiaLan_20,CheDonWan_21}, all of which require simulators. In particular, the sample-based NPG-PD algorithm in \cite{DinZhaBas_20} requires a simulator which can simulate the MDP from any initial state $x,$ and the algorithms in \cite{XuLiaLan_20,CheDonWan_21} both require a simulator for policy evaluation. It has been argued in \cite{AzaRemHil_12,AzaRemHil_13,JinAllZey_18} that with a perfect simulator, exploration is not needed and sample efficiency can be easily achieved because the agent can query any (state, action) pair as it wishes. Unfortunately, for complex environments, building a perfect simulator often is as difficult as deriving the model for the CMDP. For those environments, sample efficiency and the  exploration-exploitation tradeoff are critical and become one of the most important considerations of RL algorithm design. 
%Therefore, this paper focuses on model-free algorithms for CMDPs {\em without} a simulator. 
\subsection{Main Contributions}
In this paper, we consider the online learning problem of an episodic CMDP with a model-free approach {\em without} a simulator. We develop the first {\em model-free} RL algorithm for CMDPs with sublinear regret and {\em zero} constraint violation (for large $K$). 
The algorithm is named {Triple-Q} because it has three key components: (i) a Q-function (also called action-value function) for the expected cumulative reward, denoted by $Q_{h}(x,a)$ where $h$ is the step index and $(x,a)$ denotes a state-action pair, (ii) a Q-function for the expected cumulative utility for the constraint, denoted by $C_{h}(x,a),$ and (iii) a virtual-Queue, denoted by $Z,$ which (over)estimates the cumulative constraint violation so far. At  step $h$ in the current episode, when observing state $x,$ the agent selects action $a^*$ based on a {\em pseudo-Q-value} that is a combination of the three ``Q'' values:
\begin{align*}
a^*\in \underbrace{\arg\max_a  Q_{h}(x,a)+\frac{Z}{\eta} C_{h}(x,a)}_{{\text{pseudo-Q-value of state $(x,a)$ at step $h$}}},    
\end{align*}  where $\eta$ is a constant. Triple-Q uses UCB-exploration when learning the Q-values, where the UCB bonus and the learning rate at each update both depend on the visit count to the corresponding (state, action) pair as in \cite{JinAllZey_18}). Different from the optimistic Q-learning for unconstrained MDPs (e.g. \cite{JinAllZey_18,WanDonChe_20,WeiJahLuo_20}), the learning rates in Triple-Q need to be periodically reset at the beginning of each frame, where a frame consists of $K^\alpha$ consecutive episodes. The value of the virtual-Queue (the dual variable) is updated once in every frame. So Triple-Q can be viewed as a two-time-scale algorithm where virtual-Queue is updated at a slow time-scale, and Triple-Q learns the pseudo-Q-value for fixed $Z$ at a fast time scale within each frame. Furthermore, it is critical to update the two Q-functions ($Q_h(x,a)$ and $C_h(x,a)$) following a rule similar to SARSA \cite{RumNir_94} instead of Q-learning \cite{Wat_89}, in other words, using the Q-functions of the action that is taken instead of using the $\max$ function.    

We prove {Triple-Q} achieves $\tilde{\cal O}\left(\frac{1 }{\delta}H^4 S^{\frac{1}{2}}A^{\frac{1}{2}}K^{\frac{4}{5}} \right)$ reward regret and guarantees {\em zero} constraint violation when the total number of episodes $K\geq\left(\frac{16\sqrt{SAH^6\iota^3}}{\delta}\right)^5,$ where $\iota$ is logarithmic in $K.$
Therefore, in terms of constraint violation, our bound is sharp for large $K.$ To the best of our knowledge, this is the first {\em model-free}, {\em simulator-free} RL algorithm with sublinear regret and {\em zero} constraint violation. For model-based approaches, it has been shown  that a model-based algorithm achieves both $\tilde{\cal O}(\sqrt{H^4SAK})$ regret and constraint violation (see, e.g. \cite{EfrManPir_20}). It remains open what is the fundamental lower bound on the regret under model-free algorithms for CMDPs and whether the regret bound under Triple-Q is order-wise sharp or can be further improved. Table \ref{ta:algorithms} summarizes the key results on the exploration-exploitation tradeoff of CMDPs in the literature.

As many other model-free RL algorithms, a major advantage of {Triple-Q} is its low computational complexity.  The computational complexity of {Triple-Q} is similar to SARSA for unconstrained MDPs, so it retains both its effectiveness and efficiency  while solving a much harder problem. While we consider a tabular setting in this paper, {Triple-Q} can  easily incorporate function approximations (linear  function approximations or neural networks) by replacing the $Q(x,a)$ and $C(x,a)$ with their function approximation versions, making the algorithm a very appealing approach for solving complex CMDPs in practice. We will demonstrate this by applying Deep Triple-Q, Triple-Q with neural networks, to the Dynamic Gym benchmark \cite{YanSimTin_21} in Section \ref{sec:simu}.

\section{Problem Formulation}
We consider an episodic CMDP, denoted by $(\mathcal{S},\mathcal{A},H,\mathbb{P},r,g),$ where $\mathcal{S}$ is the state space with $\vert \mathcal{S}\vert=S,$ $\mathcal{A}$ is the action space with $\vert \mathcal{A}\vert=A,$ $H$ is the number of steps in each episode, and $\mathbb{P}=\{\mathbb{P}_h\}_{h=1}^H$ is a collection of transition kernels (transition probability matrices). At the beginning of each episode, an initial state $x_{1}$ is sampled from distribution $\mu_0.$ Then at step $h,$ the agent takes action $a_h$ after observing state $x_h$. Then the agent receives a reward $r_h(x_h,a_h)$ and incurs a utility $g_h(x_h,a_h).$ The environment then moves to a new state $x_{h+1}$ sampled from distribution $\mathbb{P}_h(\cdot\vert x_h,a_h).$ Similar to \cite{JinAllZey_18}, we assume that  $r_h(x,a)(g_h(x,a)):\mathcal{S}\times \mathcal{A}\rightarrow [0,1],$ are deterministic for convenience.

Given a policy $\pi,$ which is a collection of $H$ functions $\{\pi_h:\mathcal{S}\rightarrow\mathcal{A}\}_{h=1}^H,$ the reward value function $V_{h}^\pi$ at step $h$ is the expected cumulative rewards from step $h$ to the end of the episode under policy $\pi:$ 
$$V_{h}^\pi(x)=\mathbb{E}\left[\left.\sum_{i=h}^H r_i(x_i,\pi_i(x_i))\right\vert x_h=x \right].$$ 
The (reward) $Q$-function $Q_{h}^\pi(x,a)$ at step $h$ is the expected cumulative rewards when agent starts from a state-action pair $(x,a)$ at step $h$ and then follows policy $\pi:$ 
$$Q_{h}^\pi(x,a  )=r_h(x,a) + \mathbb{E}\left[\left.\sum_{i=h+1}^H r_i(x_i,\pi_i(x_i))\right\vert\begin{array}{cc}
      x_h=x\\a_h=a  
\end{array} \right].$$ Similarly, we use $W_{h}^\pi(x):\mathcal{S}\rightarrow\mathbb{R}^+$ and $C_{h}^\pi(x,a):\mathcal{S}\times\mathcal{A}\rightarrow\mathbb{R}^+$ to denote the utility value function and utility $Q$-function at step $h$:
\begin{align*}
	W_{h}^\pi(x) &=\mathbb{E}\left[\left.\sum_{i=h}^H g_i(x_i,\pi_i(x_i))\right\vert x_h=x \right],\\
	C_{h}^\pi(x,a) &=g_h(x,a) + \mathbb{E}\left[\left.\sum_{i=h+1}^H g_i(x_i,\pi_i(x_i))\right\vert \begin{array}{cc}
	     x_h=x\\a_h=a 
	\end{array}\right].
\end{align*}

For simplicity, we adopt the following notation (some used in \cite{JinAllZey_18,DinWeiYan_20}): 
\begin{align*}
	\mathbb{P}_hV_{h+1}^\pi(x,a)=\mathbb{E}_{x^\prime\sim\mathbb{P}_h(\cdot\vert x,a)}V^\pi_{h+1}(x^\prime), &\quad 	Q_{h}^\pi(x,\pi_h(x)) = \sum_a 	Q_{h}^\pi(x,a) \mathbb{P}(\pi_h(x)=a)\\	
	\mathbb{P}_hW_{h+1}^\pi(x,a)=\mathbb{E}_{x^\prime\sim\mathbb{P}_h(\cdot\vert x,a)}W^\pi_{h+1}(x^\prime), &\quad 
	C_{h}^\pi(x,\pi_h(x)) = \sum_a 	C_{h}^\pi(x,a) \mathbb{P}(\pi_h(x)=a).
\end{align*} From the definitions above, we have  
\begin{align*}
V_{h}^\pi(x)=Q_{h}^\pi(x,\pi_h(x)),\quad &Q_{h}^\pi(x,a)=r_h(x,a)+\mathbb{P}_h V_{h+1}^\pi(x,a),
\end{align*} and
\begin{align*}
W_{h}^\pi(x)=C_{h}^\pi(x,\pi_h(x)),\quad &C_{h}^\pi(x,a)=g_h(x,a)+\mathbb{P}_h W_{h+1}^\pi(x,a).
\end{align*}

Given the model defined above, the objective of the agent is to find a policy that maximizes the expected cumulative reward subject to a constraint on the expected utility:
\begin{equation}
    \underset{\pi\in\Pi}{\text{maximize}}\  \mathbb E\left[V_{1}^\pi(x_1)\right] \  \text{subject to:} \mathbb E\left[W_{1}^\pi(x_1)\right]\geq \rho,\label{eq:obj}
\end{equation}
where we assume $\rho\in[0,H]$ to avoid triviality and the expectation is taken with respect to the initial distribution $x_1\sim \mu_0.$ 

\begin{remark}
The results in the paper can be directly applied to a constraint in the form of  
\begin{equation}
\mathbb E\left[W_{1}^\pi(x_1)\right]\leq \rho.\label{cost-constraint}
\end{equation} Without loss of generality, assume $\rho\leq H.$ We define $\tilde{g}_h(x,a)=1-g_h(x,a)\in[0,1]$ and $\tilde{\rho}=H-\rho\geq 0,$ the the constraint in \eqref{cost-constraint} can be written as 
\begin{equation}
\mathbb E\left[\tilde{W}_{1}^\pi(x_1)\right]\geq \tilde{\rho},
\end{equation} where 
\begin{align*}
\mathbb E\left[\tilde{W}_{1}^\pi(x_1)\right]=\mathbb{E}\left[\sum_{i=1}^H \tilde{g}_i(x_i,\pi_i(x_i)) \right]=H-\mathbb E\left[W_{1}^\pi(x_1)\right]. 
\end{align*}
\end{remark}

Let $\pi^*$ denote the optimal solution to the CMDP problem defined in \eqref{eq:obj}. We evaluate our model-free RL algorithm using regret and constraint violation defined below:

\begin{align}
	\text{Regert}(K) &=\mathbb E\left[\sum_{k=1}^K\left(V_{1}^*(x_{k,1})-V_{1}^{\pi_k}(x_{k,1})\right)\right],\label{def:regret}\\
	\text{Violation}(K) &=\mathbb E\left[ \sum_{k=1}^K \left(\rho-W_1^{\pi_k}(x_{k,1})\right)\right],
\end{align}
where $V_1^*(x)=V_1^{\pi^*}(x),$ $\pi_k$ is the policy used in episode $k$ 
and the expectation is taken with respect to the distribution of the initial state $x_{k,1} \sim  \mu_0.$

In this paper, we assume the following standard Slater's condition hold.
\begin{assumption}
	\label{as:1}
	(Slater's Condition). Given initial distribution  $\mu_0,$ there exist $\delta>0$ and policy $\pi$ such that $$\mathbb E\left[W_{1}^\pi(x_1)\right] -\rho \geq \delta.$$
\end{assumption}
In this paper, Slater's condition simply means there exists a feasible policy that can satisfy the constraint with a slackness $\delta.$  This has been commonly used in the literature \cite{DinWeiYan_20,DinZhaBas_20,EfrManPir_20,PatChaCal_19}. We call $\delta$ Slater's constant. While the regret and constraint violation bounds depend on $\delta,$ our algorithm does not need to know $\delta$ under the assumption that $K$ is large (the exact condition can be found in Theorem \ref{thm:main}). This is a noticeable difference from some of works in CMDPs in which the agent needs to know the value of this constant (e.g. \cite{DinWeiYan_20}) or alternatively a feasible policy (e.g. \cite{AchHelDav_17}) .

\section{{Triple-Q}}
In this section, we introduce {Triple-Q} for CMDPs. The design of our algorithm is based on the primal-dual approach in optimization. While RL algorithms based on the primal-dual approach have been developed for CMDPs \cite{DinWeiYan_20,DinZhaBas_20,QiuWeiYan_20,EfrManPir_20}, a model-free RL algorithm with sublinear regrets and {\em zero} constraint violation is new.  

The design of {Triple-Q} is based on the primal-dual approach in optimization. Given Lagrange multiplier $\lambda,$ we consider the Lagrangian of problem \eqref{eq:obj} from a given initial state $x_1:$
\begin{align}
&\max_\pi V_{1}^\pi(x_1)+ \lambda\left(W_{1}^\pi(x_1)-\rho\right)\label{eq:lag}\\
=& \max_\pi \mathbb{E}\left[\sum_{h=1}^H r_h(x_h,\pi_h(x_h))+\lambda g_h(x_h,\pi_h(x_h)) \right]-\lambda\rho,\nonumber
\end{align} which is an unconstrained MDP with reward $r_h(x_h,\pi_h(x_h))+\lambda g_h(x_h,\pi_h(x_h))$ at step $h.$ Assuming we solve the unconstrained MDP and obtain the optimal policy, denoted by $\pi^*_\lambda,$ we can then update the dual variable (the Lagrange multiplier) using a gradient method:
$$\lambda \leftarrow \left(\lambda +\rho - \mathbb E\left[W_1^{\pi_\lambda^*}(x_1)\right]\right)^+.$$ While  primal-dual is a standard approach, analyzing the finite-time performance such as regret or sample complexity is particularly challenging. For example, over a finite learning horizon, we will not be able to exactly solve the unconstrained MDP for given $\lambda.$ Therefore, we need to carefully design how often the Lagrange multiplier should be updated. If we update it too often, then the algorithm may not have sufficient time to solve the unconstrained MDP, which leads to divergence; and on the other hand, if we update it too slowly, then the solution will converge slowly to the optimal solution and will lead to large regret and constraint violation. Another challenge is that when $\lambda$ is given, the primal-dual algorithm solves a problem with an objective different from the original objective and does not consider any constraint violation. Therefore, even when the asymptotic convergence may be established, establishing the finite-time regret is still difficult because we need to evaluate the difference between the policy used at each step and the optimal policy.

Next we will show that a low-complexity primal-dual algorithm can converge and have sublinear regret and {\em zero} constraint violation when carefully designed. In particular, {Triple-Q} includes the following key ideas:  
\begin{itemize}[leftmargin=*]
    \item 
    A sub-gradient algorithm for estimating the Lagrange multiplier, which is updated at the beginning of each frame as follows: 
\begin{align}
	Z \leftarrow \left(Z+ \rho+\epsilon-  \frac{\bar{C}}{K^\alpha}\right)^+, \label{eq:vq}
\end{align} where  $(x)^+=\max\{x,0\}$ and $\bar{C}$ is the summation of all $C_{1}(x_{1},a_{1})$s of the episodes in the previous frame. We call $Z$ a virtual queue because it is terminology that has been widely used in stochastic networks  (see e.g. \cite{Nee_10,SriYin_14}). If we view $\rho+\epsilon$ as the number of jobs that arrive at a queue within each frame and $\bar{C}$ as the number of jobs that leave the queue within each frame, then $Z$ is the number of jobs that are waiting at the queue.
Note that we added extra utility $\epsilon$ to $\rho.$ By choosing $\epsilon =\frac{8\sqrt{SAH^6\iota^3}}{K^{0.2}}$, the virtual queue pessimistically estimates constraint violation so Triple-Q achieves {\em zero} constraint violation when the number of episodes is large.  

\item A carefully chosen parameter $\eta=K^{0.2}$ so that when $\frac{Z}{\eta}$ is used as the estimated Lagrange multiplier, it balances the trade-off between maximizing the cumulative reward and satisfying the constraint.

\item Carefully chosen learning rate $\alpha_t$ and Upper Confidence Bound (UCB) bonus $b_t$ to guarantee that the estimated Q-value does not significantly deviate from the actual Q-value. We remark that the learning rate and UCB bonus proposed for unconstrained MDPs \cite{JinAllZey_18} do not work here. Our learning rate is chosen to be $\frac{K^{0.2}+1}{K^{0.2}+t},$ where $t$ is the number of visits to a given (state, action) pair in a particular step. This decays much slower than the classic learning rate $\frac{1}{t}$ or $\frac{H+1}{H+t}$ used in \cite{JinAllZey_18}. The learning rate is further reset from frame to frame, so Triple-Q can continue to learn the pseudo-Q-values that vary from frame to frame due to the change of the virtual-Queue (the Lagrange multiplier).  
\end{itemize}

We now formally introduce {Triple-Q}.  A detailed description is presented in Algorithm \ref{alg:triple-Q}.  The algorithm only needs to know the values of $H,$ $A,$ $S$ and $K,$ and no other problem-specific values are needed.  Furthermore, Triple-Q includes updates of two Q-functions per step: one for $Q_{h}$ and one for $C_{h};$ and one simple virtual queue update per frame. So its computational complexity is similar to SARSA. 

\begin{algorithm}[htb]
	\label{alg:triple-Q}
	\SetAlgoLined
	Choose $\chi = K^{0.2},$  $\eta= K^{0.2},$ $\iota=128\log\left(\sqrt{2SAH}K\right),\alpha=0.6$ and  $\epsilon = \frac{8\sqrt{SAH^6\iota^3}}{K^{0.2}}$ \;
	Initialize  $Q_{h}(x,a)=C_{h}(x,a)\leftarrow H$ and  $Z=\bar{C}=N_{h}(x,a)=V_{H+1}(x)=W_{H+1}(x)\leftarrow 0$ for all $(x,a,h)\in{\cal S}\times {\cal A}\times [H]$\;
	\For{episode $k = 1,\dots,K $ }{
		Sample the initial state for episode $k:$ $x_{1} \sim \mu_0$\;
		\For{step $h=1,\dots,H+1$}{
			\If{$h\leq H$ \tcp*{\small take a greedy action based on the pseudo-Q-function}}{
				Take action $a_{h} \leftarrow \arg\max_a\left( Q_{h} (x_{h},a) + \frac{Z}{\eta}C_{h}(x_{h},a)\right)$\;
				Observe $r_h(x_{h},a_{h}), g_h(x_{h},a_{h}), $ and $x_{h+1}$ \;
				$N_{h}(x_{h},a_{h})\leftarrow N_{h}(x_{h},a_{h})+1,V_{h}(x_{h})\leftarrow Q_{h}(x_{h},a_{h}), W_{h}(x_{h})\leftarrow C_{h}(x_{h},a_{h})$;
			}
			\If{$h\geq 2$ \tcp*{\small update the Q-values for $(x_{h-1},a_{h-1})$ after observing $(s_h,a_h)$}}{
				Set $t=N_{h-1}(x_{h-1},a_{h-1}), b_t = \frac{1}{4}\sqrt{\frac{H^2\iota\left(\chi +1 \right) }{\chi+t}}, \alpha_t=\frac{\chi+1}{\chi+t}$ \;
				Update the reward Q-value: ${Q}_{h-1} (x_{h-1},a_{h-1})\leftarrow (1-\alpha_t)Q_{h-1} (x_{h-1},a_{h-1}) + \alpha_t\left(r_{h-1}(x_{h-1},a_{h-1})+V_{h} (x_{h})+b_t\right)$\;
				Update the utility Q-value: ${C}_{h-1} (x_{h-1},a_{h-1})\leftarrow (1-\alpha_t)C_{h-1} (x_{h-1},a_{h-1}) + \alpha_t\left(g_{h-1}(x_{h-1},a_{h-1})+W_{h}(x_{h})+b_t\right)$\;
			}
			\If{$h=1$}{
				$\bar{C} \leftarrow \bar{C}+C_{1}(x_{1},a_{1})$ \tcp*{\small Add $C_1(x_1,a_1)$ to $\bar{C}$} 	
			}
		}
		\If{$k\mod(K^\alpha)=0$  \tcp*{\small Reset the visit counts, add extra bonuses, and update the virtual-queue at the beginning of each frame}} { 
			$N_{h}(x,a)\leftarrow 0, Q_{h}(x,a)\leftarrow Q_{h}(x,a)+\frac{2H^3\sqrt{\iota}}{\eta}, \forall (x,a,h)$\;
			\If{$Q_{h}(x,a)\geq H$ or $C_{h}(x,a)\geq H$}{
				$Q_{h}(x,a)\leftarrow H$ and $C_{h}(x,a)\leftarrow H;$
			}
			$Z\leftarrow \left(Z +\rho+\epsilon -  \frac{\bar{C}}{K^\alpha}\right)^+,$ and   $\bar{C}\leftarrow 0$ \tcp*{\small  update the virtual-queue length}
		}
	}
	\caption{{Triple-Q}}
\end{algorithm}

The next theorem summarizes the regret and constraint violation bounds guaranteed under {Triple-Q}.

\begin{theorem}\label{thm:main}
Assume $K \geq \left(\frac{16\sqrt{SAH^6\iota^3}}{\delta} \right)^5,$ where $\iota=128\log(\sqrt{2SAH}K).$ Triple-Q achieves the following regret and constraint violation bounds:  %$K \geq \left(\frac{8\sqrt{SAH^6\iota}}{\delta}\right)^5.$ 
\begin{align*}
	\text{\em Regret}(K) & \leq \frac{13}{\delta} H^4 {\sqrt{SA\iota^3}}{K^{0.8}}  +\frac{4H^4\iota}{ K^{1.2}} \\
	\text{\em Violation}(K) &\leq   \frac{54H^4{\iota}K^{0.6} }{\delta}\log{\frac{16H^2\sqrt{\iota}}{\delta}} 
	+\frac{4\sqrt{H^2\iota}}{\delta}K^{0.8}- 5\sqrt{SAH^6\iota^3}K^{0.8}.
		\end{align*}
If we further have $K\geq e^{\frac{1}{\delta}},$ then $\text{\em Violation}(K)\leq 0$ and
\begin{align*}
\Pr\left(\sum^{K}_{k=1}\rho-W_{1}^{\pi_k} (x_{k,1})\leq 0\right)= 1-\tilde{\mathcal{O}}\left(e^{-K^{0.2} + }+\frac{1}{K^2} \right),
\end{align*}  in other words, Triple-Q guarantees zero constraint violation both on expectation and with a high probability. 
\end{theorem}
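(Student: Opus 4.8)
The plan is to run a primal--dual argument whose inner loop is an optimistic, SARSA-style Q-learning analysis adapted to the slowly-decaying, frame-reset learning rate, and whose outer loop is a Lyapunov drift-plus-penalty analysis of the virtual queue $Z$ (the discrete ``gradient ascent'' on the dual variable). First I would show that on a high-probability event the running estimates stay optimistic throughout each frame: $Q_h(x,a)\ge Q_h^{\pi^*}(x,a)$, $C_h(x,a)\ge C_h^{\pi^*}(x,a)$, and more generally they dominate the optimal values of the frozen-$Z$ ``Lagrangian MDP'' with reward $r+\tfrac{Z}{\eta}g$. The mechanism is that the UCB bonus $b_t$ and the once-per-frame catch-up bonus $\tfrac{2H^3\sqrt\iota}{\eta}$ together dominate (i) the martingale noise in the Bellman backups and (ii) the forgetting caused by resetting the counts $N_h$ at frame boundaries. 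This forces a re-derivation of the weight identities that play the role of $\sum_i\alpha_t^i=1$ and $\sum_{t\ge i}\alpha_t^i\le 1+\tfrac1H$ in Jin et al., now for $\alpha_t=\tfrac{\chi+1}{\chi+t}$ with $\chi=K^{0.2}$ --- a rate essentially constant for $t\lesssim\chi$ and only $\sim 1/t$ afterwards. Using these I would bound the cumulative gaps $\sum_k(V_1^k(x_{k,1})-V_1^{\pi_k}(x_{k,1}))$ and $\sum_k(C_1(x_{k,1},a_{k,1})-W_1^{\pi_k}(x_{k,1}))$ by $\tilde{\mathcal O}(\sqrt{SAH^6}\,K^{0.8})$, the exponent coming from $(\text{number of frames }K^{1-\alpha}=K^{0.4})\times(\text{per-frame error }\sim\sqrt{\chi K^\alpha}=K^{0.4})$.

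\textbf{Bounding the virtual queue.} With $L_m=\tfrac12 Z_m^2$ for the queue value at the start of frame $m$, I would expand the drift $L_{m+1}-L_m\le Z_m(\rho+\epsilon-\tfrac{\bar C_m}{K^\alpha})+\tfrac12(\rho+\epsilon-\tfrac{\bar C_m}{K^\alpha})^2$. Slater's condition (Assumption~\ref{as:1}) enters exactly here: since within frame $m$ the executed policy is, up to the learning error above, greedy for the multiplier $Z_m/\eta$, comparing against the Slater policy gives $\tfrac{\bar C_m}{K^\alpha}\ge\rho+\delta-(\text{per-frame error})-\mathcal O(\eta/Z_m)$, hence a negative drift $L_{m+1}-L_m\le-\Theta(\delta)Z_m+\mathcal O(1)$ whenever $Z_m=\Omega(\tfrac{H}{\delta}\log\tfrac{H}{\delta})$. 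This yields $\mathbb E[Z_m]=\tilde{\mathcal O}(H/\delta)$ and $\mathbb E[Z_m^2]=\tilde{\mathcal O}(H^2/\delta^2)$ for every $m$; an exponential-supermartingale (Hajek-type) argument upgrades this to $\Pr(Z_m\ge\tilde{\mathcal O}(H/\delta))\le e^{-\Theta(K^{0.2})}$, the source of the $e^{-K^{0.2}}$ term.

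\textbf{Assembling regret and constraint violation.} For the regret, write $\text{Regret}(K)=\sum_k(V_1^*(x_{k,1})-V_1^k(x_{k,1}))+\sum_k(V_1^k(x_{k,1})-V_1^{\pi_k}(x_{k,1}))$; the second sum is the learning error. For the first, the greedy rule plus optimism give (in expectation over $x_{k,1}$) $V_1^*(x_{k,1})-V_1^k(x_{k,1})\le\tfrac{Z}{\eta}(C_1(x_{k,1},a_{k,1})-\rho)$, so summing over frame $m$ the ``$-\epsilon$'' part $\tfrac{K^\alpha Z_m}{\eta}(\tfrac{\bar C_m}{K^\alpha}-\rho-\epsilon)$ telescopes against the drift of the previous step, while the leftover $\tfrac{K\epsilon}{\eta}\mathbb E[\max_m Z_m]=\tilde{\mathcal O}(\tfrac1\delta\sqrt{SAH^6}\,K^{0.6})$ is lower order; collecting everything (the dominant term being $\tfrac{2H^3\sqrt\iota}{\eta}$ summed over $K$ episodes and scaled by $\mathbb E[Z]=\tilde{\mathcal O}(H/\delta)$) gives $\text{Regret}(K)=\tilde{\mathcal O}(\tfrac1\delta H^4\sqrt{SA}\,K^{0.8})$. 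For the violation, telescoping the queue update with $(x)^+\ge x$ gives $\sum_m\bar C_m\ge K(\rho+\epsilon)-K^\alpha Z_{M+1}$, which combined with the learning bound $\sum_k W_1^{\pi_k}(x_{k,1})\ge\sum_k C_1(x_{k,1},a_{k,1})-\tilde{\mathcal O}(\sqrt{SAH^6}K^{0.8})$ yields $\sum_k(\rho-W_1^{\pi_k}(x_{k,1}))\le -K\epsilon+K^\alpha Z_{M+1}+\tilde{\mathcal O}(\sqrt{SAH^6}K^{0.8})$. Taking expectations, $-K\epsilon=-8\sqrt{SAH^6\iota^3}K^{0.8}$ while $K^\alpha\mathbb E[Z_{M+1}]=\tilde{\mathcal O}(\tfrac H\delta K^{0.6})$, so above the stated threshold the engineered slack wins and $\text{Violation}(K)\le 0$; the extra hypothesis $K\ge e^{1/\delta}$ forces $\iota\ge 128/\delta$, exactly what makes the $\iota^{3/2}$-scaled slack dominate the $\tfrac1\delta$-scaled learning error. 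The high-probability statement then follows by replacing $W_1^{\pi_k}$ by realized utilities (a bounded martingale, so Azuma--Hoeffding plus a union bound absorbed into $\iota$ gives an $\mathcal O(1/K^2)$ tail) and intersecting with $\{Z_{M+1}\le\tilde{\mathcal O}(H/\delta)\}$.

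\textbf{Main obstacle.} The crux is the two-time-scale coupling. The inner Q-learning runs with a frozen multiplier $Z/\eta$, yet $Z$ is itself random and history-dependent, so the optimism and error bounds must hold uniformly over all realizations of $Z$ (it helps that $Z/\eta$ is automatically capped by the clipping $Q_h,C_h\le H$ and by the drift bound). More delicate still is feeding the per-frame learning error back into the drift: the negative drift $-\Theta(\delta)Z_m$ must survive that error, which is only possible if $\eta$, $\epsilon$ and the frame length $K^\alpha$ are tuned precisely as in Algorithm~\ref{alg:triple-Q} --- so that the per-frame error is $\ll\delta Z_m$ whenever $Z_m$ is large, the aggregate slack $K\epsilon=\Theta(K^{0.8})$ beats the $\tilde{\mathcal O}(K^{0.8})$ cumulative learning error, and $K^\alpha\mathbb E[Z]=\tilde{\mathcal O}(K^{0.6})$ remains lower order. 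Verifying that the single choice $\alpha=0.6$, $\chi=\eta=K^{0.2}$, $\epsilon=\Theta(K^{-0.2})$ makes all these inequalities hold simultaneously is the technical heart of the proof.
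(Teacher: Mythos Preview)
Your overall architecture---optimism for the Lagrangian Q-values, per-frame learning error control, Lyapunov drift on $Z^2$, and a Hajek-type exponential bound---matches the paper's route. Two claims, however, are wrong and would derail the argument if you relied on them.

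\emph{Separate optimism does not hold.} You assert $Q_h(x,a)\ge Q_h^{\pi^*}(x,a)$ and $C_h(x,a)\ge C_h^{\pi^*}(x,a)$ individually. This is false under Triple-Q, and the paper stresses exactly this point: because actions are greedy for $Q_h+\tfrac{Z}{\eta}C_h$ and the SARSA-style backup uses $V_{k,h+1}(x_{h+1})=Q_{k,h+1}(x_{h+1},a_{h+1})$ for \emph{that} action (not $\max_a Q_{k,h+1}$), the usual induction for $Q$ alone breaks. Only the combined quantity $F_{k,h}=Q_{k,h}+\tfrac{Z_k}{\eta}C_{k,h}$ can be shown to dominate $F_h^\pi$ for every policy $\pi$; the once-per-frame bonus $\tfrac{2H^3\sqrt\iota}{\eta}$ is there to absorb the change $\lvert Z_{T+1}-Z_T\rvert/\eta$ so that $F$-optimism propagates across frames, not to compensate ``forgetting'' from resetting $N_h$. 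Fortunately your regret and drift steps only actually use $F$-optimism (greedy plus $F_{k,1}\ge F_1^{\pi^*}$ gives $V_1^*-Q_{k,1}\le\tfrac{Z}{\eta}(C_{k,1}-W_1^{\pi^*})$, and similarly against the Slater policy), so the fix is to drop the separate claims and work only with $F$.

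\emph{The queue bound is off by a factor $\eta$.} You write the drift as $-\Theta(\delta)Z_m+\mathcal O(1)$ and conclude $\mathbb E[Z_m]=\tilde{\mathcal O}(H/\delta)$. In fact, when you compare against the Slater policy you must also discard the $\eta Q_{k,1}$ penalty, which contributes $\tilde{\mathcal O}(\eta H)$, not $\mathcal O(1)$, to the drift; the paper's Lemma~\ref{le:drift_epi_neg} gives $\mathbb E[L_{T+1}-L_T\mid Z_T=z]\le-\tfrac{\delta}{2}z+\eta\sqrt{H^2\iota}+H^4\iota+\epsilon^2+\tfrac{4H^2\iota}{K^2}$, so $\mathbb E[Z_T]=\tilde{\mathcal O}(\eta H/\delta)$. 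Consequently $K^\alpha\mathbb E[Z_{M+1}]=\tilde{\mathcal O}(K^{0.8}H/\delta)$, the \emph{same} order as both the slack $K\epsilon$ and the learning error, not ``lower order'' as you claim. Zero violation therefore does not follow from an order comparison; it hinges on constants: $K\epsilon=8\sqrt{SAH^6\iota^3}\,K^{0.8}$ versus $K^\alpha\mathbb E[Z]\le\tfrac{4H\sqrt\iota}{\delta}K^{0.8}$ and learning error $\le 2H^2\sqrt{SA\iota}\,K^{0.8}$, and you need $K\ge e^{1/\delta}$ precisely so that $\iota\ge 128/\delta$ turns the $1/\delta$ into at most $\iota/128$, letting the $\iota^{3/2}$ in the slack dominate. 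The paper tracks these constants explicitly; your sketch must too.

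A minor structural difference: the paper introduces the $\epsilon$-tightened LP and compares $V_1^*$ to $V_1^{\epsilon,*}$ first (cost $KH\epsilon/\delta$), which lets the drift telescoping in Lemma~\ref{le:drift} run against $q^{\epsilon,*}$ without any leftover $\epsilon$ term. Your two-term split $V_1^*-Q_{k,1}$ followed by $(C_{k,1}-\rho-\epsilon)+\epsilon$ is equivalent in the end (the leftover $\sum_k\tfrac{Z_k\epsilon}{\eta}$ is again $\tilde{\mathcal O}(KH\epsilon/\delta)$ via $\mathbb E[Z_k/\eta]=\tilde{\mathcal O}(H/\delta)$), but it couples the regret bound to the queue bound, whereas the paper's decomposition keeps them independent.
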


 \section{Proof of the Main Theorem}
We now present the complete proof of the main theorem. 
\subsection{Notation}
In the proof, we explicitly include the episode index in our notation. In particular, \begin{itemize}
    \item $x_{k,h}$ and  $a_{k,h}$ are the state and the action taken at step $h$ of episode $k.$
    
    \item $Q_{k,h},$ $C_{k,h},$ $Z_k,$ and $\bar C_k$ are the reward Q-function, the utility Q-function, the virtual-Queue, and the value of $\bar C$ {\em at the beginning} of episode $k.$ 
    
    \item $N_{k,h},$ $V_{k,h}$ and $W_{k,h}$ are the visit count, reward value-function, and utility value-function {\em after} they are updated at step $h$ of episode $k$ (i.e. after line 9 of Triple-Q). 
    \end{itemize} 
We also use shorthand notation $$\{f-g\}(x)=f(x)-g(x),$$ when $f(\cdot)$ and $g(\cdot)$ take the same argument value. Similarly $$\{(f-g)q\}(x)=(f(x)-g(x))q(x).$$ In this shorthand notation, we put functions inside $\{\ \},$ and the common argument(s) outside. 

A summary of notations used throughout this paper can be found in Table \ref{ta:notations} in the appendix. 
\subsection{Regret}
To bound the regret, we consider the following offline optimization problem as our regret baseline  \cite{Alt_99,PutMar_14}:
\begin{align}
	\underset{q_h }{\max} &\sum_{h,x,a} q_h(x,a)r_h(x,a) \label{eq:lp}\\
	\hbox{s.t.:} &\sum_{h,x,a} q_h(x,a)g_h(x,a) \geq \rho \label{lp:cost}\\
	& \sum_a q_h(x,a) = \sum_{x^\prime,a^\prime} \mathbb P_{h-1}(x\vert x^\prime,a^\prime) q_{h-1} (x^\prime,a^\prime)\label{lp:gb}\\
	& \sum_{x,a}q_h(x,a) =1, \forall h\in[H] \label{lp:normalization}\\
	&\sum_{a} q_1(x,a)= \mu_0(x)\label{lp:ini} \\
	& q_h(x,a) \geq 0, \forall x\in\mathcal{S},\forall a\in\mathcal{A},\forall h\in[H]. \label{lp:p}
\end{align}	
Recall that $\mathbb P_{h-1}(x\vert x^\prime,a^\prime)$ is the probability of transitioning to state $x$ upon taking action $a'$ in state $x'$ at step $h-1.$  This optimization problem is linear programming (LP), where $q_h(x,a)$ is the probability of (state, action) pair $(x,a)$ occurs in step $h,$ $\sum_a q_h(x,a)$ is the probability the environment is in state $x$ in step $h,$ and $$\frac{q_h(x,a)}{\sum_{a'} q_h(x,a')}$$ is the probability of taking action $a$ in state $x$ at step $h,$ which defines the policy. We can see that \eqref{lp:cost} is the utility constraint, \eqref{lp:gb} is the global-balance equation for the MDP, \eqref{lp:normalization} is the normalization condition so that $q_h$ is a valid probability distribution, and \eqref{lp:ini} states that the initial state is sampled from $\mu_0.$ Therefore, the optimal solution to this LP solves the CMDP (if the model is known), so we use the optimal solution to this LP as our baseline.  

To analyze the performance of {Triple-Q}, we need to consider a tightened version of the LP, which is defined below: 
\begin{align}
	\underset{q_h }{\max} &\sum_{h,x,a} q_h (x,a)r_h(x,a) \label{eq:lp-epsilon}\\
	\hbox{s.t.:} &\sum_{h,x,a} q_h (x,a)g_h(x,a)  \geq \rho+\epsilon \nonumber\\
 &\eqref{lp:gb}-\eqref{lp:p}\nonumber,
 \end{align} where $\epsilon>0$ is called a tightness constant. When $\epsilon\leq \delta,$ this problem has a feasible solution due to Slater's condition. We use superscript ${ }^*$ to denote the optimal value/policy related to the original CMDP \eqref{eq:obj} or the solution to the corresponding LP \eqref{eq:lp} and superscript ${ }^{\epsilon,*}$ to denote the optimal value/policy related to the $\epsilon$-tightened version of CMDP (defined in  \eqref{eq:lp-epsilon}). 
 
 Following the definition of the regret in \eqref{def:regret}, we have
\begin{align}
    \hbox{Regret}(K) = \mathbb E\left[\sum_{k=1}^{K}  V^*_{1}(x_{k,1}) -  V_{1}^{\pi_k}(x_{k,1}) \right] =\mathbb E\left[\sum_{k=1}^{K} \left( \sum_a \left\{Q^*_{1}q_1^*\right\}(x_{k,1},a)\right) -  Q_{1}^{\pi_k}(x_{k,1}, a_{k,1}) \right] \nonumber.
\end{align}
Now by adding and subtracting the corresponding terms, we obtain 
\begin{align}
    &  \hbox{Regret}(K)\nonumber\\
    = & \mathbb E \left[\sum_{k=1}^{K} \left( \sum_a  \left\{{Q}_{1}^{*}{q}^{*}_1 -{Q}_{1}^{\epsilon,*}{q}^{\epsilon,*}_1\right\}(x_{k,1},a)    \right)\right]  +\label{step:epsilon-dif} \\
     &\mathbb E \left[\sum_{k=1}^{K}  \left(  \sum_a \left\{{Q}_{1}^{\epsilon,*}{q}^{\epsilon,*}_1\right\}(x_{k,1},a)-Q_{k,1}(x_{k,1}, a_{k,1}) \right)\right]+\label{step(i)}\\
    &\mathbb E \left[\sum_{k=1}^{K}  \left\{Q_{k,1}-  Q_{1}^{\pi_k}\right\}(x_{k,1}, a_{k,1}) \right].\label{step:biase}
\end{align}

Next, we establish the regret bound by analyzing the three terms above. We first present a brief outline. 

\subsubsection{Outline of the Regret Analysis}
\begin{itemize}[leftmargin=*]
\item {\bf Step 1:} First, by comparing the LP associated with the original CMDP \eqref{eq:lp} and the tightened LP \eqref{eq:lp-epsilon}, Lemma \ref{le:epsilon-dif} will show  
	$$\mathbb{E}\left[ \sum_a \left\{Q_{1}^{*}q_1^*-  {Q}_{1}^{\epsilon,*}{q}_1^{\epsilon,*}\right\}(x_{k,1},a)\right] \leq \frac{H \epsilon }{\delta},$$  which implies that under our choices of $\epsilon,$ $\delta,$ and $\iota,$
$$\eqref{step:epsilon-dif} \leq \frac{KH\epsilon}{\delta}= \tilde{\cal O}\left(\frac{1 }{\delta}H^4 S^{\frac{1}{2}}A^{\frac{1}{2}}K^{\frac{4}{5}} \right).$$ 

\item {\bf Step 2:} Note that $Q_{k,h}$ is an estimate of $Q^{\pi_k}_h,$ and the estimation error \eqref{step:biase} is controlled by the learning rates and the UCB bonuses. In Lemma  \ref{le:qk-qpi-bound}, we will show that the cumulative estimation error over one frame is  upper bounded by $$H^2SA  +\frac{H^3\sqrt{\iota}K^\alpha}{\chi} +  \sqrt{H^4SA\iota K^{\alpha}(\chi+1)}.$$
Therefore,  under our choices of $\alpha,$ $\chi,$ and $\iota,$	the cumulative estimation error over $K$ episodes satisfies 
\begin{align*}
\eqref{step:biase}\leq H^2SA K^{1-\alpha}  +\frac{H^3\sqrt{\iota}K}{\chi} +  \sqrt{H^4SA\iota K^{2-\alpha}(\chi+1)}=\tilde{\cal O}\left(H^3 S^{\frac{1}{2}}A^{\frac{1}{2}}K^{\frac{4}{5}} \right).
\end{align*} The proof of Lemma \ref{le:qk-qpi-bound} is based on a recursive formula that relates the estimation error at step $h$ to the estimation error at step $h+1,$ similar to the one used in \cite{JinAllZey_18}, but with different learning rates and UCB bonuses. 

\item {\bf Step 3:} Bounding \eqref{step(i)} is the most challenging part of the proof. For unconstrained MDPs, the optimistic Q-learning in \cite{JinAllZey_18} guarantees that $Q_{k,h}(x,a)$ is an overestimate of $Q^*_h(x,a)$ (so also an overestimate of $Q^{\epsilon,*}_h(x,a)$) for all $(x,a,h,k)$ simultaneously with a high probability. However, this result does not hold under Triple-Q because Triple-Q takes greedy actions with respect to the pseudo-Q-function instead of the reward Q-function. To overcome this challenge, we first add and subtract additional terms to obtain
\begin{align}
  &\mathbb E \left[\sum_{k=1}^{K}  \left(  \sum_a \left\{{Q}_{1}^{\epsilon,*}{q}^{\epsilon,*}_1\right\}(x_{k,1},a)-Q_{k,1}(x_{k,1}, a_{k,1}) \right)\right]\nonumber\\
    =&
    \mathbb{E}\left[ \sum_{k} \sum_a \left(\left\{{Q}_{1}^{\epsilon,*}{q}^{\epsilon,*}_1+\frac{Z_k}{\eta} C_{1}^{\epsilon,*}{q}^{\epsilon,*}_1\right\}(x_{k,1},a) -  \left\{Q_{k,1}{q}^{\epsilon,*}_1 +\frac{Z_k}{\eta}C_{k,1}{q}^{\epsilon,*}_1\right\}(x_{k,1},a)\right)\right] \label{F-new}\\
	&+ \mathbb{E}\left[\sum_{k} \left(\sum_a  \left\{Q_{k,1} {q}^{\epsilon,*}_1\right\}(x_{k,1},a)	- Q_{k,1} (x_{k,1},a_{k,1})\right)\right] +\mathbb{E}\left[ \sum_{k} \frac{Z_k}{\eta} \sum_a \left\{\left(C_{k,1} - {C}^{\epsilon,*}_1   \right){q}^{\epsilon,*}_1\right\}(x_{k,1},a) \right].\label{eq:(i)expanded-new}
\end{align} We can see \eqref{F-new} is the difference of two pseudo-Q-functions. Using a three-dimensional induction (on step, episode, and frame), we will prove in Lemma \ref{le:qk-qpi-relation} that  $\left\{Q_{k,h} +\frac{Z_k}{\eta}C_{k,h}\right\}(x,a)$ is an overestimate of $\left\{{Q}_{h}^{\epsilon,*}+\frac{Z_k}{\eta} C_{h}^{\epsilon,*}\right\}(x,a)$ (i.e. $\eqref{F-new}\leq 0$) for all $(x,a,h,k)$ simultaneously with a high probability. Since $Z_k$ changes from frame to frame, Triple-Q adds the extra bonus in line 21 so that the induction can be carried out over frames. 

Finally, to bound \eqref{eq:(i)expanded-new}, we use the Lyapunov-drift method and consider Lyapunov function $L_T=\frac{1}{2} Z_T^2,$ where $T$ is the frame index and $Z_T$ is the value of the virtual queue at the beginning of the $T$th frame. We will show in Lemma \ref{le:drift} that the Lyapunov-drift satisfies
\begin{align}
&\mathbb{E}[L_{T+1}-L_T] 
\leq  \hbox{a negative drift}+H^4\iota+\epsilon^2- \frac{\eta}{K^\alpha}\sum_{k=TK^\alpha+1}^{(T+1)K^\alpha} \Phi_k ,
\label{outline:drift}
\end{align}
 where
\begin{align*}
\Phi_k=\mathbb{E}\left[ \left(\sum_a  \left\{Q_{k,1} {q}^{\epsilon,*}_1\right\}(x_{k,1},a)	- Q_{k,1} (x_{k,1},a_{k,1})\right)\right] +\mathbb{E}\left[\frac{Z_k}{\eta} \sum_a \left\{\left(C_{k,1} - {C}^{\epsilon,*}_1   \right){q}^{\epsilon,*}_1\right\}(x_{k,1},a) \right],
\end{align*} and we note that 
$\eqref{eq:(i)expanded-new}=\sum_k \Phi_k.$ Inequality \eqref{outline:drift} will be established by showing that Triple-Q takes actions to {\em almost} greedily reduce virtual-Queue $Z$ when $Z$ is large, which results in the negative drift in \eqref{outline:drift}. From \eqref{outline:drift}, we observe that 
\begin{align}
&\mathbb{E}[L_{T+1}-L_T]\leq H^4\iota+\epsilon^2-\frac{\eta}{K^\alpha}\sum_{k=TK^\alpha+1}^{(T+1)K^\alpha} \Phi_k.
\end{align}

So we can bound \eqref{eq:(i)expanded-new} by applying the telescoping sum over the $K^{1-\alpha}$ frames on the inequality above:  $$\eqref{eq:(i)expanded-new}=\sum_k\Phi_k \leq \frac{K^\alpha\mathbb{E}\left[L_1-L_{K^{1-\alpha}+1}\right]}{\eta}+\frac{K(H^4\iota+\epsilon^2)}{\eta}\leq \frac{K(H^4\iota+\epsilon^2)}{\eta},$$ where the last inequality holds because $L_1=0$ and $L_T\geq 0$ for all $T.$ Combining the bounds on \eqref{F-new} and \eqref{eq:(i)expanded-new}, we conclude that under our choices of $\iota,$ $\epsilon$ and $\eta,$
$$\eqref{step(i)}=\tilde{\cal O}(H^4S^\frac{1}{2}A^{\frac{1}{2}}K^{\frac{4}{5}}).$$
\end{itemize}
Combining the results in the three steps above, we obtain the regret bound in Theorem \ref{thm:main}. 

\subsubsection{Detailed Proof}
We next present the detailed proof. The first lemma bounds the difference between the original CMDP and its $\epsilon$-tightened version. The result is intuitive because the $\epsilon$-tightened version is a perturbation of the original problem and $\epsilon\leq \delta.$
\begin{lemma}\label{le:epsilon-dif}
Given $\epsilon\leq \delta$,  we have
	$$\mathbb{E}\left[ \sum_a \left\{Q_{1}^{*}q_1^*-  {Q}_{1}^{\epsilon,*}{q}_1^{\epsilon,*}\right\}(x_{k,1},a)\right] \leq \frac{H \epsilon }{\delta}.$$ \hfill{$\square$}
\end{lemma}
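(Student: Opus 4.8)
The plan is to compare the two linear programs \eqref{eq:lp} and \eqref{eq:lp-epsilon} directly through their feasible regions, constructing an explicit feasible point of the tightened LP out of the optimal solution of the original LP and the Slater policy. Let $q^*$ be the optimal occupancy measure of \eqref{eq:lp} and let $q^\pi$ be the occupancy measure induced by the Slater policy from Assumption \ref{as:1}, so that $\sum_{h,x,a} q^\pi_h(x,a) g_h(x,a) \geq \rho + \delta$. For $\beta \in [0,1]$ define the mixture $\hat q = (1-\beta) q^* + \beta q^\pi$. Since the constraints \eqref{lp:gb}--\eqref{lp:p} are all linear and satisfied by both $q^*$ and $q^\pi$, the mixture $\hat q$ is again a valid occupancy measure (it is a convex combination, and in particular \eqref{lp:normalization} and \eqref{lp:ini} are preserved). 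For the utility constraint, linearity gives
\begin{align*}
\sum_{h,x,a} \hat q_h(x,a) g_h(x,a) &\geq (1-\beta)\rho + \beta(\rho+\delta) = \rho + \beta\delta,
\end{align*}
so choosing $\beta = \epsilon/\delta$ (which lies in $[0,1]$ because $\epsilon \leq \delta$) makes $\hat q$ feasible for the $\epsilon$-tightened LP \eqref{eq:lp-epsilon}.

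Next I would use $\hat q$ to lower bound the optimal value of the tightened LP. Since $q^{\epsilon,*}$ is optimal for \eqref{eq:lp-epsilon} and $\hat q$ is feasible for it,
\begin{align*}
\sum_{h,x,a} q^{\epsilon,*}_h(x,a) r_h(x,a) &\geq \sum_{h,x,a} \hat q_h(x,a) r_h(x,a) = (1-\beta)\sum_{h,x,a} q^*_h(x,a) r_h(x,a) + \beta \sum_{h,x,a} q^\pi_h(x,a) r_h(x,a).
\end{align*}
Dropping the nonnegative term $\beta\sum q^\pi_h r_h \geq 0$ and rearranging, the gap in LP values is
\begin{align*}
\sum_{h,x,a} q^*_h(x,a) r_h(x,a) - \sum_{h,x,a} q^{\epsilon,*}_h(x,a) r_h(x,a) &\leq \beta \sum_{h,x,a} q^*_h(x,a) r_h(x,a) \leq \beta H = \frac{\epsilon H}{\delta},
\end{align*}
where the last bound uses that each $r_h \in [0,1]$ and $\sum_{x,a} q^*_h(x,a) = 1$ for each of the $H$ steps, so the total expected reward is at most $H$. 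Finally I would translate this statement about LP objective values back into the value-function language of the lemma: by construction the LP objective equals $\mathbb{E}_{x_1\sim\mu_0}[V_1^{\pi}(x_1)]$ for the policy $\pi$ defined by the occupancy measure, and $\sum_a \{Q_1^{*} q_1^{*}\}(x,a) = V_1^{*}(x)$, likewise for the $\epsilon,*$ quantities; taking expectation over $x_{k,1}\sim\mu_0$ (whose law is $\mu_0$, the same as $x_1$) gives exactly the claimed inequality.

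I do not expect a serious obstacle here — the result is essentially the standard "Slater perturbation" bound. The one place that needs a little care is the bookkeeping that identifies the LP objective $\sum_{h,x,a} q_h(x,a) r_h(x,a)$ with the expected value function $\mathbb{E}_{\mu_0}[V_1^\pi(x_1)]$, and similarly that the optimal LP solution corresponds to $\pi^*$ (respectively $\pi^{\epsilon,*}$); this is the content of the classical LP formulation of CMDPs cited as \cite{Alt_99,PutMar_14}, so I would simply invoke it rather than re-derive it. A minor subtlety is confirming $\beta = \epsilon/\delta \leq 1$, which is exactly the hypothesis $\epsilon \leq \delta$, and that all of \eqref{lp:gb}--\eqref{lp:p} are preserved under convex combination, which is immediate since each is either a linear equality or a nonnegativity constraint.
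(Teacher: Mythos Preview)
Your proposal is correct and follows essentially the same approach as the paper: construct the convex combination $(1-\epsilon/\delta)q^* + (\epsilon/\delta)q^{\text{Slater}}$, verify it is feasible for the $\epsilon$-tightened LP, and use optimality of $q^{\epsilon,*}$ together with the bound $\sum_{h,x,a} q^*_h r_h \leq H$ to control the gap. The only cosmetic difference is notation (the paper calls the mixture $q^{\xi_2}$ and writes out the chain of inequalities a bit more explicitly before invoking the identification of LP values with expected value functions).
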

\begin{proof}
	Given ${q}_h^*(x,a)$ is the optimal solution, we have $$\sum_{h,x,a} {q}_h^*(x,a)g_h(x,a) \geq \rho.$$ Under Assumption \ref{as:1}, we know that there exists a feasible solution $\{q^{\xi_1}_h(x,a)\}_{h=1}^H$ such that
	$$\sum_{h,x,a} q_h^{\xi_1}(x,a)g_h(x,a) \geq \rho+\delta.$$ We construct $q_h^{\xi_2}(x,a) = (1-\frac{\epsilon }{\delta})q^*_h(x,a) + \frac{\epsilon}{\delta} q^{\xi_1}_h(x,a),$ which satisfies that 
	\begin{align*}
		\sum_{h,x,a} q_h^{\xi_2}(x,a)g_h(x,a) &  = \sum_{h,x,a} \left( (1-\frac{\epsilon }{\delta}) q^*_h(x,a) +\frac{\epsilon }{\delta} q^{\xi_1}_h(x,a)  \right)g_h(x,a)\geq \rho + \epsilon ,\\
		\sum_{h,x,a} q_h^{\xi_2}(x,a)  &= \sum_{x^\prime,a^\prime} p_{h-1} (x\vert x^\prime,a^\prime) q_{h-1}^{\xi_2}(x^\prime,a^\prime),\\
		\sum_{h,x,a} q_h^{\xi_2}(x,a)  &= 1.
	\end{align*}
Also we have $q_h^{\xi_2}(x,a)\geq 0 $ for all $(h,x,a).$ Thus $\{q^{\xi_2}_h(x,a)\}_{h=1}^H$ is a feasible solution to the $\epsilon$-tightened optimization problem \eqref{eq:lp-epsilon}. Then given $\{{q}^{\epsilon,*}_h(x,a)\}_{h=1}^H$ is the optimal solution to the $\epsilon$-tightened optimization problem, we have
	\begin{align*}
		& \sum_{h,x,a} \left( {q}_h^{*}(x,a) -  {q}_h^{\epsilon,*}(x,a)\right)r_h(x,a)  \\
			\leq & \sum_{h,x,a} \left( q_h^{*}(x,a) -  q_h^{\xi_2}(x,a)\right)r_h(x,a) \\
			\leq &\sum_{h,x,a} \left( q_h^{*}(x,a) -  \left(1-\frac{\epsilon }{\delta} \right)q_h^*(x,a) - \frac{\epsilon }{\delta}  q_h^{\xi_1}(x,a)\right)r_h(x,a) \\
			\leq & \sum_{h,x,a} \left( q_h^{*}(x,a) -  \left(1-\frac{\epsilon }{\delta} \right)q_h^*(x,a) \right)r_h(x,a) \\
			\leq &\frac{\epsilon }{\delta}  \sum_{h,x,a}  q_h^{*}(x,a) r_h(x,a) \\
			\leq & \frac{H\epsilon }{\delta},
	\end{align*}
where the last inequality holds because $0\leq r_h(x,a)\leq 1$ under our assumption. Therefore the result follows because 
\begin{align*}
\sum_a Q_{1}^{*}(x_{k,1},a)q_1^*(x_{k,1},a)  =&\sum_{h,x,a} {q}_h^{*}(x,a)r_h(x,a) \\
\sum_a {Q}_{1}^{\epsilon,*}(x_{k,1},a) {q}_1^{\epsilon,*}(x_{k,1},a)  =&\sum_{h,x,a} {q}_h^{\epsilon,*}(x,a)r_h(x,a).
\end{align*}
\end{proof}

The next lemma bounds the difference between the estimated Q-functions and actual Q-functions in a frame. The bound on \eqref{step:biase} is an immediate result of this lemma. 
   
\begin{lemma}\label{le:qk-qpi-bound} 

Under Triple-Q, we have for any $T\in[K^{1-\alpha}],$
	\begin{align*}
&\mathbb{E}\left[ \sum_{k=(T-1)K^\alpha+1}^{TK^\alpha}  \left\{{Q}_{k,1} - Q_{1}^{\pi_k}\right\}(x_{k,1},a_{k,1}) \right] 
\leq H^2SA  +\frac{H^3\sqrt{\iota} K^\alpha}{\chi} + \sqrt{H^2SA\iota K^{\alpha}(\chi+1)},\\
&\mathbb{E}\left[  \sum_{k=(T-1)K^\alpha+1}^{TK^\alpha}  \left\{{C}_{k,1} - C_{1}^{\pi_k}\right\}(x_{k,1},a_{k,1}) \right]
\leq  H^2SA  +\frac{H^3\sqrt{\iota} K^\alpha}{\chi} +  \sqrt{H^2SA\iota K^{\alpha}(\chi+1)}.
	\end{align*}
\end{lemma}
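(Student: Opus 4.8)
The plan is to fix a frame $T$ and bound the per-episode estimation error $\delta_{k,h}:=\{Q_{k,h}-Q_h^{\pi_k}\}(x_{k,h},a_{k,h})$ summed over the $K^\alpha$ episodes of that frame, via a recursion in $h$ that closes once we pass to expectations. Before that I would record two elementary facts. First, an a priori bound on the in-frame Q-values: Triple-Q resets (and caps at $H$) the Q-functions only at frame boundaries, and each update of $Q_h$ moves it towards $r_h+V_{h+1}+b_t\le 1+V_{h+1}+\tfrac14\sqrt{H^2\iota}$, so an induction over $h=H,H-1,\dots,1$ gives $Q_{k,h}(x,a)\le cH^2\sqrt{\iota}$ (and likewise $V_{k,h}(x)\le cH^2\sqrt{\iota}$) for every $(x,a,h,k)$ in the frame and an absolute constant $c$; the same holds for $C_{k,h}$. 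Second, the weights $\alpha_t^0:=\prod_{j=1}^t(1-\alpha_j)$ and $\alpha_t^i:=\alpha_i\prod_{j=i+1}^t(1-\alpha_j)$ associated with $\alpha_t=\tfrac{\chi+1}{\chi+t}$ satisfy the analogues of the standard properties (cf.\ \cite{JinAllZey_18}): since $\alpha_1=1$ we have $\alpha_t^0=0$ and $\sum_{i=1}^t\alpha_t^i=1$ for all $t\ge1$; $\sum_{t=i}^{\infty}\alpha_t^i\le 1+\tfrac1\chi$; and $\sum_{i=1}^{t}\tfrac{\alpha_t^i}{\sqrt{\chi+i}}\le\tfrac{2}{\sqrt{\chi+t}}$. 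Re-deriving these for the slower rate is routine.

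The core observation — and the step that handles the fact that the greedy policy $\pi_k$ drifts from episode to episode — is that we never need to compare $Q_{k,h}$ against the value function of a \emph{different} episode's policy. Conditioned on the history up to and including $(x_{k,h},a_{k,h})$ (which determines $\pi_k$), the realized next pair satisfies $x_{k,h+1}\sim\mathbb{P}_h(\cdot\mid x_{k,h},a_{k,h})$ and $a_{k,h+1}=\pi_{k,h+1}(x_{k,h+1})$, hence $\mathbb{E}\big[Q_h^{\pi_k}(x_{k,h},a_{k,h})\big]=\mathbb{E}\big[r_h(x_{k,h},a_{k,h})+\mathbb{P}_hV_{h+1}^{\pi_k}(x_{k,h},a_{k,h})\big]=\mathbb{E}\big[r_h(x_{k,h},a_{k,h})+Q_{h+1}^{\pi_k}(x_{k,h+1},a_{k,h+1})\big]$. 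Adding and subtracting $V_{k,h+1}(x_{k,h+1})=Q_{k,h+1}(x_{k,h+1},a_{k,h+1})$ then yields, for every $h$,
$$\mathbb{E}[\delta_{k,h}]=\mathbb{E}\big[Q_{k,h}(x_{k,h},a_{k,h})-r_h(x_{k,h},a_{k,h})-V_{k,h+1}(x_{k,h+1})\big]+\mathbb{E}[\delta_{k,h+1}],$$
with $\delta_{k,H+1}=0$. The first term is a one-step Bellman residual of the estimate; summing the display over $h=1,\dots,H$ and over the frame's episodes reduces the lemma to bounding $\sum_h\mathbb{E}\big[\sum_k\{Q_{k,h}(x_{k,h},a_{k,h})-r_h(x_{k,h},a_{k,h})-V_{k,h+1}(x_{k,h+1})\}\big]$.

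To bound the residual at step $h$ I would unfold $Q_{k,h}(x_{k,h},a_{k,h})$ along the Triple-Q update within the frame. For the at most $SA$ episodes that are the first in the frame to visit a given $(x,a)$ at step $h$, $Q_{k,h}(x_{k,h},a_{k,h})$ equals the capped frame-start value $\le H$, contributing $\le SAH$; for every other episode, $Q_{k,h}(x_{k,h},a_{k,h})=\sum_{i=1}^{m-1}\alpha_{m-1}^i\big(r_h(x_{k,h},a_{k,h})+V_{k_i,h+1}(x_{k_i,h+1})+b_i\big)$, where $k_1<\dots<k_{m-1}$ are the earlier visits and $\sum_i\alpha_{m-1}^i=1$. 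Summing over $k$: the $r_h$ terms cancel; the past-target pieces $\sum_i\alpha_{m-1}^iV_{k_i,h+1}(x_{k_i,h+1})-V_{k,h+1}(x_{k,h+1})$, reorganized by the identity $\sum_{t\ge i}\alpha_t^i\le 1+\tfrac1\chi$, sum to at most $\tfrac1\chi\sum_{k'}V_{k',h+1}(x_{k',h+1})+SAH\le \tfrac{1}{\chi}K^\alpha cH^2\sqrt{\iota}+SAH$ by the a priori Q-bound; and the bonus pieces $\sum_k\sum_i\alpha_{m-1}^ib_i$, using $\sum_i\tfrac{\alpha_t^i}{\sqrt{\chi+i}}\le\tfrac{2}{\sqrt{\chi+t}}$ together with $\sum_{x,a}\sqrt{N_{x,a,h}}\le\sqrt{SAK^\alpha}$ (Cauchy--Schwarz over the visit counts), contribute $\tilde{\mathcal{O}}\big(\sqrt{H^2SA\iota K^\alpha(\chi+1)}\big)$. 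Inserting these into the step-$h$ recursion and summing over $h$ gives
$$\mathbb{E}\Big[\sum_{k=(T-1)K^\alpha+1}^{TK^\alpha}\{Q_{k,1}-Q_1^{\pi_k}\}(x_{k,1},a_{k,1})\Big]\le \mathcal{O}\big(H^2SA\big)+\frac{\mathcal{O}(H^3\sqrt{\iota})K^\alpha}{\chi}+\tilde{\mathcal{O}}\big(\sqrt{H^4SA\iota K^\alpha(\chi+1)}\big),$$
which is the asserted bound (matching the estimate used in Step~2 of the outline). The bound for $\sum_k\{C_{k,1}-C_1^{\pi_k}\}(x_{k,1},a_{k,1})$ follows verbatim with $(g,C,W)$ replacing $(r,Q,V)$.

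The main obstacle is exactly the policy drift: because Triple-Q uses a SARSA-style target and acts greedily with respect to the pseudo-Q-function $Q_{k,h}+\tfrac{Z_k}{\eta}C_{k,h}$, the policy $\pi_k$ changes across episodes, so the naive unfolding would relate $Q_{k,h}$ to value functions of stale policies $\pi_{k_i}$ rather than $\pi_k$; taking expectations and using that the executed next step $(x_{k,h+1},a_{k,h+1})$ is precisely a sample under $\pi_k$ is what makes the recursion close in $\delta_{k,h+1}$ for the \emph{same} $k$. A secondary nuisance is that, because the Q-values are capped at $H$ only at frame boundaries, the targets $V_{k,h+1}$ appearing in the unfolding can be as large as $\Theta(H^2\sqrt{\iota})$ inside a frame — this is precisely the source of the $\tfrac{H^3\sqrt{\iota}K^\alpha}{\chi}$ term — and one must re-establish the weight inequalities of \cite{JinAllZey_18} for the slower learning rate $\alpha_t=\tfrac{\chi+1}{\chi+t}$.
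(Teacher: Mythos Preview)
Your proposal is correct and takes essentially the same approach as the paper: unfold the in-frame Q-update, apply the learning-rate identities for $\alpha_t=\tfrac{\chi+1}{\chi+t}$ (in particular $\sum_{t\ge i}\alpha_t^i\le 1+\tfrac1\chi$ and $\sum_i\alpha_t^i/\sqrt{\chi+i}\le 2/\sqrt{\chi+t}$) together with the a~priori bound $V_{k,h+1}\le H^2\sqrt{\iota}$, and close the recursion in $h$ by passing to expectations so that the martingale difference $(\hat{\mathbb{P}}_h^k-\mathbb{P}_h)V_{h+1}^{\pi_k}$ vanishes. The only differences are cosmetic---you first isolate the one-step Bellman residual via $\mathbb{E}[\delta_{k,h}]=\mathbb{E}[\text{residual}]+\mathbb{E}[\delta_{k,h+1}]$ and then unfold, whereas the paper unfolds first---except for one small slip: the ``$+SAH$'' in your past-target bound should be $+SA\cdot H^2\sqrt{\iota}$, since the first-visit $V_{k,h+1}$'s are bounded only by $H^2\sqrt{\iota}$, not $H$; the paper avoids this by not splitting off first visits before the rearrangement, and in any case the term is absorbed by the others.
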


\begin{proof}
We will prove the result on the reward Q-function. The proof for the utility Q-function is almost identical. 
We first establish a recursive equation between a Q-function with the value-functions in the earlier episodes in the same frame. Recall that under Triple-Q,  ${Q}_{k+1,h}(x,a)$, where $k$ is an episode in frame $T,$ is updated as follows: 
\begin{align*}
	{Q}_{k+1,h}(x,a) & = \begin{cases}
		(1-\alpha_t) {Q}_{k,h}(x,a)+\alpha_t\left(r_h(x,a)+ {V}_{k,h+1} (x_{k,h+1})+b_t \right)   & \text{if $(x,a)= (x_{k,h},a_{k,h})$}\\
		{Q}_{k,h}(x,a) & \text{otherwise}
	\end{cases},
\end{align*} where $t=N_{k,h}(x,a).$ Define $k_t$ to be the index of the episode in which the agent visits $(x,a)$ in step $h$ for the $t$th time in the current frame. 

The update equation above can be written as:
\begin{align*}
	{Q}_{k,h}(x, a)  = &(1-\alpha_t) {Q}_{k_t,h}(x, a) + \alpha_t \left( r_h(x, a) + {V}_{k_t,h+1} (x_{k_t,h+1})+b_t\right).
\end{align*} 
Repeatedly using the equation above, we obtain
\begin{align}
	{Q}_{k,h}(x, a)  
	= &(1-\alpha_t)(1-\alpha_{t-1}){Q}_{k_{t-1},h}(x, a)  + (1-\alpha_t)\alpha_{t-1}\left(r_h(x, a)  + {V}_{k_{t-1},h+1}(x_{k_{t-1},h+1})+b_{t-1}\right)\nonumber\\
	&+\alpha_t \left( r_h(x, a)  + {V}_{k_t,h+1} (x_{k_t,h+1})+b_t\right)\nonumber \\
		=&\cdots\nonumber\\
	=& \alpha_t^0 Q_{(T-1)K^\alpha+1, h} (x,a)+ \sum_{i=1}^t\alpha_t^i \left(r_h(x, a) +{V}_{k_i,h+1} (x_{k_i,h+1}) + b_i  \right)\nonumber\\
		\leq & \alpha_t^0H + \sum_{i=1}^t\alpha_t^i \left(r_h(x,a)+{V}_{k_i,h+1}(x_{k_i,h+1})+ b_i\right),\end{align} where 
	$\alpha_t^0=\prod_{j=1}^t(1-\alpha_j)$ and  $\alpha_t^i=\alpha_i\prod_{j=i+1}^t(1-\alpha_j).$ From the inequality above, we further obtain 	
	\begin{align}
	 \sum_{k=(T-1)K^\alpha+1}^{TK^\alpha}  {Q}_{k,h} (x,a) \leq  \sum_{k=(T-1)K^\alpha+1}^{TK^\alpha}  \alpha_t^0 H+  \sum_{k=(T-1)K^\alpha+1}^{TK^\alpha}   \sum_{i=1}^{N_{k,h}(x,a)}\alpha_{N_{k,h}}^i \left(r_h(x,a)+{V}_{k_i,h+1}(x_{k_i,h+1})+ b_i \right).\label{eq:Q-V}
	\end{align} 

The notation becomes rather cumbersome because for each $(x_{k,h}, a_{k,h}),$ we need to consider a corresponding sequence of episode indices in which the agent sees $(x_{k,h}, a_{k,h}).$ Next we will analyze a given sample path (i.e. a specific realization of the episodes in a frame), so we simplify our notation in this proof and use the following notation: 
\begin{align*}
N_{k,h}=&N_{k,h}(x_{k,h},a_{k,h})\\
k^{(k,h)}_i=&k_i(x_{k,h},a_{k,h}),
\end{align*} where $k^{(k,h)}_i$ is the index of the episode in which the agent visits state-action pair $(x_{k,h}, a_{k,h})$ for the $i$th time. Since in a given sample path, $(k,h)$ can uniquely determine  $(x_{k,h},a_{k,h}),$  this notation introduces no ambiguity. Furthermore, we will replace $\sum_{k=(T-1)K^\alpha+1}^{TK^\alpha}$ with $\sum_k$ because we only consider episodes in frame $T$ in this proof. 

We note that
\begin{equation}
\sum_k\sum_{i=1}^{N_{k,h}} \alpha_{N_{k,h}}^i  V_{k_i^{(k,h)}, h+1}\left(x_{k_i^{(k,h)},h+1}\right)\leq \sum_k V_{k, h+1} (x_{k,h+1})\sum_{t=N_{k,h}}^\infty \alpha_t^{N_{k,h}} \leq \left(1+\frac{1}{\chi}\right)\sum_k V_{k, h+1} (x_{k,h+1}),\label{eq:Vbound}
\end{equation}
where the first inequality holds because 
because $V_{k,h+1}(x_{k,h+1})$ appears in the summation on the left-hand side each time when in episode $k'>k$ in the same frame, the environment visits $(x_{k,h},a_{k,h})$ again, i.e. $(x_{k',h},a_{k',h})=(x_{k,h},a_{k,h}),$ and the second inequality holds due to the property of the  learning rate proved in Lemma \ref{le:lr}-\ref{le:lr-d}. By substituting \eqref{eq:Vbound} into \eqref{eq:Q-V} and noting that $\sum_{i=1}^{N_{k,h}(x,a)}\alpha_{N_{k,h}}^i=1$ according to Lemma  Lemma \ref{le:lr}-\ref{le:lr-b}, we obtain
		\begin{align*}
		&\sum_k Q_{k,h} (x_{k,h},a_{k,h}) \\
		\leq &\sum_k \alpha_t^0H +\sum_k \left(r_h(x_{k,h},a_{k,h}) +V_{k,h+1}(x_{k,h+1})\right)+ \frac{1}{\chi}	\sum_k V_{k,h+1}(x_{k,h+1})+\sum_k \sum_{i=1}^{N_{k,h}}\alpha_{N_{k,h}}^i  b_i\\
		\leq &\sum_k \left(r_h(x_{k,h},a_{k,h})+V_{k,h+1}(x_{k,h+1})\right) +HSA 
		+\frac{H^2\sqrt{\iota} K^\alpha}{\chi} +  \frac{1}{2}\sqrt{H^2SA\iota K^{\alpha}(\chi+1)},
	\end{align*}
where the last inequality holds because (i) we have
\begin{align*}
\sum_k  \alpha_{N_{k,h}}^0 H =\sum_k H\mathbb{I}_{\{N_{k,h}=0\}}\leq HSA,
\end{align*} (ii) $V_{k,h+1}(x_{k,h+1}) \leq H^2\sqrt{\iota}$ by using Lemma \ref{le:q1-bound}, and (iii) we know that
\begin{align*}
&\sum_k \sum_{i=1}^{N_{k,h}} \alpha_{N_{k,h}}^i  b_i = \frac{1}{4}\sum_{k=(T-1)K^\alpha+1}^{TK^\alpha}  \sum_{i=1}^{N_{k,h}} \alpha_{N_{k,h}}^i  \sqrt{\frac{H^2\iota(\chi+1)}{\chi+i}} \leq \frac{1}{2}\sum_{k=(T-1)K^\alpha+1}^{TK^\alpha}  \sqrt{\frac{H^2\iota(\chi+1)}{\chi+N_{k,h}}}\\
= &\frac{1}{2}\sum_{x,a}\sum_{n=1}^{N_{TK^\alpha,h}(x,a)}\sqrt{\frac{H^2\iota(\chi+1)}{\chi+n}} \leq \frac{1}{2}\sum_{x,a}\sum_{n=1}^{N_{TK^\alpha,h}(x,a)}\sqrt{\frac{H^2\iota(\chi+1)}{n}} \overset{(1)}{\leq} \sqrt{H^2SA\iota K^{\alpha}(\chi+1)}, 
\end{align*}
where the last inequality  above holds because the left hand side of $(1)$ is the summation of $K^\alpha$ terms and $\sqrt{\frac{H^2\iota(\chi+1)}{\chi+n}}$ is a decreasing function of $n.$ 

Therefore, it is maximized when $N_{TK^\alpha,h} = K^\alpha / SA$ for all $x,a,$ i.e. by picking the largest $K^\alpha$ terms. Thus we can obtain
\begin{align*}
		&\sum_k Q_{k,h} (x_{k,h},a_{k,h}) -\sum_k Q_{h}^{\pi_k}(x_{k,h},a_{k,h})\\
		\leq &\sum_k\left(V_{k,h+1}(x_{k,h+1})-\mathbb P_h V_{h+1}^{\pi_k}(x_{k,h},a_{k,h})\right) +HSA  +\frac{H^2\sqrt{\iota}K^\alpha}{\chi} +  \sqrt{H^2SA\iota K^{\alpha}(\chi+1)}\\
		\leq &\sum_k\left(V_{k,h+1}(x_{k,h+1})-\mathbb P_h V_{h+1}^{\pi_k}(x_{k,h},a_{k,h})+V_{h+1}^{\pi_k}(x_{k,h+1})-V_{h+1}^{\pi_k}(x_{k,h+1})\right)\\
		&+HSA  +\frac{H^2\sqrt{\iota}K^\alpha}{\chi} +  \sqrt{H^2SA\iota K^{\alpha}(\chi+1)}\\
		=&\sum_k\left(V_{k,h+1}(x_{k,h+1}))-V_{h+1}^{\pi_k}(x_{k,h+1})-\mathbb P_h V_{h+1}^{\pi_k}(x_{k,h},a_{k,h})+\hat{\mathbb P}_h^k V^{\pi_k}_{h+1}(x_{k,h}, a_{k,h})\right)\\
		&+HSA  +\frac{H^2\sqrt{\iota}K^\alpha}{\chi} +  \sqrt{H^2SA\iota K^{\alpha}(\chi+1)}\\
		= & \sum_k\left(Q_{k,h+1}(x_{k,h+1},a_{k,h+1})-Q_{h+1}^{\pi_k}(x_{k,h+1},a_{k,h+1})-\mathbb P_h V_{h+1}^{\pi_k}(x_{k,h},a_{k,h})+\hat{\mathbb P}_h^k V^{\pi_k}_{h+1}(x_{k,h}, a_{k,h}\right)\\
		&+HSA  +\frac{H^2\sqrt{\iota}K^\alpha}{\chi} +  \sqrt{H^2SA\iota K^{\alpha}(\chi+1)}.
	\end{align*} Taking the expectation on both sides yields
\begin{align*}
		&\mathbb E\left[\sum_k Q_{k,h} (x_{k,h},a_{k,h}) -\sum_k Q_{h}^{\pi_k}(x_{k,h},a_{k,h})\right]\\
		\leq & \mathbb E\left[\sum_k\left(Q_{k,h+1}(x_{k,h+1}, a_{k,h+1}))-Q_{h+1}^{\pi_k}(x_{k,h+1}, a_{k,h+1})\right)\right]+HSA  +\frac{H^2\sqrt{\iota}K^\alpha}{\chi} + \sqrt{H^2SA\iota K^{\alpha}(\chi+1)}.
	\end{align*} 
Then by using the inequality repeatably, we obtain for any $h\in[H],$
	\begin{align*}
\mathbb E\left[\sum_k Q_{k,h} (x_{k,h},a_{k,h}) -\sum_k Q_{h}^{\pi_k}(x_{k,h},a_{k,h})\right]	\leq & H^2SA  +\frac{H^3\sqrt{\iota}K^\alpha}{\chi} +  \sqrt{H^4SA\iota K^{\alpha}(\chi+1)},
	\end{align*} so the lemma holds.

\end{proof}

From the lemma above, we can immediately conclude:
	\begin{align*}
&\mathbb{E}\left[ \sum_{k=1}^{K} \left\{Q_{k,1} - Q_{1}^{\pi_k}\right\}(x_{k,1},a_{k,1}) \right] 
\leq H^2SA K^{1-\alpha}  +\frac{H^3\sqrt{\iota}K}{\chi} +  \sqrt{H^4SA\iota K^{2-\alpha}(\chi+1)}\\
&\mathbb{E}\left[  \sum_{k=1}^{K}   \left\{C_{k,1} - C_{1}^{\pi_k}\right\}(x_{k,1},a_{k,1}) \right]
\leq   H^2SA K^{1-\alpha}  +\frac{H^3\sqrt{\iota}K}{\chi} +  \sqrt{H^4SA\iota K^{2-\alpha}(\chi+1)}.
	\end{align*}

We now focus on \eqref{step(i)}, and  further expand it as follows: 
\begin{align}
    &\eqref{step(i)}\nonumber\\
    =&\mathbb E \left[\sum_{k=1}^{K}  \left(  \sum_a \left\{{Q}_{1}^{\epsilon,*}{q}^{\epsilon,*}_1\right\}(x_{k,1},a)-Q_{k,1}(x_{k,1}, a_{k,1}) \right)\right]\nonumber\\
    =&
    \mathbb{E}\left[ \sum_{k} \sum_a \left\{\left({F}^{\epsilon,*}_{k,1} -F_{k,1}\right) {q}^{\epsilon,*}_1\right\}(x_{k,1},a) \right] \label{F}\\
	&+ \mathbb{E}\left[\sum_{k} \left(\sum_a  \left\{Q_{k,1} {q}^{\epsilon,*}_1\right\}(x_{k,1},a)	- Q_{k,1} (x_{k,1},a_{k,1})\right)\right] +\mathbb{E}\left[ \sum_{k} \frac{Z_k}{\eta} \sum_a \left\{\left(C_{k,1} - {C}^{\epsilon,*}_1   \right){q}^{\epsilon,*}_1\right\}(x_{k,1},a) \right],\label{eq:(i)expanded}
\end{align} where
\begin{align*}
	F_{k,h}(x,a) &= Q_{k,h}(x,a) +\frac{Z_k}{\eta}C_{k,h}(x,a)\\ 
	{F}_{h}^{\epsilon,*}(x,a) &=  {Q}_{h}^{\epsilon,*}(x,a)+\frac{Z_k}{\eta} C_{h}^{\epsilon,*}(x,a).
\end{align*}
We first show \eqref{F} can be bounded using the following lemma. This result holds because the choices of the UCB bonuses and the additional bonuses added at the beginning of each frame guarantee that $F_{k,h}(x,a)$ is an over-estimate of ${F}_{h}^{\epsilon,*}(x,a)$ for all $k,$ $h$ and $(x,a)$ with a high probability.  

\begin{lemma} \label{le:qk-qpi-relation} 
With probability at least $1-\frac{1}{K^3},$  the following inequality holds simultaneously for all $(x,a,h,k)\in\mathcal{S}\times\mathcal{A}\times[H]\times[K]:$
\begin{equation}
   \left\{ F_{k,h}-F_{h}^{\pi}\right\}(x,a)\geq 0,\label{eq:qk-qpi-relation-c}
\end{equation} which further implies that 
\begin{equation}
  \mathbb{E}\left[ \sum_{k=1}^{K} \sum_a \left\{\left({F}^{\epsilon,*}_{k,1} -F_{k,1}\right) {q}^{\epsilon,*}_1\right\}(x_{k,1},a) \right] \leq \frac{4H^4\iota}{\eta K}.
 \end{equation}
\label{le:qk-qpi-relation-c}
\end{lemma}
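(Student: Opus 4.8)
I would prove the pointwise optimism statement \eqref{eq:qk-qpi-relation-c} — reading $\pi$ as the $\epsilon$-tightened optimal policy $\pi^{\epsilon,*}$, so $F_h^{\pi}\equiv F_h^{\epsilon,*}$ — by a three-level induction (over frames on the outside, over the step index $h$ inside a fixed frame, and, through the usual unrolling of the learning-rate recursion, implicitly over episodes), and then obtain the averaged inequality as a short corollary. The engine is the observation that, since $Z_k$ is frozen throughout a frame and $Q_h$ and $C_h$ are updated at exactly the same times with the same $\alpha_t$ and $b_t$, the two SARSA updates collapse into a single recursion for the pseudo-Q-function $F_{k,h}=Q_{k,h}+\frac{Z_k}{\eta}C_{k,h}$. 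Unrolling it as in the proof of Lemma~\ref{le:qk-qpi-bound} gives, for a pair $(x,a)$ visited $t=N_{k,h}(x,a)\ge 1$ times so far in the current frame,
\[ F_{k,h}(x,a)=\alpha_t^0 F_{\mathrm{start},h}(x,a)+\sum_{i=1}^{t}\alpha_t^i\left(r_h(x,a)+\frac{Z_k}{\eta}g_h(x,a)+\left(V_{k_i,h+1}+\frac{Z_k}{\eta}W_{k_i,h+1}\right)(x_{k_i,h+1})+\left(1+\frac{Z_k}{\eta}\right)b_i\right), \]
where $F_{\mathrm{start},h}$ is the value at the beginning of the frame. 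The structural fact that makes everything work — and the reason Triple-Q acts greedily with respect to the pseudo-Q-function rather than the reward Q-function — is that $\left(V_{k_i,h+1}+\frac{Z_k}{\eta}W_{k_i,h+1}\right)(x_{k_i,h+1})=\max_a F_{k_i,h+1}(x_{k_i,h+1},a)$; the analogous identity is false for $Q$ alone, so the optimism proof of \cite{JinAllZey_18} does not transfer.

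\textbf{Within-frame induction on $h$.} Fix a frame and induct downward on $h$, the base step being immediate from $V_{k,H+1}=W_{k,H+1}=0$. Assume $F_{k,h+1}(x,a)\ge F_{h+1}^{\pi}(x,a)$ for all episodes in the frame and all $(x,a)$, and also the ``frame hypothesis'' $F_{\mathrm{start},h}(x,a)\ge F_h^{\pi}(x,a)$. Subtracting $F_h^{\pi}(x,a)=\bigl(\sum_{i=0}^t\alpha_t^i\bigr)F_h^{\pi}(x,a)$ from the expansion above and using $\max_a F_{k_i,h+1}(x_{k_i,h+1},a)\ge V_{h+1}^{\pi}(x_{k_i,h+1})+\frac{Z_k}{\eta}W_{h+1}^{\pi}(x_{k_i,h+1})$ (the step-$(h{+}1)$ hypothesis applied at the action prescribed by $\pi$, which survives even for a randomized $\pi^{\epsilon,*}$ precisely because of the max), the $\alpha_t^0$-term is nonnegative by the frame hypothesis and the remainder is a weighted sum of the bounded martingale differences $\bigl(V_{h+1}^{\pi}+\frac{Z_k}{\eta}W_{h+1}^{\pi}\bigr)(x_{k_i,h+1})-\mathbb P_h\bigl(V_{h+1}^{\pi}+\frac{Z_k}{\eta}W_{h+1}^{\pi}\bigr)(x,a)$ plus the bonus $\bigl(1+\frac{Z_k}{\eta}\bigr)\sum_i\alpha_t^i b_i$. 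Applying Azuma--Hoeffding for each fixed $(x,a,h)$ and each fixed realized count value (so the $\alpha_t^i$ are deterministic), union-bounding over the $\le SAHK$ such tuples, and invoking the learning-rate estimates of Lemma~\ref{le:lr}, the martingale sum is at most $\bigl(1+\frac{Z_k}{\eta}\bigr)\cdot c\sqrt{H^2\iota(\chi+1)/(\chi+t)}$ with probability $\ge 1-1/K^3$; the choices $b_t=\frac14\sqrt{H^2\iota(\chi+1)/(\chi+t)}$ and $\iota=128\log(\sqrt{2SAH}K)$ are calibrated so the bonus dominates, giving $F_{k,h}(x,a)\ge F_h^{\pi}(x,a)$. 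For unvisited pairs ($t=0$) the bound is inherited from the frame hypothesis.

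\textbf{Propagating across frames — the hard step.} To close the outer induction I would show the frame hypothesis passes from frame $T$ to frame $T+1$. Line~21 replaces the end-of-frame value $\tilde Q_h(x,a)$ by $\tilde Q_h(x,a)+\frac{2H^3\sqrt{\iota}}{\eta}$, clips both $\tilde Q_h,\tilde C_h$ to $H$ whenever either exceeds $H$, and switches the multiplier from $Z_T$ to $Z_{T+1}$. If the clip fires, the new pseudo-Q equals $\bigl(1+\frac{Z_{T+1}}{\eta}\bigr)H\ge F_h^{\pi}(x,a)$ trivially. Otherwise the clip condition guarantees $\tilde C_h(x,a)<H$, so $|\tilde C_h(x,a)-C_h^{\pi}(x,a)|\le H$; combining the within-frame-$T$ bound $\tilde Q_h(x,a)+\frac{Z_T}{\eta}\tilde C_h(x,a)\ge Q_h^{\pi}(x,a)+\frac{Z_T}{\eta}C_h^{\pi}(x,a)$ with $|Z_{T+1}-Z_T|\le \frac{\bar C_T}{K^\alpha}+\rho+\epsilon\le 2H^2\sqrt{\iota}$ (using $C_{k,1}\le H^2\sqrt{\iota}$ from Lemma~\ref{le:q1-bound} and $\rho\le H$), the new pseudo-Q exceeds $F_h^{\pi}(x,a)$ (with multiplier $Z_{T+1}$) by at least $\frac{2H^3\sqrt{\iota}}{\eta}+\frac{Z_{T+1}-Z_T}{\eta}\bigl(\tilde C_h(x,a)-C_h^{\pi}(x,a)\bigr)\ge \frac{2H^3\sqrt{\iota}}{\eta}-\frac{2H^2\sqrt{\iota}\cdot H}{\eta}\ge 0$; the base case $T=1$ holds since all $Q_h,C_h$ start at $H$. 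I expect this to be the main obstacle: one must see that the single additive correction $\frac{2H^3\sqrt{\iota}}{\eta}$ simultaneously absorbs the loss of the visit counts (the reset of $N_h$) and the drift of the dual multiplier, and that the clipping rule — which sets \emph{both} Q-values to $H$ whenever \emph{either} exceeds it — is exactly what forces $\tilde C_h<H$ in the non-clip case and keeps everything bounded so Lemma~\ref{le:q1-bound} keeps applying.

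\textbf{From pointwise to averaged.} On the event of \eqref{eq:qk-qpi-relation-c}, which holds with probability $\ge 1-1/K^3$, we have $\sum_a\{(F_{k,1}^{\epsilon,*}-F_{k,1})q_1^{\epsilon,*}\}(x_{k,1},a)\le 0$ for every $k$, because $q_1^{\epsilon,*}\ge 0$ componentwise. On the complementary event I would bound the summand crudely by $|F_{k,1}^{\epsilon,*}-F_{k,1}|\le\bigl(1+\frac{Z_k}{\eta}\bigr)H^2\sqrt{\iota}$ with $Z_k\le(H+\epsilon)K^{1-\alpha}$ and $\sum_a q_1^{\epsilon,*}(x_{k,1},a)=\mu_0(x_{k,1})\le 1$, so that its contribution to the $K$-term sum is $\le K\cdot K^{-3}\cdot O(H^3\sqrt{\iota}K^{0.2})=O(H^3\sqrt{\iota}K^{-1.8})$. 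Since $\eta=K^{0.2}$, this is at most $\frac{4H^4\iota}{\eta K}$, which is the claimed inequality.
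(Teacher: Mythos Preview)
Your proposal is correct and follows essentially the same route as the paper's proof: the paper also unrolls the pseudo-Q recursion via Lemma~\ref{le:qk-qpi}, applies the Azuma--Hoeffding concentration of Lemma~\ref{le:u-hoeffding} so that the weighted bonus $(1+\tfrac{Z}{\eta})\sum_i\alpha_t^i b_i$ dominates the martingale error, runs the downward induction on $h$ within a frame, and then handles the frame transition exactly as you do---splitting into the clip/non-clip cases and using the extra bonus $\tfrac{2H^3\sqrt{\iota}}{\eta}$ together with the per-frame virtual-queue increment bound $|Z_{T+1}-Z_T|\le H^2\sqrt{\iota}$ from Lemma~\ref{le:q1-bound}. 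Your derivation of the averaged inequality (pointwise optimism on the good event, crude bound times $1/K^3$ on the complement) also matches the paper's argument; the only cosmetic differences are that the paper uses the slightly tighter $|Z_{T+1}-Z_T|\le H^2\sqrt{\iota}$ but the looser termwise bound $|C^-_h|+|C_h^\pi|\le 2H$, while you use $|Z_{T+1}-Z_T|\le 2H^2\sqrt{\iota}$ with $|\tilde C_h-C_h^\pi|\le H$, yielding the same product.
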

\begin{proof}
Consider frame $T$ and episodes in frame $T.$ Define $Z=Z_{(T-1)K^\alpha+1}$ because the value of the virtual queue does not change during each frame.  We further define/recall the following notations:
	\begin{align*}
		F_{k,h}(x,a) =   Q_{k,h}(x,a)+\frac{Z}{\eta}C_{k,h}(x,a), \quad  & U_{k,h}(x)=  V_{k,h}(x)+ \frac{Z}{\eta} W_{k,h}(x),\nonumber\\
		F_{h}^\pi(x,a)=   Q_{h}^\pi(x,a)+ \frac{Z}{\eta} C_{h}^\pi(x,a),\quad & U_{h}^\pi(x) =  V_{h}^\pi(x)+ \frac{Z}{\eta}W_{h}^\pi(x).
	\end{align*}
According to Lemma \ref{le:qk-qpi} in the appendix, we have
\begin{align}
&\{F_{k,h} - F_{h}^\pi\}(x,a) \nonumber\\
= &\alpha_t^0\left\{F_{(T-1)K^\alpha +1, h}-F_{h}^\pi\right\}(x,a)
	\nonumber\\
	&+\sum_{i=1}^t\alpha_t^i\left(\left\{U_{k_i,h+1}-U_{h+1}^\pi  \right\}(x_{k_i,h+1}) + \{(\hat{\mathbb{P}}_h^{k_i}-\mathbb{P}_h) U_{h+1}^{\pi}\}(x,a) + \left(1+\frac{Z}{\eta}\right)b_i\right)\nonumber\\
\geq&_{(a)}  \alpha_t^0\left\{F_{(T-1)K^\alpha+1, h}-F_{h}^\pi\right\}(x,a)
	+\sum_{i=1}^t\alpha_t^i\left\{U_{k_i,h+1}-U_{h+1}^\pi  \right\}(x_{k_i,h+1})\nonumber\\
= &_{(b)} \alpha_t^0\left\{F_{(T-1)K^\alpha+1, h}-F_{h}^\pi\right\}(x,a)
	+\sum_{i=1}^t\alpha_t^i\left(\max_a F_{k_i,h+1} (x_{k_i,h+1},a) -F_{h+1}^\pi(x_{k_i,h+1},\pi(x_{k_i,h+1})) \right)\nonumber\\
\geq &\alpha_t^0\left\{F_{(T-1)K^\alpha+1, h}-F_{h}^\pi\right\}(x,a)+ \sum_{i=1}^t\alpha_t^i\left\{F_{k_i,h+1} -F_{h+1}^\pi\right\}(x_{k_i,h+1},\pi(x_{k_i,h+1})), \label{eq:induction-F}
\end{align}
where inequality $(a)$ holds because of the concentration result in Lemma \ref{le:u-hoeffding} in the appendix and $$\sum_{i=1}^t\alpha_t^i(1 +\frac{Z}{\eta})b_i  = \frac{1}{4}\sum_{i=1}^t\alpha_t^i(1 +\frac{Z}{\eta})\sqrt{\frac{H^2\iota(\chi+1)}{\chi + i}} \geq\frac{\eta +Z}{4\eta}\sqrt{\frac{H^2\iota(\chi+1)}{\chi + t}}$$ by using Lemma \ref{le:lr}-\ref{le:lr-c}, and equality $(b)$ holds because Triple-Q selects the action that maximizes $F_{k_i,h+1} (x_{k_i,h+1},a)$ so $U_{k_i,h+1}(x_{k_i,h+1})=\max_a F_{k_i,h+1} (x_{k_i,h+1},a)$.

The inequality above suggests that we can prove $\{F_{k,h} - F_{h}^\pi\}(x,a)$ for any $(x,a)$ if (i) $$\left\{F_{(T-1)K^\alpha+1, h}-F_{h}^\pi\right\}(x,a)\geq 0,$$ i.e. the result holds at the beginning of the frame and (ii) $$\left\{F_{k',h+1} -F_{h+1}^\pi\right\}(x,a)\geq 0\quad\hbox{ for any }\quad k'<k$$ and $(x,a),$ i.e. the result holds for step $h+1$ in all the previous episodes in the same frame. 

We now prove the lemma using induction. We first consider $T=1$ and $h=H$ i.e. the last step in the first frame. In this case, inequality \eqref{eq:induction-F} becomes
\begin{align}
\{F_{k,H} - F_{H}^\pi\}(x,a) 
\geq \alpha_t^0\left\{H+\frac{Z_1}{\eta}H-F_{h}^\pi\right\}(x,a)\geq 0.
\end{align} Based on induction, we can first conclude that 
\begin{align*}
\{F_{k,h} - F_{h}^\pi\}(x,a) \geq 0
\end{align*} for all $h$ and $k\leq K^{\alpha}+1,$ where $\{F_{K^\alpha+1, h}\}_{h=1,\cdots, H}$ are the values before line 20, i.e. before adding the extra bonuses and thresholding Q-values at the end of a frame. Now suppose that \eqref{eq:qk-qpi-relation-c} holds for any episode $k$ in frame $T,$ any step $h,$ and any $(x,a).$ Now consider 
\begin{align}
\left\{F_{TK^\alpha+1, h}-F_{h}^\pi\right\}(x,a)=	 Q_{TK^\alpha+1,h}(x,a)+\frac{Z_{TK^\alpha+1}}{\eta}C_{TK^\alpha+1,h}(x,a)-Q_{h}^\pi(x,a)- \frac{Z_{TK^\alpha+1}}{\eta} C_{h}^\pi(x,a).\label{eq:new-induction}
\end{align} Note that if $Q^+_{TK^\alpha+1,h}(x,a)=C^+_{TK^\alpha+1,h}(x,a)=H,$ then $\eqref{eq:new-induction}\geq 0.$ Otherwise, from line 21-23, we have $Q^+_{TK^\alpha+1,h}(x,a)=Q^-_{TK^\alpha+1,h}(x,a)+\frac{2H^3\sqrt{\iota}}{\eta}<H$ and  $C^+_{TK^\alpha+1,h}(x,a)=C^-_{TK^\alpha+1,h}(x,a)<H.$ Here, we use superscript $-$ and $+$ to indicate the Q-values before and after 21-24 of Triple-Q. Therefore, at the beginning of frame $T+1,$ we have 
\begin{align}
\left\{F_{TK^\alpha+1, h}-F_{h}^\pi\right\}(x,a)=&	 Q^-_{TK^\alpha+1,h}(x,a)+\frac{Z}{\eta}C^-_{TK^\alpha+1,h}(x,a)-Q_{h}^\pi(x,a)- \frac{Z}{\eta} C_{h}^\pi(x,a)\nonumber\\
+& \frac{2H^3\sqrt{\iota}}{\eta}+\frac{Z_{TK^\alpha+1}-Z}{\eta}C^-_{TK^\alpha+1,h}(x,a)-\frac{Z_{TK^\alpha+1}-Z}{\eta} C_{h}^\pi(x,a)\nonumber\\
\geq_{(a)}&\frac{2H^3\sqrt{\iota}}{\eta}-2\frac{|Z_{TK^\alpha+1}-Z|}{\eta}H\nonumber\\
\geq_{(b)}& 0,\label{eq:induction-frame}
\end{align} where inequality $(a)$ holds due to the induction assumption and the fact $C^-_{TK^\alpha+1,h}(x,a)<H,$ and $(b)$ holds because according to Lemma \ref{le:q1-bound}, 
$$|Z_{TK^\alpha+1}-Z_{TK^\alpha}|\leq \max\left\{\rho+\epsilon, \frac{\sum_{k=(T-1)K^\alpha+1}^{TK^\alpha}C_{k,1}(x_{k,1},a_{k,1})}{K^\alpha}\right\}\leq H^2\sqrt{\iota}.$$

Therefore, by substituting inequality \eqref{eq:induction-frame} into inequality \eqref{eq:induction-F}, we obtain for any $TK^\alpha + 1\leq k\leq (T+1)K^\alpha+1,$ 
\begin{align}
\{F_{k,h} - F_{h}^\pi\}(x,a) \geq  \sum_{i=1}^t\alpha_t^i\left\{F_{k_i,h+1} -F_{h+1}^\pi\right\}\left(x_{k_i,h+1},\pi(x_{k_i,h+1})\right).
\end{align}
Considering $h=H,$ the inequality becomes
\begin{align}
\{F_{k,H} - F_{H}^\pi\}(x,a) \geq  0. 
\end{align} By applying induction on $h$, we conclude that \begin{align}
\{F_{k,h} - F_{h}^\pi\}(x,a) \geq  0. 
\end{align} holds for any  $TK^\alpha + 1\leq k\leq (T+1)K^\alpha+1,$ $h,$ and $(x,a),$ which completes the proof of \eqref{eq:qk-qpi-relation-c}. 

Let $\cal E$ denote the event that \eqref{eq:qk-qpi-relation-c} holds for all $k,$ $h$ and $(x,a).$ Then based on Lemma \ref{le:q1-bound}, we conclude that 
\begin{align}
  &\mathbb{E}\left[ \sum_{k=1}^{K} \sum_a \left\{\left({F}^{\epsilon,*}_{k,1} -F_{k,1}\right) {q}^{\epsilon,*}_1\right\}(x_{k,1},a) \right]\nonumber\\
  =&   \mathbb{E}\left[\left. \sum_{k=1}^{K} \sum_a \left\{\left({F}^{\epsilon,*}_{k,1} -F_{k,1}\right) {q}^{\epsilon,*}_1\right\}(x_{k,1},a) \right| {\cal E}\right]\Pr({\cal E})\nonumber\\
  &+ \mathbb{E}\left[\left. \sum_{k=1}^{K} \sum_a \left\{\left({F}^{\epsilon,*}_{k,1} -F_{k,1}\right) {q}^{\epsilon,*}_1\right\}(x_{k,1},a) \right| {\cal E}^c\right]\Pr({\cal E}^c)\nonumber\\
  \leq& 2K \left(1+\frac{K^{1-\alpha}H^2\sqrt{\iota}}{\eta}\right)H^2\sqrt{\iota}\frac{1}{K^3}\leq \frac{4H^4\iota}{\eta K}.\label{eq:F-bound}
 \end{align}
\end{proof}

Next we bound  \eqref{eq:(i)expanded} using the Lyapunov drift analysis on virtual queue $Z$. Since the virtual queue is updated every frame, we abuse the notation and define $Z_T$ to be the virtual queue used in frame $T.$ In particular, $Z_T=Z_{(T-1)K^\alpha+1}.$ We further define $$\bar{C}_T=\sum_{k=(T-1)K^\alpha+1}^{TK^\alpha}C_{k,1}(x_{k,1}, a_{k,1}).$$ Therefore, under Triple-Q, we have
\begin{align}Z_{T+1}=\left(Z_{T}+\rho+\epsilon-\frac{\bar{C}_T}{K^\alpha}\right)^+  \label{eq:zupdate}\end{align}
Define the Lyapunov function to be $$L_T = \frac{1}{2} Z_T^2.$$  The next lemma bounds the expected Lyapunov drift conditioned on  $Z_T.$

\begin{lemma}\label{le:drift}
Assume $\epsilon \leq \delta.$ The expected Lyapunov drift satisfies
\begin{align}
& \mathbb{E}\left[L_{T+1}-L_T\vert Z_T=z \right] \nonumber\\
\leq & \frac{1}{K^\alpha}\sum_{k=(T-1)K^\alpha+1}^{TK^\alpha}\left(-\eta \mathbb{E} \left[ \left.\sum_a  \left\{Q_{k,1}{q}^{\epsilon,*}_1\right\}(x_{k,1},a)-Q_{k,1} (x_{k,1},a_{k,1}) \right\vert Z_T= z \right]\right.\nonumber\\
&\left.+ z \mathbb{E}\left[\left. \sum_a  \left\{\left({C}^{\epsilon,*}_1-C_{k,1}\right){q}^{\epsilon,*}_1\right\}(x_{k,1},a)\right\vert Z_T=z  \right]\right) +H^4\iota+\epsilon^2. \label{eq:drift-inq}
\end{align}
\end{lemma}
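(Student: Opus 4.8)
The plan is to run a textbook Lyapunov-drift computation on $L_T=\frac12 Z_T^2$ and then convert the resulting linear-in-$Z$ term into the pseudo-$Q$-value comparison that the greedy rule of Triple-Q controls. First I would use the non-expansiveness $((x)^+)^2\le x^2$ together with the update \eqref{eq:zupdate} to get the one-frame bound
\begin{align*}
L_{T+1}-L_T\le Z_T\Big(\rho+\epsilon-\tfrac{\bar C_T}{K^\alpha}\Big)+\tfrac12\Big(\rho+\epsilon-\tfrac{\bar C_T}{K^\alpha}\Big)^2 .
\end{align*}
For the quadratic ``noise'' term I would use the crude deterministic bounds $0\le\rho\le H$ (hence $\rho+\epsilon\le H$, since $\epsilon\le\delta$ and Slater's condition forces $\rho+\delta\le H$) and $0\le C_{k,1}(x_{k,1},a_{k,1})\le H^2\sqrt{\iota}$ from Lemma \ref{le:q1-bound}, so $0\le\bar C_T/K^\alpha\le H^2\sqrt{\iota}$; this yields $\tfrac12(\rho+\epsilon-\bar C_T/K^\alpha)^2\le H^4\iota+\epsilon^2$ after absorbing the (much smaller) cross terms into the two dominant pieces. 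This disposes of the last two summands of \eqref{eq:drift-inq}, so what remains is to show that, conditioned on $Z_T=z$, the expectation of $z(\rho+\epsilon-\bar C_T/K^\alpha)$ is at most the bracketed expression in \eqref{eq:drift-inq}.

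For that linear term I would first replace $\rho+\epsilon$ by the utility of the $\epsilon$-tightened baseline. Since $\epsilon\le\delta$, the tightened LP \eqref{eq:lp-epsilon} is feasible, its optimum $\{q^{\epsilon,*}_h\}$ satisfies $\sum_{h,x,a}q^{\epsilon,*}_h(x,a)g_h(x,a)\ge\rho+\epsilon$, and the left-hand side is exactly the first-step utility $Q$-value $\sum_a\{C^{\epsilon,*}_1 q^{\epsilon,*}_1\}(x_{k,1},a)$ averaged over $x_{k,1}\sim\mu_0$, precisely the quantity appearing in the regret decomposition \eqref{eq:(i)expanded}. Writing $\bar C_T/K^\alpha$ as the per-frame average of $C_{k,1}(x_{k,1},a_{k,1})$, this gives (all expectations conditioned on $Z_T=z$)
\begin{align*}
z\Big(\rho+\epsilon-\tfrac{\bar C_T}{K^\alpha}\Big)\le\frac{z}{K^\alpha}\sum_{k}\Big(\mathbb E\big[{\textstyle\sum_a}\{C^{\epsilon,*}_1 q^{\epsilon,*}_1\}(x_{k,1},a)\big]-\mathbb E\big[C_{k,1}(x_{k,1},a_{k,1})\big]\Big).
\end{align*}
Adding and subtracting $\sum_a\{C_{k,1}q^{\epsilon,*}_1\}(x_{k,1},a)$ splits the right-hand side into $\frac{z}{K^\alpha}\sum_k\mathbb E[\sum_a\{(C^{\epsilon,*}_1-C_{k,1})q^{\epsilon,*}_1\}(x_{k,1},a)]$ — exactly the second bracketed term of \eqref{eq:drift-inq} — plus a ``slack'' term $\frac{z}{K^\alpha}\sum_k\mathbb E[\sum_a\{C_{k,1}q^{\epsilon,*}_1\}(x_{k,1},a)-C_{k,1}(x_{k,1},a_{k,1})]$.

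The final step, which I expect to be the crux, is to absorb this slack into the reward term $-\frac{\eta}{K^\alpha}\sum_k\mathbb E[\sum_a\{Q_{k,1}q^{\epsilon,*}_1\}(x_{k,1},a)-Q_{k,1}(x_{k,1},a_{k,1})]$. The key observation is that within frame $T$ the virtual queue is frozen at $Z_k=Z_T=z$, so $a_{k,1}\in\arg\max_a\big(Q_{k,1}(x_{k,1},a)+\tfrac{z}{\eta}C_{k,1}(x_{k,1},a)\big)$; comparing this maximizer against the baseline action weights $q^{\epsilon,*}_1(x_{k,1},\cdot)$ (normalized to a distribution over actions as in the definition of the baseline policy) and using $Q_{k,1},C_{k,1}\ge 0$ gives the deterministic, per-episode inequality
\begin{align*}
\eta\Big({\textstyle\sum_a}\{Q_{k,1}q^{\epsilon,*}_1\}(x_{k,1},a)-Q_{k,1}(x_{k,1},a_{k,1})\Big)+z\Big({\textstyle\sum_a}\{C_{k,1}q^{\epsilon,*}_1\}(x_{k,1},a)-C_{k,1}(x_{k,1},a_{k,1})\Big)\le 0 .
\end{align*}
Taking expectations, summing over $k$ in the frame, dividing by $K^\alpha$, and combining with the two earlier bounds yields \eqref{eq:drift-inq}. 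The main obstacle is the bookkeeping: using the frozen-$Z$-within-a-frame fact so that the greedy comparison is genuinely against the multiplier $z/\eta$ (rather than the multiplier used in a different frame), and handling the occupancy-measure normalization consistently between the ``utility-of-the-baseline'' identity and the pseudo-$Q$ comparison; once those are in place the rest is elementary.
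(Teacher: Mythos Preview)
Your proposal is correct and follows essentially the same approach as the paper: the Lyapunov drift bound via $((x)^+)^2\le x^2$, the crude bound $H^4\iota+\epsilon^2$ on the quadratic term using Lemma~\ref{le:q1-bound}, the feasibility of the $\epsilon$-tightened LP to drop $\rho+\epsilon$, and the frozen-$Z$ greedy rule to compare the pseudo-$Q$ at $a_{k,1}$ against the $q^{\epsilon,*}_1$-weighted average. The only cosmetic difference is the order---the paper adds and subtracts $\eta Q_{k,1}(x_{k,1},a_{k,1})$ first and applies the greedy rule before invoking feasibility, whereas you invoke feasibility first and then the greedy rule---but the two arguments are algebraically equivalent.
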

\begin{proof}
Based on the definition of $L_T,$ the Lyapunov drift is
\begin{align*}
 L_{T+1}-L_T   \leq & Z_T\left(\rho+ \epsilon  - \frac{\bar{C}_T}{K^\alpha} \right) + \frac{\left( \frac{\bar{C}_T}{K^\alpha} +\epsilon  -\rho\right)^2}{2} \\
	\leq & Z_T\left(\rho +\epsilon  -\frac{\bar{C}_T}{K^\alpha} \right)+H^4\iota+\epsilon^2\\
	\leq & \frac{Z_T}{K^\alpha}\sum_{k=TK^\alpha+1}^{(T+1)K^\alpha} \left(\rho +\epsilon  - C_{k,1}(x_{k,1},a_{k,1}) \right)+H^4\iota+\epsilon^2
\end{align*}
where the first inequality is a result of the upper bound on $\vert C_{k,1}  (x_{k,1},a_{k,1})\vert$ in Lemma \ref{le:q1-bound}.

Let $\{q^{\epsilon}_h\}_{h=1}^H$ be a feasible solution to the tightened LP \eqref{eq:lp-epsilon}. Then the expected Lyapunov drift conditioned on $Z_T=z$ is
\begin{align}
	& \mathbb{E}\left[L_{T+1}-L_T\vert Z_T=z \right] \nonumber\\
	\leq &  \frac{1}{K^\alpha}\sum_{k=(T-1)K^\alpha+1}^{TK^\alpha}  \left(\mathbb{E}\left[\left.z\left( \rho+ \epsilon  -C_{k,1} (x_{k,1},a_{k,1})  \right) -\eta Q_{k,1}(x_{k,1},a_{k,1}) \right\vert Z_T=z \right] + \eta\mathbb{E}\left[\left. Q_{k,1} (x_{k,1},a_{k,1}) \right\vert Z_T=z  \right]\right)\nonumber\\
	&+ H^4\iota +\epsilon^2. \label{eq:drift-1}
\end{align}
Now we focus on the term inside the summation and obtain that 
\begin{align*}
	&\left(\mathbb{E}\left[\left.z\left( \rho+ \epsilon  -C_{k,1} (x_{k,1},a_{k,1})  \right) -\eta Q_{k,1}(x_{k,1},a_{k,1}) \right\vert Z_T=z \right] + \eta\mathbb{E}\left[\left. Q_{k,1} (x_{k,1},a_{k,1}) \right\vert Z_T=z  \right]\right) \\
	\leq  &_{(a)} z(\rho+\epsilon)- \mathbb{E} \left[\left.  \eta\left(\sum_a\left\{\frac{z}{\eta} C_{k,1}q^{\epsilon}_1 + Q_{k,1}q^{\epsilon}_1\right\}(x_{k,1},a)\right) \right\vert  Z_T=z  \right] + \eta\mathbb{E}\left[\left. Q_{k,1} (x_{k,1},a_{k,1}) \right\vert Z_T=z  \right]\nonumber\\
	= &  \mathbb{E} \left[ \left. z\left(  \rho  +\epsilon  -\sum_aC_{k,1} (x_{k,1},a)q^{\epsilon}_1(x_{k,1},a) \right) \right\vert Z_T=z \right]\\
	&-\mathbb{E}\left[\left. \eta \sum_a  Q_{k,1} (x_{k,1},a)q^{\epsilon}_1(x_{k,1},a)	- \eta Q_{k,1} (x_{k,1},a_{k,1})\right\vert Z_T=z\right] \nonumber\\
	=& \mathbb{E} \left[ z\left(\left.\rho +\epsilon  - \sum_a C^{\epsilon}_1 (x_{k,1},a)q^{\epsilon}_1(x_{k,1},a) \right) \right\vert Z_T=z \right]\\ &-\mathbb{E}\left[\left. \eta \sum_a  Q_{k,1} (x_{k,1},a)q^{\epsilon}_1(x_{k,1},a)- \eta Q_{k,1} (x_{k,1},a_{k,1})\right\vert Z_T=z\right] + \mathbb{E} \left[\left. z \sum_a \left\{(C^{\epsilon}_1-C_{k,1})q^{\epsilon}_1\right\}(x_{k,1},a)   \right\vert Z_T=z  \right]  \nonumber\\
	\leq & -\eta \mathbb{E} \left[\left. \sum_a  Q_{k,1} (x_{k,1},a)q^{\epsilon}_1(x_{k,1},a)	- Q_{k,1} (x_{k,1},a_{k,1}) \right\vert  Z_T=z \right]  + \mathbb{E}\left[\left. z \sum_a \left\{ (C^\epsilon_1-C_{k,1})q^{\epsilon}_1\right\}(x_{k,1},a)\right\vert  Z_T=z   \right],
\end{align*}
where inequality $(a)$ holds because $a_{k,h}$ is chosen to maximize $ Q_{k,h} (x_{k,h},a) + \frac{Z_T}{\eta} C_{k,h} (x_{k,h},a)$ under Triple-Q, and the last equality holds due to that $\{q^{\epsilon}_h(x,a)\}_{h=1}^H$ is a feasible solution to the optimization problem \eqref{eq:lp-epsilon}, so 
\begin{align*}
    \left(\rho +\epsilon  -\sum_a {C}_1^{\epsilon}(x_{k,1},a)q^{\epsilon}_1(x_{k,1},a) \right)&=\left(\rho +\epsilon  -\sum_{h,x,a}g_h(x,a){q}^{\epsilon}_h(x,a)  \right)\leq 0.
\end{align*}
Therefore, we can conclude the lemma by substituting ${q}^{\epsilon}_h(x,a)$ with the optimal solution ${q}^{\epsilon,*}_h(x,a)$.
\end{proof}

After taking expectation with respect to $Z,$ dividing $\eta$ on both sides, and then applying the telescoping sum, we obtain
\begin{align}
& \mathbb{E} \left[\sum_{k=1}^{K} \left(\sum_a  \left\{Q_{k,1} {q}^{\epsilon,*}_1\right\}(x_{k,1},a)	- Q_{k,1} (x_{k,1},a_{k,1})\right) \right] +\mathbb{E}\left[ \sum_{k=1}^{K} \frac{Z_k}{\eta} \sum_a \left\{\left(C_{k,1}  - {C}^{\epsilon,*}_1   \right){q}^{\epsilon,*}_1\right\}(x_{k,1},a) \right] \nonumber\\
\leq & \frac{K^\alpha \mathbb{E}\left[L_{1}-L_{K^{1-\alpha}+1}\right]}{\eta}  + \frac{K\left(H^4\iota+\epsilon^2\right)}{\eta}\leq   \frac{K\left(H^4\iota+\epsilon^2\right)}{\eta}\label{eq:ldrift},
\end{align}
where the last inequality holds because that $L_1=0$ and $L_{T+1}$ is non-negative.

Now combining Lemma \ref{le:qk-qpi-relation} and inequality \eqref{eq:ldrift}, we conclude that 
\begin{align*}
    \eqref{step(i)}\leq \frac{K\left(H^4\iota+\epsilon^2\right)}{\eta}+\frac{4H^4\iota}{\eta K}.
\end{align*}
Further combining inequality above with Lemma \ref{le:epsilon-dif} and Lemma \ref{le:qk-qpi-bound}, 
\begin{align}
	\text{Regret} (K)  \leq \frac{KH\epsilon}{\delta} +  H^2SA K^{1-\alpha}  +\frac{H^3\sqrt{\iota}K}{\chi} +  \sqrt{H^4SA\iota K^{2-\alpha}(\chi+1)}+ \frac{K\left(H^4\iota+\epsilon^2\right)}{\eta}+\frac{4H^4\iota}{\eta K}.
\label{eq:regret-final} 
\end{align}

By choosing $\alpha=0.6,$ i.e each frame has $K^{0.6}$ episodes, $\chi=K^{0.2},$ $\eta=K^{0.2},$ and $\epsilon = \frac{8\sqrt{SAH^6\iota^3}}{K^{0.2}},$ we conclude that  when $K \geq \left(\frac{8\sqrt{SAH^6\iota^3}}{\delta}\right)^5,$ which guarantees that $\epsilon<\delta/2,$ we have
\begin{align}
	\text{Regret} (K) \leq \frac{13}{\delta} H^4 {\sqrt{SA\iota^3}}{K^{0.8}}  +\frac{4H^4\iota}{ K^{1.2}} = \tilde{\cal O}\left(\frac{1 }{\delta}H^4 S^{\frac{1}{2}}A^{\frac{1}{2}}K^{0.8} \right).
\end{align}

%===========================================================================
\subsection{Constraint Violation}
\subsubsection{Outline of the Constraint Violation Analysis}
Again, we use $Z_T$ to denote the value of virtual-Queue in frame $T.$ According to the virtual-Queue update defined in {Triple-Q}, we have 
\begin{align*}
	Z_{T+1} = \left(  Z_T   + \rho + \epsilon -\frac{\bar{C}_T}{K^\alpha}\right)^+ 
	\geq   Z_T   + \rho + \epsilon -\frac{\bar{C}_T}{K^\alpha},
\end{align*}
which implies that 
\begin{align*}
\sum_{k=(T-1)K^\alpha+1}^{TK^\alpha}\left(-C_{1}^{\pi_k} (x_{k,1},a_{k,1}) +\rho \right) 
\leq K^\alpha\left(Z_{T+1} - Z_T \right)+ \sum_{k=(T-1)K^\alpha+1}^{TK^\alpha} \left(\left\{C_{k,1}- C_{1}^{\pi_k} \right\} (x_{k,1},a_{k,1}) - \epsilon \right).  
\end{align*}
Summing the inequality above over all frames and taking expectation on both sides, we obtain the following upper bound on the constraint violation: 
\begin{align}
	\mathbb{E} \left[\sum^{K}_{k=1}\rho-C_{1}^{\pi_k} (x_{k,1},a_{k,1})     \right]
	\leq  -K\epsilon + K^\alpha \mathbb{E}\left[ Z_{K^{1-\alpha}+1} \right]+\mathbb{E}\left[\sum_{k=1}^K\left\{C_{k,1}- C_{1}^{\pi_k}\right\} (x_{k,1},a_{k,1})\right],\label{eq:violation}
\end{align} 
where we used the fact $Z_1=0.$ 

In Lemma \ref{le:qk-qpi-bound}, we already established an upper bound on the estimation error of $C_{k,1}:$
\begin{align}
    \mathbb{E}\left[\sum_{k=1}^K\left\{C_{k,1}-C_{1}^{\pi_k}\right\} (x_{k,1},a_{k,1})\right]\leq H^2SA K^{1-\alpha}  +\frac{H^3 \sqrt{\iota} K}{\chi} +  \sqrt{H^4SA\iota K^{2-\alpha}(\chi+1)}. \label{eq:C-error}
\end{align} 
Next, we study the moment generating function of $Z_T,$ i.e. $\mathbb{E}\left[e^{rZ_T}\right]$ for some $r>0.$ Based on a Lyapunov drift analysis of this moment generating function and Jensen's inequality, we will establish the following upper bound on $Z_T$ that holds for any $1\leq T\leq K^{1-\alpha}+1$  
\begin{align}
    \mathbb{E}[ Z_{T}]  \leq &   \frac{54H^4\iota}{\delta}\log \left( \frac{16H^2\sqrt{\iota}}{\delta} \right) +\frac{16H^2\iota}{K^2\delta} + \frac{4\eta\sqrt{H^2\iota} }{\delta}.\label{eq:Z_T}
\end{align} 
Under our choices of $\epsilon,$ $\alpha,$ $\chi,$ $\eta$ and $\iota,$ it can be easily verified that $K\epsilon$ dominates the upper bounds in \eqref{eq:C-error} and \eqref{eq:Z_T}, which leads to the conclusion that the constraint violation because zero when $K$ is sufficiently large in Theorem \ref{thm:main}. 

\subsubsection{Detailed Proof}
To complete the proof, we need to establish the following upper bound on $\mathbb E[Z_{T+1}]$ based on a bound on the moment generating function. 

\begin{lemma}\label{le:zk-bound}
Assuming $\epsilon \leq \frac{\delta}{2},$ 
we have for any $1\leq T\leq K^{1-\alpha}$
\begin{align}
    \mathbb{E}[ Z_{T}]  \leq & 
   \frac{54H^4\iota}{\delta}\log \left( \frac{16H^2\sqrt{\iota}}{\delta} \right) +\frac{16H^2\iota}{K^2\delta} + \frac{4\eta\sqrt{H^2\iota} }{\delta}. \label{eq:zk-bound}
\end{align} 
\end{lemma}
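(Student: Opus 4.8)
The plan is to bound $\mathbb{E}[Z_T]$ by a moment-generating-function (exponential Lyapunov) analysis of the virtual-queue recursion $Z_{T+1}=(Z_T+\rho+\epsilon-\bar C_T/K^\alpha)^+$, as indicated in the outline. The heart of the argument is a one-frame \emph{negative drift}. Fix a Slater policy $\pi^{\xi}$ with $\mathbb{E}_{x\sim\mu_0}[W_1^{\pi^{\xi}}(x)]\ge\rho+\delta$. Working on the event $\mathcal E$ of Lemma~\ref{le:qk-qpi-relation} (on which $F_{k,h}\ge F_h^\pi$ for \emph{every} policy $\pi$, here applied to $\pi^{\xi}$) and using that Triple-Q picks $a_{k,1}\in\arg\max_a\{Q_{k,1}(x_{k,1},a)+\frac{z}{\eta}C_{k,1}(x_{k,1},a)\}$, a comparison with the randomized action distribution $\pi^\xi_1(\cdot\mid x_{k,1})$ gives
\[
Q_{k,1}(x_{k,1},a_{k,1})+\tfrac{z}{\eta}C_{k,1}(x_{k,1},a_{k,1})\ \ge\ V_1^{\pi^{\xi}}(x_{k,1})+\tfrac{z}{\eta}W_1^{\pi^{\xi}}(x_{k,1})\ \ge\ \tfrac{z}{\eta}W_1^{\pi^{\xi}}(x_{k,1}),
\]
hence $C_{k,1}(x_{k,1},a_{k,1})\ge W_1^{\pi^{\xi}}(x_{k,1})-\frac{\eta}{z}Q_{k,1}(x_{k,1},a_{k,1})$. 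Averaging over the $K^\alpha$ episodes of frame $T$, using independence of $x_{k,1}\sim\mu_0$ from $Z_T$, and bounding $Q_{k,1}$ by Lemma~\ref{le:q1-bound} yields $\mathbb{E}[\bar C_T/K^\alpha\mid Z_T=z]\ge\rho+\delta-\frac{\eta\cdot O(\sqrt{H^2\iota})}{z}$; plugging this into the recursion and using $\epsilon\le\delta/2$ gives
\[
\mathbb{E}[Z_{T+1}-Z_T\mid Z_T=z]\le -\tfrac{\delta}{4}\qquad\text{whenever }z\ge z_0:=\Theta\!\left(\tfrac{\eta\sqrt{H^2\iota}}{\delta}\right),
\]
while $|Z_{T+1}-Z_T|\le D:=\max\{\rho+\epsilon,\ H^2\sqrt{\iota}\}$ always, by the bound on $|C_{k,1}|$ in Lemma~\ref{le:q1-bound}.

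With this two-sided control I would apply the standard exponential-drift (Hajek-type) lemma: choose $r=\Theta(\delta/D^2)=\Theta(\delta/(H^4\iota))$ small enough that $e^{x}\le 1+x+x^2$ on $|x|\le rD$ and $r^2D^2\le r\delta/8$; then $\mathbb{E}[e^{r(Z_{T+1}-Z_T)}\mid Z_T=z]\le 1-r\delta/8$ for $z\ge z_0$ and $\mathbb{E}[e^{rZ_{T+1}}\mid Z_T=z]\le e^{r(z_0+D)}$ for $z<z_0$, so $\mathbb{E}[e^{rZ_{T+1}}]\le(1-r\delta/8)\,\mathbb{E}[e^{rZ_T}]+e^{r(z_0+D)}$. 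Since $Z_1=0$, unrolling over the at most $K^{1-\alpha}$ frames gives $\mathbb{E}[e^{rZ_T}]\le 1+\frac{8}{r\delta}e^{r(z_0+D)}$ uniformly in $T$, and Jensen's inequality then yields $\mathbb{E}[Z_T]\le z_0+D+\frac1r\log\frac{16}{r\delta}$. Substituting $r=\Theta(\delta/(H^4\iota))$ turns $\frac1r\log\frac{16}{r\delta}$ into the dominant $\frac{54H^4\iota}{\delta}\log\frac{16H^2\sqrt{\iota}}{\delta}$ term, $z_0+D$ into the $\frac{4\eta\sqrt{H^2\iota}}{\delta}$ term, and reintroducing the event $\mathcal E^c$ (via the crude bound $Z_T\le 2HK$ together with $\Pr(\mathcal E^c)\le K^{-3}$) into the $\frac{16H^2\iota}{K^2\delta}$ term, i.e.\ the three pieces of \eqref{eq:zk-bound}.

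The main obstacle is the negative-drift step: converting ``$a_{k,1}$ greedy with respect to the pseudo-Q-value'' together with pseudo-Q-optimism into a quantitative lower bound on $\bar C_T/K^\alpha$ in terms of the Slater slack $\delta$, and in particular pinning down how large $z$ must be before the $\eta/z$ estimation error becomes negligible --- this is exactly why a \emph{threshold} $z_0$ proportional to $\eta$ appears and why $\epsilon\le\delta/2$ is assumed. A secondary technical point is executing the exponential-drift lemma carefully: choosing $r$ so that both the quadratic remainder of $e^x$ and the reflection at the origin are absorbed, and handling the conditioning on $\mathcal E$ so that its failure enters only as the additive $K^{-2}$-scale term rather than corrupting the drift inequality itself.
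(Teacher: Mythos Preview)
Your approach is correct and follows the same exponential-Lyapunov/moment-generating-function strategy as the paper: establish a negative drift of $Z_T$ above a threshold, bound the increments, invoke a Hajek-type lemma, and conclude via Jensen. The paper packages the negative-drift step a bit differently: rather than bounding the drift of $Z_T$ directly from optimism plus greedy selection as you do, it first bounds the drift of the \emph{quadratic} Lyapunov $L_T=\tfrac12 Z_T^2$ (Lemma~\ref{le:drift_epi_neg}, where the $\eta\sqrt{H^2\iota}$, $H^4\iota+\epsilon^2$, and $K^{-2}$ terms all enter additively) and then converts via $Z_{T+1}-Z_T\le\frac{1}{2z}(Z_{T+1}^2-z^2)$; this pushes the extra $H^4\iota+\epsilon^2$ pieces into the threshold $\theta_T$ rather than leaving them solely to the increment bound $v$. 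The exponential-drift step itself is done by a black-box lemma of Neely (Lemma~\ref{le:drift-bond-cond}), which is exactly the Hajek-type statement you sketch. One caution on your handling of $\mathcal E^c$: conditioning the whole drift argument on the global event $\mathcal E$ and then correcting at the end is delicate, since conditioning on $\mathcal E$ can disturb the Markov structure the Hajek lemma relies on; the paper avoids this by folding the $\mathcal E^c$ contribution directly into the per-frame (unconditional-in-$\mathcal E$) drift bound, which is the cleaner route and is what produces the $4H^2\iota/K^2$ term inside $\theta_T$.
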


%%%%%%%%%%%%%%%%%%%%%%%%%%%%%%%%%%%%%%%%%%%%%%%%%%%%%%%%%%%%%%%%
The proof will also use the following lemma from \cite{Nee_16}.
\begin{lemma}\label{le:drift-bond-cond}
	Let $S_t$ be the state of a Markov chain, $L_t$ be a Lyapunov function with $L_0=l_0,$  and its drift $\Delta_t= L_{t+1}-L_t.$ Given the constant $\gamma$ and $v$ with $0<\gamma\leq v,$ suppose that the expected drift $\mathbb{E}[\Delta_t\vert S_t=s]$ satisfies the following conditions:
	\begin{itemize}
		\item[(1)] There exists constant $\gamma>0$ and $\theta_t>0$ such that $\mathbb{E}[\Delta_t\vert S_t=s]\leq -\gamma$ when $L_t\geq \theta_t.$
		\item[(2)] $\vert L_{t+1}-L_t\vert \leq v$ holds with probability one.
	\end{itemize}
	Then we have $$\mathbb{E}[e^{rL_t}]\leq e^{rl_0}+\frac{2e^{r(v+\theta_t)}}{r\gamma},$$ where $r=\frac{\gamma}{v^2+v\gamma/3}.$\hfill{$\square$}
\end{lemma}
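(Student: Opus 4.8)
The plan is to prove this drift bound by studying the one-step drift of the \emph{exponentiated} Lyapunov function $D_t:=e^{rL_t}$ and showing it satisfies a contraction-plus-constant recursion, which then telescopes into the geometric-series bound in the statement. Since $L_{t+1}=L_t+\Delta_t$ and $L_t$ is determined by $S_t$, conditioning on the state gives the exact identity $\mathbb{E}[D_{t+1}\mid S_t]=D_t\,\mathbb{E}[e^{r\Delta_t}\mid S_t]$, so the whole argument reduces to controlling the conditional moment generating function $\mathbb{E}[e^{r\Delta_t}\mid S_t]$ of the increment.

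First I would establish a quadratic upper bound on $e^{r\Delta_t}$, valid because $rv<3$. Writing $e^{r\Delta_t}=1+r\Delta_t+\sum_{n\ge 2}\frac{(r\Delta_t)^n}{n!}$, bounding each tail term by its absolute value, and using the elementary inequality $\frac{2}{n!}\le 3^{-(n-2)}$ for all $n\ge2$ together with $|\Delta_t|\le v$, the tail collapses to a geometric series and yields
\[
e^{r\Delta_t}\le 1+r\Delta_t+\frac{r^2v^2}{2\,(1-rv/3)}.
\]
Taking the conditional expectation replaces $r\Delta_t$ by $r\,\mathbb{E}[\Delta_t\mid S_t]$ and leaves the deterministic quadratic remainder unchanged.

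Next comes the case split on the threshold. When $L_t\ge\theta_t$, condition (1) gives $\mathbb{E}[\Delta_t\mid S_t]\le-\gamma$; the purpose of the prescribed choice $r=\frac{\gamma}{v^2+v\gamma/3}$ is that it makes the quadratic remainder exactly $\frac{r\gamma}{2}$. Indeed a short computation gives $1-rv/3=\frac{v}{v+\gamma/3}$, hence $\frac{r^2v^2}{2(1-rv/3)}=\frac{r}{2}\,r v(v+\gamma/3)=\frac{r\gamma}{2}$ since $rv(v+\gamma/3)=\gamma$ by definition of $r$. Thus $\mathbb{E}[e^{r\Delta_t}\mid S_t]\le 1-r\gamma+\frac{r\gamma}{2}=1-\frac{r\gamma}{2}$, and therefore $\mathbb{E}[D_{t+1}\mid S_t]\le(1-\frac{r\gamma}{2})D_t$; note $\gamma\le v$ forces $r\gamma\le1$, so $1-\frac{r\gamma}{2}\in[\frac12,1]$. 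When instead $L_t<\theta_t$ I would discard the drift entirely and use only condition (2): deterministically $L_{t+1}\le L_t+v<\theta_t+v$, so $\mathbb{E}[D_{t+1}\mid S_t]\le e^{r(\theta_t+v)}$. Merging the two cases into one unconditional inequality (legitimate because the omitted terms are nonnegative and $1-\frac{r\gamma}{2}\ge0$) gives
\[
\mathbb{E}[D_{t+1}]\le\Big(1-\tfrac{r\gamma}{2}\Big)\mathbb{E}[D_t]+e^{r(\theta_t+v)}.
\]
Iterating from $D_0=e^{rl_0}$, bounding the contraction on the initial term by $1$, and summing the forcing term as the geometric series $\sum_{j\ge0}(1-\frac{r\gamma}{2})^j=\frac{2}{r\gamma}$ reproduces exactly $\mathbb{E}[e^{rL_t}]\le e^{rl_0}+\frac{2e^{r(v+\theta_t)}}{r\gamma}$, where I treat the threshold as a constant $\theta_t\equiv\theta$ over the iteration, as in the intended application.

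The main obstacle is the $L_t\ge\theta_t$ case: the exponent $r$ must be small enough that the positive second-order remainder cannot swamp the first-order negative drift $-r\gamma$, yet not so small that the resulting tail bound becomes useless. The value $r=\frac{\gamma}{v^2+v\gamma/3}$ is precisely the balancing choice, and the delicate point is the algebraic identity showing the remainder equals $\frac{r\gamma}{2}$ rather than merely being bounded by $r\gamma$; this is what produces the clean contraction factor $1-\frac{r\gamma}{2}$ and, via the telescoping sum, the factor $\frac{2}{r\gamma}$ in the final bound. Beyond conditions (1)--(2) the only structure used is that $L_t$ is a measurable function of the Markov state $S_t$, so that conditioning on $S_t$ both fixes $D_t$ and determines on which side of $\theta_t$ the chain sits.
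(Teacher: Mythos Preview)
Your argument is correct and is in fact the standard way this drift-to-exponential-moment bound is proved. However, note that the paper does \emph{not} supply its own proof of this lemma: it is stated as a quotation from \cite{Nee_16} (Neely, 2016) and used as a black box in the proof of Lemma~\ref{le:zk-bound}. So there is nothing in the paper to compare against beyond the citation.

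Your proof is essentially the one that appears in Neely's work: Taylor-expand $e^{r\Delta_t}$, use $|\Delta_t|\le v$ to control the tail as a geometric series, choose $r$ so that the second-order remainder equals exactly $\tfrac{r\gamma}{2}$ and hence is dominated by the first-order negative drift $-r\gamma$ when $L_t\ge\theta_t$, bound crudely by $e^{r(v+\theta_t)}$ when $L_t<\theta_t$, and telescope the resulting affine recursion for $\mathbb{E}[e^{rL_t}]$. Your algebraic check that $\frac{r^2v^2}{2(1-rv/3)}=\frac{r\gamma}{2}$ for the prescribed $r$ is exactly the right calculation, and your observation that $\gamma\le v$ forces $r\gamma\le 1$ (so the contraction factor lies in $[\tfrac12,1]$) is the condition that makes the geometric sum finite. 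The one caveat you already flag---that the iteration needs the threshold $\theta_t$ to be constant (or at least bounded by $\theta_t$) over $s\le t$---is harmless here, since in the paper's application (Lemma~\ref{le:zk-bound}) $\theta_T$ does not actually depend on $T$.
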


\begin{proof}[Proof of Lemma \ref{le:zk-bound}]
We apply Lemma \ref{le:drift-bond-cond} to a new Lyapunov function: $$\bar{L}_T= Z_T.$$ 

To verify condition (1) in Lemma \ref{le:drift-bond-cond}, consider $\bar{L}_T =Z_T\geq \theta_T =\frac{4(\frac{4H^2\iota}{K^{2}}+ \eta \sqrt{H^2\iota}+ H^4\iota+\epsilon^2 )}{\delta}$ and $2\epsilon\leq \delta.$ The conditional expected drift of $\bar{L}_T$ is 
\begin{align*}
	&\mathbb{E} \left[ Z_{T+1} - Z_T \vert Z_T=z\right]\\
	=& \mathbb{E}\left[ \left. \sqrt{Z_{T+1}^2}  - \sqrt{z^2} \right\vert Z_T=z \right] \\
	\leq & \frac{1}{2z} \mathbb{E}  \left[ \left. Z_{T+1}^2  - z^2 \right\vert Z_T=z\right]\\
	\leq_{(a)}& -\frac{\delta}{2}  + \frac{\frac{4H^2\iota}{K^{2}}+ \eta \sqrt{H^2\iota}+ H^4\iota+\epsilon^2 }{z} \\
	\leq & -\frac{\delta}{2}  + \frac{\frac{4H^2\iota}{K^{2}}+ \eta \sqrt{H^2\iota}+ H^4\iota+\epsilon^2 }{\theta_T}\\
	= & -\frac{\delta}{4},
\end{align*}
where  inequality ($a$) is obtained according to Lemma \ref{le:drift_epi_neg}; and the last inequality holds given $z \geq\theta_T.$ 

To verify condition (2) in Lemma \ref{le:drift-bond-cond}, we have 
$$Z_{T+1}-Z_T \leq \vert Z_{T+1} - Z_{T}\vert \leq \left|\rho+\epsilon -\bar{C}_T\right|\leq (H^2+\sqrt{H^4\iota})+\epsilon\leq 2\sqrt{H^4\iota},$$
where the last inequality holds because $2\epsilon\leq \delta\leq 1.$

Now choose $\gamma  = \frac{\delta}{4}$ and $v=2\sqrt{H^4\iota}.$ From Lemma \ref{le:drift-bond-cond}, we obtain
\begin{align}
    \mathbb{E}\left[e^{rZ_T}\right]\leq e^{rZ_1} + \frac{2e^{r(v+\theta_T)}}{r\gamma},\quad\hbox{where}\quad r=\frac{\gamma}{v^2+v\gamma/3}.
    \label{eq:rzk-bound}
\end{align}
By Jensen's inequality, we have $$e^{r\mathbb{E}\left[ Z_T \right]  }\leq  \mathbb{E}\left[e^{r Z_T }\right],$$ which implies that
\begin{align*}
	&	\mathbb{E}[ Z_T]  \leq \frac{1}{r}\log{\left(1 + \frac{2e^{r(v+\theta_T)}}{r\gamma} \right)} \\
	= &  \frac{1}{r}\log{\left( 1 + \frac{6v^2+2v\gamma}{3\gamma^2} e^{r(v+\theta_T)}  \right)}\\
	\leq 	&  \frac{1}{r}\log{\left(1+ \frac{8v^2}{3\gamma^2} e^{r(v+\theta_T)}  \right)}\\
	\leq 	&  \frac{1}{r}\log{\left(\frac{11v^2}{3\gamma^2} e^{r(v+\theta_T)}  \right)}\\
	\leq  	&  \frac{4v^2}{3\gamma}\log{\left(\frac{11v^2}{3\gamma^2} e^{r(v+\theta_T)}  \right)}\\
	\leq & \frac{3v^2}{\gamma} \log \left(\frac{2v}{\gamma} \right) + v + \theta_T \\
	\leq&\frac{3v^2}{\gamma} \log \left(\frac{2v}{\gamma} \right)+ v + \frac{4(\frac{4H^2\iota}{K^{2}}+ \eta \sqrt{H^2\iota}+ H^4\iota+\epsilon^2 )}{\delta} \\
	= & \frac{48H^4\iota}{\delta}\log \left( \frac{16H^2\sqrt{\iota}}{\delta} \right) + 2\sqrt{H^4\iota} + \frac{4(\frac{4H^2\iota}{K^{2}}+ \eta \sqrt{H^2\iota}+ H^4\iota+\epsilon^2 )}{\delta} \\
	\leq & \frac{54H^4\iota}{\delta}\log \left( \frac{16H^2\sqrt{\iota}}{\delta} \right) +\frac{16H^2\iota}{K^2\delta} + \frac{4\eta\sqrt{H^2\iota} }{\delta}    =\tilde{\cal O}\left(\frac{\eta H}{\delta}\right),
\end{align*} which completes the proof of Lemma \ref{le:zk-bound}. 
\end{proof}
%%%%%%%%%%%%%%%%%%%%%%%%%%%%%%%%%%%%%%%%%%%%%%%%%%%%%%%%%%%%%%%%%%%%%%%%%

Substituting the results from Lemmas \ref{le:qk-qpi-bound}  and \ref{le:zk-bound} into \eqref{eq:violation}, under assumption $K \geq  \left(\frac{16\sqrt{SAH^6\iota^3}}{\delta} \right)^5,$ which guarantees $\epsilon\leq \frac{\delta}{2}.$ Then by using the facts that $\epsilon = \frac{8\sqrt{SAH^6\iota^3}}{ K^{0.2}},$ we can easily verify that
\begin{align*}
	&\hbox{Violation}(K)  \leq \frac{54H^4{\iota}K^{0.6} }{\delta}\log{\frac{16H^2\sqrt{\iota}}{\delta}} +\frac{4\sqrt{H^2\iota}}{\delta}K^{0.8}- 5\sqrt{SAH^6\iota^3}K^{0.8}. 
\end{align*}
If further we have $K\geq e^{\frac{1}{\delta}},$ we can obtain 
$$\hbox{Violation}(K) \leq \frac{54H^4{\iota}K^{0.6} }{\delta}\log{\frac{16H^2\sqrt{\iota}}{\delta}} - \sqrt{SAH^6\iota^3}K^{0.8} = 0.$$
Now to prove the high probability bound, recall that from inequality \eqref{eq:zupdate}, we have 
\begin{align}
	\sum^{K}_{k=1}\rho-C_{1}^{\pi_k} (x_{k,1},a_{k,1})   
	\leq  -K\epsilon + K^\alpha  Z_{K^{1-\alpha}+1} +\sum_{k=1}^K\left\{C_{k,1}- C_{1}^{\pi_k}\right\} (x_{k,1},a_{k,1}).\label{eq:violation-bound}
\end{align}  According to inequality \eqref{eq:rzk-bound}, we have
$$\mathbb{E}\left[e^{rZ_T}\right]\leq e^{rZ_1} + \frac{2e^{r(v+\theta_T)}}{r\gamma}\leq \frac{11v^2}{3\gamma^2} e^{r(v+\theta_T)},$$ which implies that 
\begin{align}
    &\Pr\left(Z_T \geq \frac{1}{r}\log \left( \frac{11v^2}{3\gamma^2}  \right) +2(v+\theta_T)\right) \nonumber\\
    = & \Pr (e^{rZ_T} \geq e^{\log \left( \frac{11v^2}{3\gamma^2}  \right) +2r(v+\theta_T)  } )\nonumber \\
    \leq & \frac{\mathbb E [e^{rZ_T}]}{ \frac{11v^2}{3\gamma^2} e^{2r(v+\theta_T)}} \nonumber \\
    \leq & \frac{1}{e^{r(v+\theta_T)}   }= \tilde{\mathcal{O}}\left(e^{-\eta} \right)\label{eq:high-1},
\end{align}
where the first inequality  is from the Markov inequality.

In the proof of Lemma \ref{le:qk-qpi-bound}, we have shown
\begin{align}
    	&\left\vert \sum_{k=(T-1)K^\alpha+1}^{TK^\alpha} C_{k,h} (x_{k,h},a_{k,h}) - C_{h}^{\pi_k}(x_{k,h},a_{k,h})\right\vert \nonumber\\
		\leq  &\left\vert \sum_{k=(T-1)K^\alpha+1}^{TK^\alpha} C_{k,h+1}(x_{k,h+1},a_{k,h+1})-C_{h+1}^{\pi_k}(x_{k,h+1},a_{k,h+1})\right\vert + \left\vert \sum_{k=(T-1)K^\alpha+1}^{TK^\alpha}  (\hat{\mathbb P}_h^k-\mathbb P_h) V_{h+1}^{\pi_k}(x_{k,h},a_{k,h})\right\vert \nonumber\\
		&+HSA  +\frac{H^2\sqrt{\iota}K^\alpha}{\chi} +  \sqrt{H^2SA\iota K^{\alpha}(\chi+1)} \label{eq:probbound}
\end{align}
Following a similar proof as the proof of Lemma \ref{le:u-hoeffding}, we can prove that $$\left\vert \sum_{k=(T-1)K^\alpha+1}^{TK^\alpha}  (\hat{\mathbb P}_h^k-\mathbb P_h) V_{h+1}^{\pi_k}(x_{k,h},a_{k,h})\right\vert \leq \frac{1}{4}\sqrt{H^2\iota K^\alpha}$$ holds with probability at least $1-\frac{1}{K^3}$.
By iteratively using inequality  \eqref{eq:probbound} over $h$ and by summing it over all frames, we conclude that with probability at at least $1-\frac{1}{K^2},$
\begin{align}
    \left\vert \sum_{k=1}^K\{ C_{k,1}-C_1^{\pi_k}\}(x_{k,1},a_{k,1})\right\vert \leq & K^{1-\alpha}H^2SA  +\frac{H^3\sqrt{\iota}K}{\chi} +  \sqrt{H^4SA\iota K^{2-\alpha}(\chi+1)}+\frac{1}{4}\sqrt{H^4\iota K^{2-\alpha}}\nonumber \\
    \leq & 4 \sqrt{H^4SA\iota}K^{0.8},\label{eq:high-2}
\end{align}
where the last inequality holds because $\alpha=0.6$ and $\chi=K^{0.2}.$ 

%We denote this event by $\Theta.$

Now, by combining inequalities \eqref{eq:high-1} and \eqref{eq:high-2}, and using the union bound, we can show that when $K \geq \max\left\{\left(\frac{8\sqrt{SAH^6\iota^3}}{\delta}\right)^5,e^{\frac{1}{\delta}}\right\},$ with probability at least $1-\tilde{\mathcal{O}}\left(e^{-K^{0.2}} + \frac{1}{K^2}\right)$
\begin{align}
 & \sum^{K}_{k=1}\rho-C_{1}^{\pi_k} (x_{k,1},a_{k,1})  \nonumber\\
 &\leq  -K\epsilon+ K^\alpha\left(\frac{1}{r}\log \left( \frac{11v^2}{3\gamma^2}  \right) +2(v+\theta_T) \right) + 4 \sqrt{H^4SA\iota}K^{0.8} \nonumber\\
 &\leq -\sqrt{SAH^6\iota^3}K^{0.8} \leq 0,
\end{align}
which completes the proof of our main result.

 \section{Convergence and $\epsilon$-Optimal Policy}\label{sec:near}

The Triple-Q algorithm is an online learning algorithm and is not a stationary policy. In theory, we can obtain a near-optimal, stationary policy following the idea proposed in \cite{JinAllZey_18}. Assume the agent stores all the policies used during learning. Note that each policy is defined by the two Q tables and the value of the virtual queue. At the end of learning horizon $K,$ the agent constructs a stochastic policy $\bar{\pi}$ such that at the beginning of each episode, the agent uniformly and randomly selects a policy from the $K$ policies
, i.e.   $\bar{\pi}=\pi_k$ with probability $\frac{1}{K}.$ 

We note that given any initial state $x,$
$$\frac{1}{K}\sum_{k=1}^K V_{1}^{\pi_k}(x)= V^{\bar{\pi}}_1(x).$$
$$\frac{1}{K}\sum_{k=1}^K W_{1}^{\pi_k}(x)= W^{\bar{\pi}}_1(x).$$ 
Therefore, under policy $\bar{\pi},$ we have
\begin{align*}
		& \mathbb{E} \left[V_1^*(x_{k,1}) -V_1^{\bar{\pi}}(x_{k,1}) \right]\\
	= & \mathbb{E} \left[  \frac{1}{K}\sum_{k=1}^K \left(V_1^*(x_{k,1}) -V_1^{\bar{\pi}}(x_{k,1}\right)  \right] \\
	=&\mathbb{E}\left[ \frac{1}{K}\sum_{k=1}^K \left(V_1^*(x_{k,1}) -V_1^{\pi_k}(x_{k,1}\right)\right] \\
	= & \tilde{O}\left( \frac{H^4\sqrt{SA}}{\delta K^{0.2}} \right),
\end{align*}
and 
\begin{align*}
	& \mathbb{E} \left[\rho -W_1^{\bar{\pi}}(x_{k,1}) \right]\\
	= & \mathbb{E} \left[  \frac{1}{K}\sum_{k=1}^K \left(\rho -W_1^{\bar{\pi}}(x_{k,1}\right)  \right] \\
	= & \mathbb{E} \left[\frac{1}{K} \sum_{k=1}^K\left(\rho-W_1^{\pi_k}(x_{k,1}) \right)\right]\\
 \leq &0. 
\end{align*}
Therefore, given any $\epsilon,$ $\bar{\pi}$ is an $\epsilon$-optimal policy when $K$ is sufficiently large. 

While $\bar{\pi}$ is a near-optimal policy, in practice, it may not be possible to store all policies during learning due to memory constraint. A heuristic approach to obtain a near optimal, stationary policy is to fix the two Q functions (reward and utility) after learning horizon $K$ and continue to adapt the virtual queue with frame size $\sqrt{K}.$ This is a stochastic policy where the randomness comes from the virtual queue. When the virtual queue reaches its steady state, we obtain a stationary policy. The resulted policy has great performance in our experiment (see Section \ref{sec:simu}).

\section{Evaluation} \label{sec:simu}
We evaluated Triple-Q using a grid-world environment \cite{ChoNacDue_18}. We further implemented Triple-Q with neural network approximations, called Deep Triple-Q, for an environment with continuous state and action spaces, called the Dynamic Gym benchmark \cite{YanSimTin_21}. In both cases, Triple-Q and Deep Triple-Q quickly learn a safe policy with a high reward. 

\subsection{A Tabular Case} 
We first evaluated our algorithm using a grid-world environment studied in \cite{ChoNacDue_18}. The environment is shown in Figure~\ref{fig:envs}-(a). The objective of the agent is to travel to the destination as quickly as possible while avoiding obstacles for safety. Hitting an obstacle incurs a cost of $1$. The reward for the destination is $100$, and for other locations are the Euclidean distance between them and the destination subtracted from the longest distance. The cost constraint is set to be $6$ (we transferred utility to cost as we discussed in the paper), which means the agent is only allowed to hit the obstacles as most six times. To account for the statistical significance, the result of each experiment was averaged over $5$ trials. 

The result is shown in Figure~\ref{fig:re_grid}, from which we can observe that Triple-Q can quickly learn a well performed policy (with about $20,000$ episodes) while satisfying the safety constraint. Triple-Q-stop is a stationary policy obtained by stopping learning (i.e. fixing the Q tables) at $40,000$ training steps (note the virtual-Queue continues to be updated so the policy is a stochastic policy). We can see that Triple-Q-stop has similar performance as Triple-Q, and show that Triple-Q yields a near-optimal, stationary policy after the learning stops. 

\begin{figure}[htb]
	\centering
	\includegraphics[width=0.5\linewidth]{./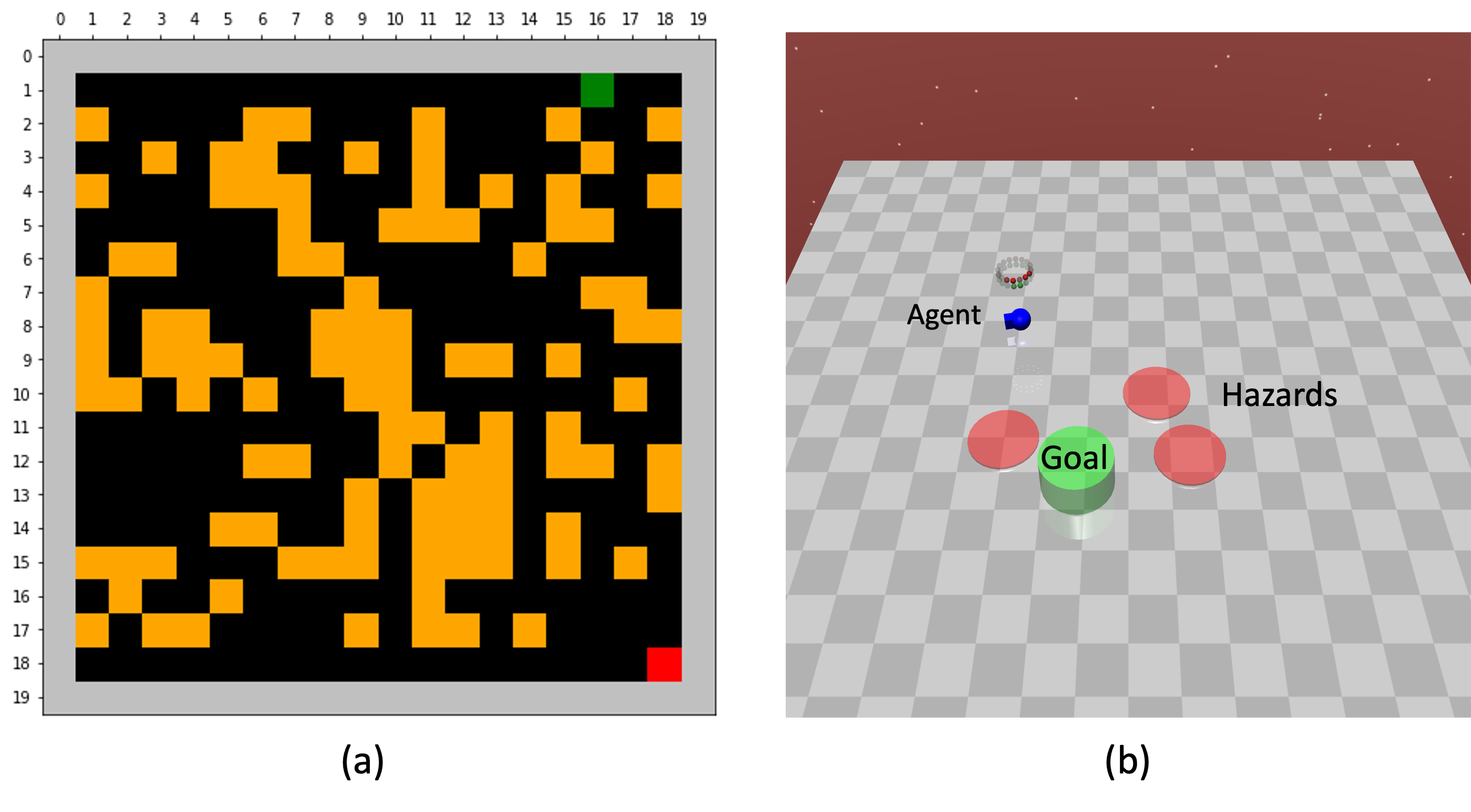}
	\caption{Grid World and DynamicEnv\protect\footnotemark \ with Safety Constraints}
	\label{fig:envs}
\end{figure}
\begin{figure}[htb]
	\centering
	\includegraphics[width=0.58\linewidth]{./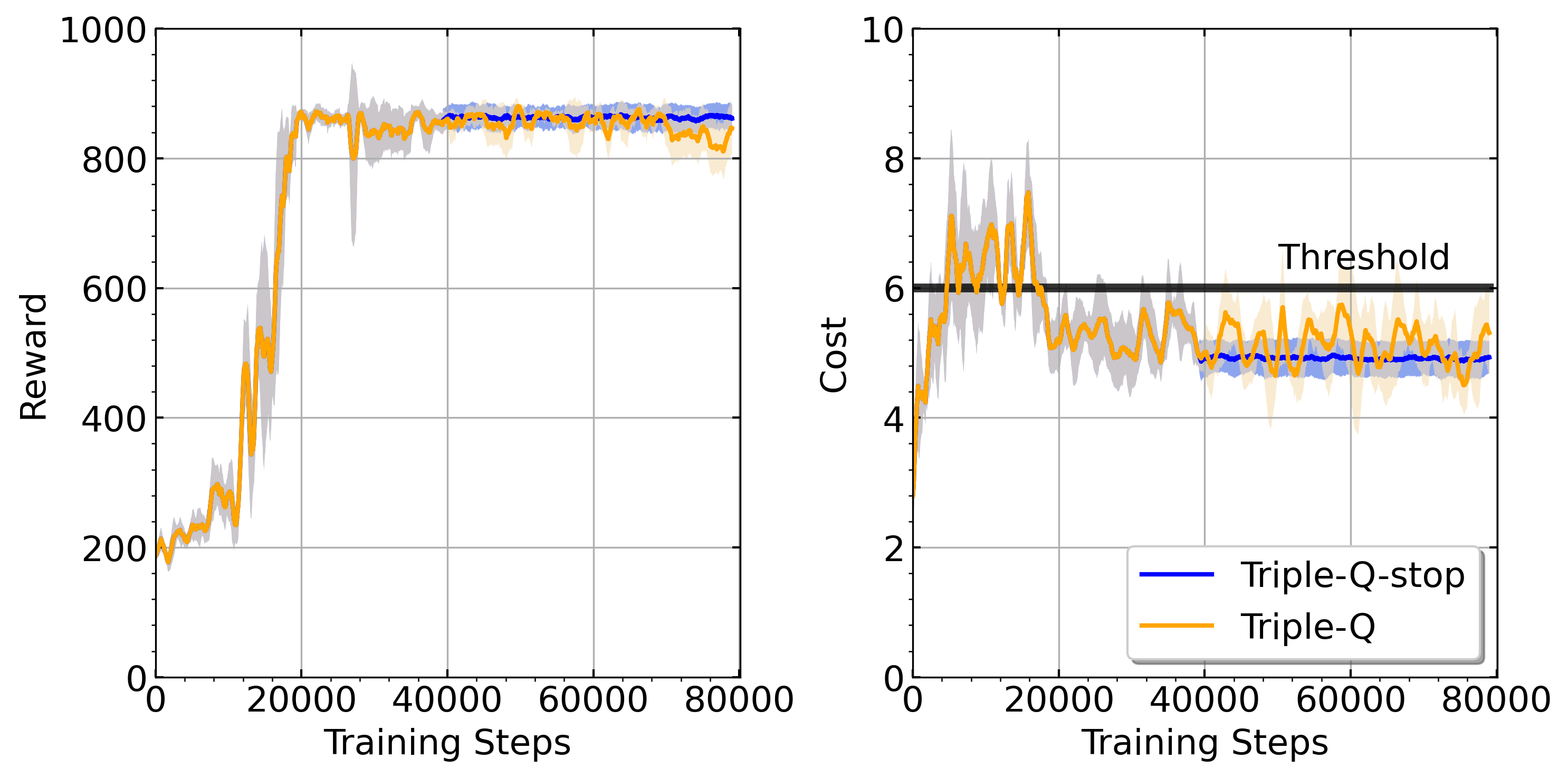}
	\caption{The average reward and cost under Triple-Q during training. The shaded region represents the standard deviations}
	\label{fig:re_grid}
\end{figure}
\footnotetext{{\bf Image Sorce:} The environment is generated using safety-gym: https://github.com/openai/safety-gym.}
\subsection{Triple-Q with Neural Networks}

We further evaluated our algorithm on the Dynamic Gym benchmark (DynamicEnv) \cite{YanSimTin_21} as shown in Figure.~\ref{fig:envs}-(b). In this environment, a point agent (one actuator for turning and another for moving) navigates on a 2D map to reach the goal position while trying to avoid reaching hazardous areas. The initial state of the agent, the goal position and hazards are randomly generated in each episode. At each step, the agents get a cost of $1$ if it stays in the hazardous area; and otherwise, there is no cost. The constraint is that the expected cost should not exceed 15. In this environment, both the states and action spaces are continuous, we implemented the key ideas of Triple-Q with neural network approximations and the actor-critic method. In particular, two Q functions are trained simultaneously, the virtual queue is updated slowly every few episodes, and the policy network is trained by optimizing the combined three ``Q''s (Triple-Q). The details can be found in Table \ref{tab:hyper}. We call this algorithm Deep Triple-Q. The simulation results in Figure~\ref{fig:re_dynamic} show that Deep Triple-Q learns a safe-policy with a high reward much faster than WCSAC \cite{YanSimTin_21}. In particular, it took around $0.45$ million training steps under Deep Triple-Q, but it took $4.5$ million training steps under WCSAC.

\begin{table}[htb]
    \centering
    \caption{Hyperparameters}
    \begin{tabular}{l|c}
       Parameter &  Value\\
       \hline
      \quad optimizer & Adam \\
      \quad  learning rate & $3\times 1^{-3}$\\
      \quad  discount & 0.99 \\
      \quad replay buffer size & $10^6$ \\
      \quad number of hidden layers (all networks) & 2 \\
      \quad batch Size & 256 \\
      \quad nonlinearity & ReLU \\
      \quad number of hidden units per layer (Critic) & 256 \\
      \quad number of hidden units per layer (Actor) & 256 \\
      \quad virtual queue update frequency & 3 episode \\
      \hline
    \end{tabular}
    \label{tab:hyper}
\end{table}

\begin{figure}[htb]
	\centering
	\includegraphics[width=0.58\linewidth]{./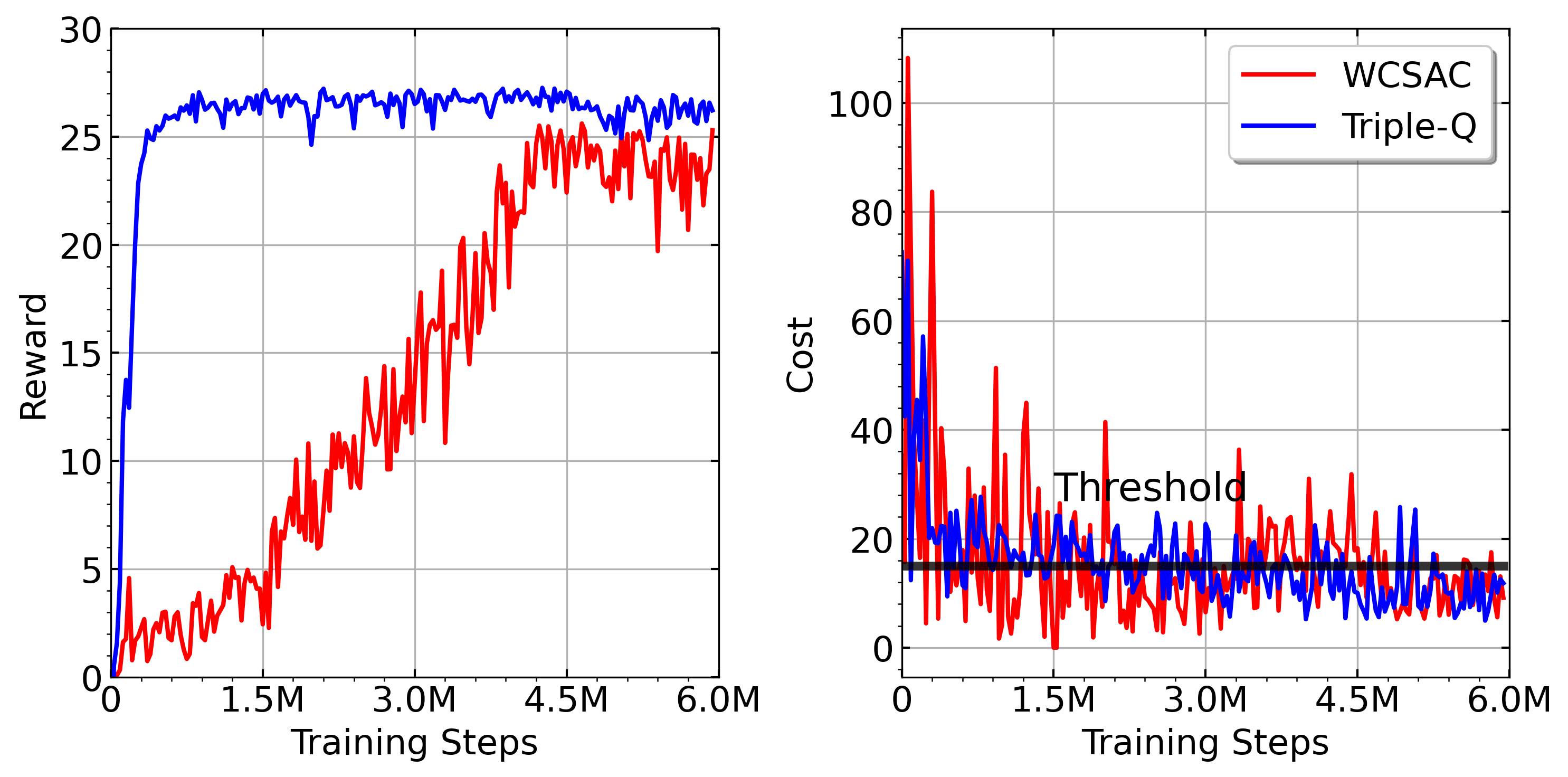}
	\caption{The rewards and costs of Deep Triple-Q versus WCSAC during Training}
	\label{fig:re_dynamic}
\end{figure}
\section{Conclusions}\label{sec:col}
This paper considered CMDPs and proposed a model-free RL algorithm without a simulator, named {Triple-Q}. From a theoretical perspective, {\em Triple-Q} achieves sublinear regret and {\em zero} constraint violation. We believe it is the first {\em model-free} RL algorithm for CMDPs with provable sublinear regret, without a simulator. From an algorithmic perspective, {Triple-Q} has similar computational complexity with SARSA, and can  easily incorporate recent deep Q-learning algorithms to obtain a deep {\em Triple-Q} algorithm, which makes our method particularly appealing for complex and challenging CMDPs in practice. 

While we only considered a single constraint in the paper, it is straightforward to extend the algorithm and the analysis to multiple constraints. Assuming there are $J$ constraints in total, Triple-Q can maintain a virtual queue and a utility Q-function for each constraint, and then selects an action at each step by solving the following problem:
$$\max_a \left(Q_{h} (x_{h},a) + \frac{1}{\eta}\sum_{j=1}^J {Z^{(j)}} C^{(j)}_{h}(x_{h},a)\right).$$ 
\bibliographystyle{plain}

\begin{thebibliography}{10}

\bibitem{AbePreCez_10}
Naoki Abe, Prem Melville, Cezar Pendus, Chandan~K Reddy, David~L Jensen,
  Vince~P Thomas, James~J Bennett, Gary~F Anderson, Brent~R Cooley, Melissa
  Kowalczyk, et~al.
\newblock Optimizing debt collections using constrained reinforcement learning.
\newblock In {\em Proceedings of the 16th ACM SIGKDD international conference
  on Knowledge discovery and data mining}, pages 75--84, 2010.

\bibitem{AchHelDav_17}
Joshua Achiam, David Held, Aviv Tamar, and Pieter Abbeel.
\newblock Constrained policy optimization.
\newblock In {\em International Conference on Machine Learning}, pages 22--31.
  PMLR, 2017.

\bibitem{Alt_99}
Eitan Altman.
\newblock {\em Constrained Markov decision processes}, volume~7.
\newblock CRC Press, 1999.

\bibitem{AzaRemHil_12}
Mohammad~Gheshlaghi Azar, R\'{e}mi Munos, and Hilbert~J. Kappen.
\newblock On the sample complexity of reinforcement learning with a generative
  model.
\newblock In {\em Int. Conf. Machine Learning (ICML)}, Madison, WI, USA, 2012.

\bibitem{AzaRemHil_13}
Mohammad~Gheshlaghi Azar, R\'{e}mi Munos, and Hilbert~J. Kappen.
\newblock Minimax pac bounds on the sample complexity of reinforcement learning
  with a generative model.
\newblock {\em Mach. Learn.}, 91(3):325–349, June 2013.

\bibitem{BraDudLyk_20}
Kiant{\'e} Brantley, Miroslav Dudik, Thodoris Lykouris, Sobhan Miryoosefi, Max
  Simchowitz, Aleksandrs Slivkins, and Wen Sun.
\newblock Constrained episodic reinforcement learning in concave-convex and
  knapsack settings.
\newblock {\em arXiv preprint arXiv:2006.05051}, 2020.

\bibitem{CheDonWan_21}
Yi~Chen, Jing Dong, and Zhaoran Wang.
\newblock A primal-dual approach to constrained {Markov} decision processes.
\newblock {\em arXiv preprint arXiv:2101.10895}, 2021.

\bibitem{ChoNacDue_18}
Yinlam Chow, Ofir Nachum, Edgar Duenez-Guzman, and Mohammad Ghavamzadeh.
\newblock A {l}yapunov-based approach to safe reinforcement learning.
\newblock {\em arXiv preprint arXiv:1805.07708}, 2018.

\bibitem{DinWeiYan_20}
Dongsheng Ding, Xiaohan Wei, Zhuoran Yang, Zhaoran Wang, and Mihailo Jovanovic.
\newblock Provably efficient safe exploration via primal-dual policy
  optimization.
\newblock In {\em Proceedings of The 24th International Conference on
  Artificial Intelligence and Statistics}, 2021.

\bibitem{DinZhaBas_20}
Dongsheng Ding, Kaiqing Zhang, Tamer Basar, and Mihailo Jovanovic.
\newblock Natural policy gradient primal-dual method for constrained markov
  decision processes.
\newblock {\em Advances in Neural Information Processing Systems}, 33, 2020.

\bibitem{EfrManPir_20}
Yonathan Efroni, Shie Mannor, and Matteo Pirotta.
\newblock Exploration-exploitation in constrained {MDP}s.
\newblock {\em arXiv preprint arXiv:2003.02189}, 2020.

\bibitem{FisAkaZei_18}
Jaime~F Fisac, Anayo~K Akametalu, Melanie~N Zeilinger, Shahab Kaynama, Jeremy
  Gillula, and Claire~J Tomlin.
\newblock A general safety framework for learning-based control in uncertain
  robotic systems.
\newblock {\em IEEE Transactions on Automatic Control}, 64(7):2737--2752, 2018.

\bibitem{GarFer_12}
Javier Garcia and Fernando Fern{\'a}ndez.
\newblock Safe exploration of state and action spaces in reinforcement
  learning.
\newblock {\em Journal of Artificial Intelligence Research}, 45:515--564, 2012.

\bibitem{JinAllZey_18}
Chi Jin, Zeyuan Allen-Zhu, Sebastien Bubeck, and Michael~I Jordan.
\newblock Is q-learning provably efficient?
\newblock In {\em Advances Neural Information Processing Systems (NeurIPS)},
  volume~31, pages 4863--4873, 2018.

\bibitem{KriRahPie_20}
Krishna~C. Kalagarla, Rahul Jain, and Pierluigi Nuzzo.
\newblock A sample-efficient algorithm for episodic finite-horizon {MDP} with
  constraints.
\newblock {\em arXiv preprint arXiv:2009.11348}, 2020.

\bibitem{Nee_16}
M.~J. {Neely}.
\newblock Energy-aware wireless scheduling with near-optimal backlog and
  convergence time tradeoffs.
\newblock {\em IEEE/ACM Transactions on Networking}, 24(4):2223--2236, 2016.

\bibitem{Nee_10}
Michael~J. Neely.
\newblock Stochastic network optimization with application to communication and
  queueing systems.
\newblock {\em Synthesis Lectures on Communication Networks}, 3(1):1--211,
  2010.

\bibitem{OnoPavKuw_15}
Masahiro Ono, Marco Pavone, Yoshiaki Kuwata, and J~Balaram.
\newblock Chance-constrained dynamic programming with application to risk-aware
  robotic space exploration.
\newblock {\em Autonomous Robots}, 39(4):555--571, 2015.

\bibitem{PatChaCal_19}
Santiago Paternain, Luiz Chamon, Miguel Calvo-Fullana, and Alejandro Ribeiro.
\newblock Constrained reinforcement learning has zero duality gap.
\newblock In {\em Advances in Neural Information Processing Systems}, 2019.

\bibitem{PutMar_14}
Martin~L Puterman.
\newblock {\em Markov decision processes: discrete stochastic dynamic
  programming}.
\newblock John Wiley \& Sons, 2014.

\bibitem{QiuWeiYan_20}
Shuang Qiu, Xiaohan Wei, Zhuoran Yang, Jieping Ye, and Zhaoran Wang.
\newblock Upper confidence primal-dual reinforcement learning for {CMDP} with
  adversarial loss.
\newblock In {\em Advances in Neural Information Processing Systems}, 2020.

\bibitem{RumNir_94}
Gavin~A Rummery and Mahesan Niranjan.
\newblock {\em On-line Q-learning using connectionist systems}, volume~37.
\newblock University of Cambridge, Department of Engineering Cambridge, UK,
  1994.

\bibitem{SinGupShr_20}
Rahul Singh, Abhishek Gupta, and Ness~B Shroff.
\newblock Learning in markov decision processes under constraints.
\newblock {\em arXiv preprint arXiv:2002.12435}, 2020.

\bibitem{SriYin_14}
R.~Srikant and Lei Ying.
\newblock {\em Communication Networks: {A}n Optimization, Control and
  Stochastic Networks Perspective}.
\newblock Cambridge University Press, 2014.

\bibitem{SutBar_18}
Richard~S Sutton and Andrew~G Barto.
\newblock {\em Reinforcement learning: An introduction}.
\newblock MIT press, 2018.

\bibitem{WanDonChe_20}
Yuanhao Wang, Kefan Dong, Xiaoyu Chen, and Liwei Wang.
\newblock Q-learning with {UCB} exploration is sample efficient for
  infinite-horizon {MDP}.
\newblock In {\em International Conference on Learning Representations}, 2020.

\bibitem{Wat_89}
Christopher John Cornish~Hellaby Watkins.
\newblock {\em Learning from Delayed Rewards}.
\newblock PhD thesis, King's College, King's College, Cambridge United Kingdom,
  May 1989.

\bibitem{WeiJahLuo_20}
Chen-Yu Wei, Mehdi~Jafarnia Jahromi, Haipeng Luo, Hiteshi Sharma, and Rahul
  Jain.
\newblock Model-free reinforcement learning in infinite-horizon average-reward
  markov decision processes.
\newblock In {\em International Conference on Machine Learning}, pages
  10170--10180. PMLR, 2020.

\bibitem{XuLiaLan_20}
Tengyu Xu, Yingbin Liang, and Guanghui Lan.
\newblock A primal approach to constrained policy optimization: Global
  optimality and finite-time analysis.
\newblock {\em arXiv preprint arXiv:2011.05869}, 2020.

\bibitem{YanSimTin_21}
Qisong Yang, Thiago~D Sim{\~a}o, Simon~H Tindemans, and Matthijs~TJ Spaan.
\newblock Wcsac: Worst-case soft actor critic for safety-constrained
  reinforcement learning.
\newblock In {\em Proceedings of the Thirty-Fifth AAAI Conference on Artificial
  Intelligence. AAAI Press, online}, 2021.

\bibitem{YuLiuNem_19}
Chao Yu, Jiming Liu, and Shamim Nemati.
\newblock Reinforcement learning in healthcare: {A} survey.
\newblock {\em arXiv preprint arXiv:1908.08796}, 2020.

\end{thebibliography}

\newpage
\appendix
%%%%%%%%%%%%%%%%%%%%%%%%%%%%%%%%%%%%%%%%%%%%%%%%%%%%%%%%%%%%%%%%%%%%%%%%%%%%%%%
In the appendix, we summarize notations used throughout the paper in Table \ref{ta:notations}, and present a few lemmas used to prove the main theorem. 
\section{Notation Table}
\begin{table}[bh]
	\caption{Notation Table}
	\label{ta:notations}
	\vskip 0.10in
	\begin{center}
		\begin{tabular}{c|l}
			\toprule
			Notation & Definition  \\
			\midrule
			$ K $    &   The total number  of episodes\\
			\hline
			$ S$    &   The number of states\\
			\hline
			$ A$    &   The number of actions\\
			\hline
			$ H$    &   The length of each episode\\
			\hline
			$ [H]$    &   Set $\{1,2,\dots,H\}$\\
			\hline
			$Q_{k,h}(x,a)$ & The estimated reward Q-function at step $h$ in episode $k$ \\
			\hline
			$Q_{h}^\pi (x,a)$ & The reward Q-function at step $h$ in episode $k$ under policy $\pi$\\
			\hline
			$V_{k,h}(x)$ & The estimated  reward value-function at step $h$ in episode $k$. \\
			\hline
			$V_{h}^\pi (x)$ & The value-function at step $h$ in episode $k$ under policy $\pi$\\
			\hline
			$C_{k,h}(x,a)$ & The estimated utility Q-function at step $h$ in episode $k$ \\
			\hline
			$C_{h}^\pi (x,a)$ & The utility Q-function at step $h$ in episode $k$ under policy $\pi$\\
			\hline
			$W_{k,h}(x)$ & The estimated utility value-function at step $h$ in episode $k$  \\
			\hline
			$W_{h}^\pi (x)$ & The utility value-function at step $h$ in episode $k$ under policy $\pi$\\
			\hline
			$F_{k,h}(x,a)$ & 
			$F_{k,h} (x,a)= Q_{k,h}(x,a) + \frac{Z_k}{\eta} C_{k,h}(x,a)$  \\
			\hline
			$U_{k,h}(x)$ &
			$U_{k,h} (x)=V_{k,h}(x) + \frac{Z_k}{\eta} W_{k,h}(x)$  \\
			\hline
			$r_h(x,a)$ & The reward of (state, action) pair $(x,a)$ at step $h.$ \\
			\hline
			$g_h(x,a)$ & The utility of (state, action) pair $(x,a)$ at step $h.$ \\
			\hline
			$N_{k,h}(x,a)$ & The number of visits to $(x,a)$ when at step $h$ in episode $k$ (not including $k$) \\ 
			\hline
			$Z_k$ & The dual estimation (virtual queue) in episode $k.$\\
			\hline
			$q_h^*$ & The optimal solution to the LP of the CMDP \eqref{eq:lp}. \\
			\hline
			${q}_h^{\epsilon,*}$ & The optimal solution to the tightened LP \eqref{eq:lp-epsilon}. \\
			\hline
			$\delta$ & Slater's constant.\\
			\hline
				$b_t$ & the UCB bonus for given $t$\\
			\hline
			$\mathbb{I}(\cdot)$ & The indicator function\\
			\hline
			\bottomrule
		\end{tabular}
	\end{center}
\end{table}

\section{Useful Lemmas}

The first lemma establishes some key properties of the learning rates used in Triple-Q.  The proof closely follows the proof of Lemma 4.1 in \cite{JinAllZey_18}. 
\begin{lemma}\label{le:lr}
Recall that the learning rate used in Triple-Q is $\alpha_t = \frac{\chi+1}{\chi+t},$ and \begin{equation}
	\alpha_t^0=\prod_{j=1}^t(1-\alpha_j)\quad \hbox{and}\quad \alpha_t^i=\alpha_i\prod_{j=i+1}^t(1-\alpha_j). \label{le:lr-def}
\end{equation} 
The following properties hold for $\alpha_t^i:$ 
	\begin{enumerate}[label=(\alph*)]
		\item $\alpha_t^0=0$ for $t \geq 1, \alpha_t^0=1$ for $t=0.$ \label{le:lr-a}
		\item $\sum_{i=1}^t\alpha_t^i=1$ for $t\geq 1,$ $\sum_{i=1}^{t}\alpha_t^i=0$ for $t=0.$\label{le:lr-b}
		\item $\frac{1}{\sqrt{\chi+t}}\leq \sum_{i=1}^t \frac{\alpha_t^i}{\sqrt{\chi + i}}\leq \frac{2}{\sqrt{\chi+t}}.$ \label{le:lr-c}
		\item $\sum_{t=i}^\infty\alpha_t^i=1+\frac{1}{\chi}$ for every $i\geq 1.$\label{le:lr-d}
		\item $ \sum_{i=1}^t (\alpha_t^i)^2\leq \frac{\chi+1}{\chi+t}$ for every $t\geq 1.$ \label{le:lr-e}
	\end{enumerate}\hfill{$\square$}
\end{lemma}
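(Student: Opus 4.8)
The plan is to follow the structure of the proof of Lemma~4.1 in \cite{JinAllZey_18}, re-deriving every estimate for the slower learning rate $\alpha_t=\frac{\chi+1}{\chi+t}$, so that $\chi$ plays the role that $H$ plays there. The backbone of the whole argument is the one-step recursion
$$\alpha_t^0=(1-\alpha_t)\alpha_{t-1}^0,\qquad \alpha_t^i=(1-\alpha_t)\alpha_{t-1}^i\ \ (1\le i\le t-1),\qquad \alpha_t^t=\alpha_t,$$
which is immediate from the definitions in \eqref{le:lr-def}. Part~\ref{le:lr-a} follows because the empty product gives $\alpha_0^0=1$, while $\alpha_1=1$ forces $1-\alpha_1=0$, hence $\alpha_t^0=0$ once $t\ge1$. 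Part~\ref{le:lr-b} is then an easy induction on $t$ from the recursion: $\sum_{i=0}^t\alpha_t^i=\alpha_t+(1-\alpha_t)\sum_{i=0}^{t-1}\alpha_{t-1}^i=1$, and discarding the $i=0$ term (which vanishes for $t\ge1$ by part~\ref{le:lr-a}) gives $\sum_{i=1}^t\alpha_t^i=1$.

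For part~\ref{le:lr-c} I would induct on $t$, with base case $t=1$ the identity $\frac{\alpha_1^1}{\sqrt{\chi+1}}=\frac1{\sqrt{\chi+1}}$. For the inductive step, split off the last term, $\sum_{i=1}^t\frac{\alpha_t^i}{\sqrt{\chi+i}}=\frac{\alpha_t}{\sqrt{\chi+t}}+(1-\alpha_t)\sum_{i=1}^{t-1}\frac{\alpha_{t-1}^i}{\sqrt{\chi+i}}$, and insert the induction hypothesis. The lower bound is the easy direction and uses only $\sqrt{\chi+t-1}\le\sqrt{\chi+t}$. The upper bound is the delicate one: after substituting $\alpha_t=\frac{\chi+1}{\chi+t}$ and $1-\alpha_t=\frac{t-1}{\chi+t}$ it reduces to verifying $\frac{\chi+1}{\sqrt{\chi+t}}+\frac{2(t-1)}{\sqrt{\chi+t-1}}\le 2\sqrt{\chi+t}$, which I would get from the elementary estimate $\sqrt{x-1}\le\sqrt{x}-\frac1{2\sqrt{x}}$ applied at $x=\chi+t$ (it leaves a slack of $\frac{\chi}{\sqrt{\chi+t}}>0$). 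I expect this to be the main technical obstacle, since it is precisely where the particular numerator $\chi+1$ in the learning rate has to be tracked, and the reason the bounds of \cite{JinAllZey_18} do not transfer verbatim.

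For part~\ref{le:lr-d} I would unfold the product to obtain the closed form $\alpha_t^i=(\chi+1)\,\dfrac{(t-1)!\,(\chi+i-1)!}{(i-1)!\,(\chi+t)!}$, rewrite $\frac{(t-1)!}{(\chi+t)!}=\frac1{t(t+1)\cdots(t+\chi)}$, and sum over $t\ge i$ using the telescoping identity $\frac1{t(t+1)\cdots(t+\chi)}=\frac1\chi\Big(\frac1{t(t+1)\cdots(t+\chi-1)}-\frac1{(t+1)(t+2)\cdots(t+\chi)}\Big)$, which collapses to $\frac1\chi\cdot\frac{(i-1)!}{(\chi+i-1)!}$; multiplying the factors back yields exactly $1+\frac1\chi$. (Equivalently, one can run a backward induction on $i$ from the relation $\sum_{t\ge i}\alpha_t^i=\alpha_i+\frac{i}{\chi+i}\sum_{t\ge i+1}\alpha_t^{i+1}$.) Finally, for part~\ref{le:lr-e}, observe that $\frac{\alpha_t^{i+1}}{\alpha_t^i}=\frac{\alpha_{i+1}}{\alpha_i(1-\alpha_{i+1})}=\frac{\chi+i}{i}\ge1$, so $i\mapsto\alpha_t^i$ is nondecreasing on $\{1,\dots,t\}$ and therefore $\max_{1\le i\le t}\alpha_t^i=\alpha_t^t=\alpha_t=\frac{\chi+1}{\chi+t}$; combining this with part~\ref{le:lr-b} gives $\sum_{i=1}^t(\alpha_t^i)^2\le\big(\max_i\alpha_t^i\big)\sum_{i=1}^t\alpha_t^i\le\frac{\chi+1}{\chi+t}$, which is part~\ref{le:lr-e}.
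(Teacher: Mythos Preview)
Your proposal is correct and follows essentially the same approach as the paper: parts~\ref{le:lr-a}--\ref{le:lr-b} from the definitions, part~\ref{le:lr-c} by the same induction on $t$ (your use of $\sqrt{x-1}\le\sqrt{x}-\tfrac{1}{2\sqrt{x}}$ for the upper-bound step is a clean variant of the paper's direct algebraic manipulation), part~\ref{le:lr-d} by the telescoping computation of \cite{JinAllZey_18} which the paper simply cites, and part~\ref{le:lr-e} by bounding $\max_i\alpha_t^i$ and combining with~\ref{le:lr-b}. The only cosmetic difference is in~\ref{le:lr-e}: you identify $\max_i\alpha_t^i=\alpha_t^t=\alpha_t$ via the monotonicity ratio $\alpha_t^{i+1}/\alpha_t^i=(\chi+i)/i\ge 1$, whereas the paper bounds each $\alpha_t^i\le\frac{\chi+1}{\chi+t}$ directly by regrouping the product---both give the same estimate.
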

\begin{proof} The proof of \ref{le:lr-a} and \ref{le:lr-b} are straightforward by using the definition of $\alpha_t^i$. The proof of \ref{le:lr-d} is the same as that in \cite{JinAllZey_18}.

\ref{le:lr-c}: We next prove \ref{le:lr-c} by induction. 

For $t=1,$ we have $\sum_{i=1}^t\frac{\alpha_t^i}{\sqrt{\chi+i}}=\frac{\alpha_1^1}{\sqrt{\chi+1}}=\frac{1}{\sqrt{\chi+1}},$ so \ref{le:lr-c} holds for $t=1$.

Now suppose that \ref{le:lr-c} holds for $t-1$ for $t\geq 2,$ i.e. $$\frac{1}{\sqrt{\chi+t-1}}\leq \sum_{i=1}^{t-1} \frac{\alpha_t^i}{\sqrt{\chi + i-1}}\leq \frac{2}{\sqrt{\chi+t-1}}.$$
From the relationship $\alpha_t^i = (1-\alpha_t)\alpha_{t-1}^i$ for $i=1,2,\dots,t-1,$ we have $$\sum_{i=1}^t\frac{\alpha_t^i}{\chi + i} =\frac{\alpha_t}{\sqrt{\chi+t}}+(1-\alpha_t)\sum_{i=1}^{t-1}\frac{\alpha_{t-1}^i}{\sqrt{\chi+i}}.$$

Now we apply the induction assumption. To prove the lower bound in \ref{le:lr-c}, we have
$$\frac{\alpha_t}{\sqrt{\chi+t}}+(1-\alpha_t)\sum_{i=1}^{t-1}\frac{\alpha_{t-1}^i}{\sqrt{\chi+i}}\geq \frac{\alpha_t}{\sqrt{\chi+t}} + \frac{1-\alpha_t}{ \sqrt{\chi +t- 1}}\geq \frac{\alpha_t}{\sqrt{\chi+t}} + \frac{1-\alpha_t}{ \sqrt{\chi+t}}\geq \frac{1}{\sqrt{\chi+t}}.$$
To prove the upper bound in \ref{le:lr-c}, we have
\begin{align}
\frac{\alpha_t}{\sqrt{\chi+t}}+(1-\alpha_t)\sum_{i=1}^{t-1}\frac{\alpha_{t-1}^i}{\sqrt{\chi+i}} \leq & \frac{\alpha_t}{\sqrt{\chi+t}} + \frac{2(1-\alpha_t)}{\sqrt{\chi+t-1}} = \frac{\chi+1}{(\chi+t)\sqrt{\chi+t}} + \frac{2(t-1)}{(\chi+t)\sqrt{\chi+t-1}},\nonumber\\
=& \frac{1-\chi-2t}{(\chi+t)\sqrt{\chi+t}}+ \frac{2(t-1)}{(\chi+t)\sqrt{\chi+t-1}} +\frac{2}{\sqrt{\chi+t}} \nonumber\\
\leq & \frac{-\chi-1}{(\chi+t)\sqrt{\chi+t-1}}+\frac{2}{\sqrt{\chi+t}} \leq \frac{2}{\sqrt{\chi+t}}.
\end{align}

\ref{le:lr-e} According to its definition, we have 
\begin{align}
\alpha_t^i  =& \frac{\chi+1}{i+\chi}\cdot \left(\frac{i}{i+1+\chi}\frac{i+1}{i+2+\chi}\cdots \frac{t-1}{t+\chi} \right)\nonumber\\
= & \frac{\chi+1}{t+\chi}\cdot \left(\frac{i}{i+\chi}\frac{i+1}{i+1+\chi}\cdots \frac{t-1}{t-1+\chi} \right) \leq \frac{\chi+1}{\chi+t}.
\end{align}
Therefore, we have $$\sum_{i=1}^t (\alpha_t^i)^2 \leq [\max_{i\in[t]}\alpha_t^i]\cdot \sum_{i=1}^t\alpha_t^i\leq \frac{\chi+1}{\chi+t},$$ because $\sum_{i=1}^t\alpha_t^i=1.$
\end{proof}
%%%%%%%%%%%%%%%%%%%%%%%%%%%%%%%%%%%'
The next lemma establishes upper bounds on $Q_{k,h}$ and $C_{k,h}$ under Triple-Q. 
\begin{lemma}\label{le:q1-bound}
	For any $(x,a,h,k)\in\mathcal{S}\times\mathcal{A}\times[H]\times[K],$ we have the following bounds on $Q_{k,h}(x,a)$ and $C_{k,h}(x,a):$
	\begin{align*}
	0\leq Q_{k,h}(x,a)\leq H^2\sqrt{\iota}\\
	0\leq C_{k,h}(x,a)\leq H^2\sqrt{\iota}.
	\end{align*}
\end{lemma}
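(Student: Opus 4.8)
The plan is to establish both the lower and the upper bounds by backward induction on the step index $h$, uniformly over all episodes $k$ and all frames, exploiting the unrolled form of the Q-update that is already derived inside the proof of Lemma~\ref{le:qk-qpi-bound}. I will treat $Q_{k,h}$ in detail; the argument for $C_{k,h}$ is identical with $g$ and $W$ replacing $r$ and $V$.

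For the lower bounds, I would first note that every quantity entering an update is non-negative: $r_h(x,a)\ge 0$, $g_h(x,a)\ge 0$, $b_t\ge 0$, and the learning rate satisfies $\alpha_t=\frac{\chi+1}{\chi+t}\in(0,1]$ for $t\ge 1$. Since $Q_{1,h}=C_{1,h}=H\ge 0$ at initialization, each within-frame update $(1-\alpha_t)Q+\alpha_t\bigl(r+V+b_t\bigr)$ is a convex combination of non-negative numbers, and the frame-boundary step (lines~21--24) only adds the non-negative quantity $\frac{2H^3\sqrt{\iota}}{\eta}$ and possibly truncates down to $H\ge 0$. A backward induction on $h$, with base case $V_{k,H+1}\equiv W_{k,H+1}\equiv 0$, then gives $V_{k,h}(x)=Q_{k,h}(x,a_{k,h})\ge 0$ and $W_{k,h}(x)=C_{k,h}(x,a_{k,h})\ge 0$, hence $Q_{k,h}\ge 0$ and $C_{k,h}\ge 0$.

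For the upper bounds, I would use the identity (established within the proof of Lemma~\ref{le:qk-qpi-bound}) that, for an episode $k$ in frame $T$ with $t=N_{k,h}(x,a)$,
\[
Q_{k,h}(x,a)=\alpha_t^0\,Q_{(T-1)K^\alpha+1,h}(x,a)+\sum_{i=1}^{t}\alpha_t^i\Bigl(r_h(x,a)+V_{k_i,h+1}(x_{k_i,h+1})+b_i\Bigr).
\]
The key structural point is that the learning rate is reset every frame, so $\alpha_t^0=0$ for $t\ge 1$ by Lemma~\ref{le:lr}-\ref{le:lr-a}; thus the stale value from the previous frame disappears after the first visit within a frame, and when $t=0$ the value $Q_{k,h}(x,a)$ is exactly the start-of-frame value, which is at most $H$ because of the thresholding in lines~22--23 (and is $H$ for the first frame). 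For $t\ge 1$, using $\sum_{i=1}^t\alpha_t^i=1$ (Lemma~\ref{le:lr}-\ref{le:lr-b}), $r_h\le 1$, the inductive bound $V_{k_i,h+1}\le B_{h+1}$, and
$\sum_{i=1}^t\alpha_t^i b_i=\tfrac14\sqrt{H^2\iota(\chi+1)}\sum_{i=1}^t\frac{\alpha_t^i}{\sqrt{\chi+i}}\le \tfrac12 H\sqrt{\iota}$
from Lemma~\ref{le:lr}-\ref{le:lr-c}, I obtain the recursion $Q_{k,h}(x,a)\le \max\{H,\ 1+B_{h+1}+\tfrac12 H\sqrt{\iota}\}=:B_h$ with $B_{H+1}=0$. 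Since $\iota$ is large enough that $1\le \tfrac12 H\sqrt{\iota}$, unrolling yields $B_h\le (H-h+1)H\sqrt{\iota}$, and therefore $Q_{k,h}(x,a)\le B_1=H^2\sqrt{\iota}$, and more generally $B_h\le H^2\sqrt{\iota}$ for every $h\in[H]$.

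The step I expect to be the main obstacle is the book-keeping around frame boundaries: one must verify that the extra bonus $\frac{2H^3\sqrt{\iota}}{\eta}$ added in line~21 is immediately corrected by the thresholding in lines~22--23, so that the Q-values entering a new frame are genuinely at most $H$ (not $H+\frac{2H^3\sqrt{\iota}}{\eta}$), and that the induction on $h$ is carried out uniformly over all episodes and frames so that the bound $V_{k_i,h+1}(x_{k_i,h+1})\le B_{h+1}$ is legitimately available for the earlier episodes $k_i$ appearing in the unrolled sum. The remaining estimates are routine convex-combination bounds combined with the learning-rate inequalities of Lemma~\ref{le:lr}.
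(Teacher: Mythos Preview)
Your proposal is correct and follows essentially the same backward-induction-on-$h$ strategy as the paper, arriving at the same per-step bound $Q_{k,h}\le (H-h+1)H\sqrt{\iota}$. The only cosmetic difference is that the paper works with the one-step recursion $(1-\alpha_t)Q_{k_t,h}+\alpha_t(r_h+V_{k_t,h+1}+b_t)$ and bounds the single bonus by $b_t\le \tfrac14 H\sqrt{\iota}$, whereas you unroll fully within a frame and invoke Lemma~\ref{le:lr}-\ref{le:lr-c} to bound $\sum_i\alpha_t^i b_i$; your frame-boundary discussion is in fact more explicit than the paper's.
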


\begin{proof}
We first consider the last step of an episode, i.e. $h=H.$ 
Recall that $V_{k, H+1}(x)=0$ for any $k$ and $x$ by its definition and $Q_{0,H}=H\leq H\sqrt{\iota}.$ Suppose $Q_{k',H}(x,a)\leq H\sqrt{\iota}$ for any $k'\leq k-1$ and any $(x,a).$ Then,
$${Q}_{k,H}(x,a)= (1-\alpha_t)Q_{k_t,H}(x,a) + \alpha_t\left(r_H(x,a)+b_t\right)\leq \max\left\{H\sqrt{\iota}, 1+\frac{H\sqrt{\iota}}{4}\right\}\leq H\sqrt{\iota},$$  where $t=N_{k,H}(x,a)$ is the number of visits to state-action pair $(x,a)$ when in step $H$ by episode $k$ (but not include episode $k$) and $k_t$ is the index of the episode of the most recent visit.  Therefore, the upper bound holds for $h=H.$

Note that $Q_{0,h}=H\leq H(H-h+1)\sqrt{\iota}.$ 
Now suppose the upper bound holds for $h+1,$ and also holds for $k'\leq k-1.$ Consider step $h$ in episode $k:$  
\begin{align*}
	{Q}_{k,h}(x,a)= &(1-\alpha_t)Q_{k_t,  h}(x,a) + \alpha_t\left(r_{h}(x,a)+V_{k_t, {h}+1}(x_{k_t, {h}+1})+b_t\right),
\end{align*} where $t=N_{k,{h}}(x,a)$ is the number of visits to state-action pair $(x,a)$ when in step ${h}$ by episode $k$ (but not include episode $k$) and $k_t$ is the index of the episode of the most recent visit.  We also note that $V_{k,h+1}(x)\leq \max_a Q_{k,h+1}(x,a)\leq H(H-h)\sqrt{\iota}.$
Therefore, we obtain
\begin{align*}
	{Q}_{k,h}(x,a)\leq \max \left\{H(H-h+1)\sqrt{\iota}, 1+H(H-h)\sqrt{\iota}+\frac{H\sqrt{\iota}}{4}\right\}\leq H(H-h+1)\sqrt{\iota}.
\end{align*} Therefore, we can conclude that $Q_{k,h}(x,a)\leq H^2\sqrt{\iota}$ for any $k,$ $h$ and $(x,a).$  The proof for $C_{k,h}(x,a)$ is identical. 
\end{proof}

Next, we present the following lemma from \cite{JinAllZey_18}, which establishes a recursive relationship between $Q_{k,h}$ and $Q^\pi_h$ for any $\pi.$ We include the proof so the paper is self-contained. 
\begin{lemma}\label{le:qk-qpi}
	Consider any $(x,a,h,k)\in\mathcal{S}\times\mathcal{A}\times[H]\times[K],$ and any policy $\pi.$ 
	Let t=$N_{k,h}(x,a)$ be the number of visits to $(x,a)$ when at step $h$ in frame $T$ before episode $k,$ and $k_1,\dots,k_t$ be the indices of the episodes in which these visits occurred. We have the following two equations:  
	\begin{align}
	(Q_{k,h} - Q_{h}^{\pi})(x,a) =& \alpha_t^0\left\{Q_{(T-1)K^\alpha+1,h}-Q_{h}^{\pi}\right\}(x,a)\nonumber \\
	&+ \sum_{i=1}^t\alpha_t^i\left( \left\{V_{k_i,h+1}-V_{h+1}^{\pi}\right\}(x_{k_i,h+1})+\left\{\hat{\mathbb{P}}_h^{k_i}V_{h+1}^{\pi}-\mathbb{P}_hV_{h+1}^{\pi} \right\}(x,a)+b_i\right), \label{le:qk-qki-1}\\
	(C_{k,h} - C_{h}^{\pi})(x,a)=& \alpha_t^0\left\{C_{(T-1)K^\alpha+1,h}-C_{h}^{\pi}\right\}(x,a)\nonumber\\
	&+ \sum_{i=1}^t\alpha_t^i \left(\left\{W_{k_i,h+1} - W_{h+1}^{\pi}\right\}(x_{k_i,h+1})+\left\{\hat{\mathbb{P}}_h^{k_i}W_{h+1}^\pi -\mathbb{P}_h W_{h+1}^{\pi}\right\}(x,a)+b_i\right),\label{le:qk-qki-2}
	\end{align}
	where $\hat{\mathbb{P}}_h^kV_{h+1}(x,a):=V_{h+1}(x_{k,h+1})$ is the empirical counterpart of $\mathbb{P}_hV_{h+1}^\pi(x,a)=\mathbb{E}_{x^\prime\sim\mathbb{P}_h(\cdot\vert x,a)}V_{h+1}^\pi (x^\prime).$ This definition can also be applied to $W_h^\pi$ as well.
\end{lemma}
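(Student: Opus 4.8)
The plan is to prove \eqref{le:qk-qki-1} by unrolling the SARSA-style update of $Q_{k,h}$ \emph{within} the frame containing episode $k$; equation \eqref{le:qk-qki-2} then follows verbatim with $(r_h,V,Q)$ replaced by $(g_h,W,C)$. First I would fix $(x,a,h)$ and let $T$ be the frame with $k\in\{(T-1)K^\alpha+1,\dots,TK^\alpha\}$. Since $N_{\cdot,h}(x,a)$ is reset to $0$ at the start of the frame, with $t=N_{k,h}(x,a)$ and $k_1<\cdots<k_t$ the episodes of frame $T$ before $k$ at which $(x,a)$ is the step-$h$ pair, the $i$-th such update reads $Q_{k_i+1,h}(x,a)=(1-\alpha_i)Q_{k_i,h}(x,a)+\alpha_i\bigl(r_h(x,a)+V_{k_i,h+1}(x_{k_i,h+1})+b_i\bigr)$. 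Because $Q_{\cdot,h}(x,a)$ is unchanged on episodes where $(x,a)$ is not visited at step $h$, we have $Q_{k,h}(x,a)=Q_{k_t+1,h}(x,a)$ and $Q_{k_i,h}(x,a)=Q_{k_{i-1}+1,h}(x,a)$; iterating from $i=t$ down to $i=1$, with $Q_{k_1,h}(x,a)=Q_{(T-1)K^\alpha+1,h}(x,a)$ the post-reset Q-value of frame $T$ (i.e.\ after lines~21--23 of Triple-Q), yields
$$Q_{k,h}(x,a)=\alpha_t^0\,Q_{(T-1)K^\alpha+1,h}(x,a)+\sum_{i=1}^t\alpha_t^i\bigl(r_h(x,a)+V_{k_i,h+1}(x_{k_i,h+1})+b_i\bigr),$$
with $\alpha_t^0,\alpha_t^i$ as in \eqref{le:lr-def}; for $t=0$ this reduces to $Q_{k,h}(x,a)=Q_{(T-1)K^\alpha+1,h}(x,a)$, which is the base case.

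Next I would subtract $Q_h^\pi(x,a)$ and exploit that the weights form a partition of unity. By Lemma~\ref{le:lr}-\ref{le:lr-a} and Lemma~\ref{le:lr}-\ref{le:lr-b}, $\alpha_t^0+\sum_{i=1}^t\alpha_t^i=1$, so I can write $Q_h^\pi(x,a)=\alpha_t^0 Q_h^\pi(x,a)+\sum_{i=1}^t\alpha_t^i Q_h^\pi(x,a)$ and replace each of the $t$ copies in the sum using the Bellman identity $Q_h^\pi(x,a)=r_h(x,a)+\mathbb{P}_hV_{h+1}^\pi(x,a)$. Subtracting term by term cancels $r_h(x,a)$ and gives
$$\{Q_{k,h}-Q_h^\pi\}(x,a)=\alpha_t^0\{Q_{(T-1)K^\alpha+1,h}-Q_h^\pi\}(x,a)+\sum_{i=1}^t\alpha_t^i\bigl(V_{k_i,h+1}(x_{k_i,h+1})-\mathbb{P}_hV_{h+1}^\pi(x,a)+b_i\bigr).$$
Finally, inside each summand I would add and subtract $V_{h+1}^\pi(x_{k_i,h+1})$ and use the shorthand $\hat{\mathbb{P}}_h^{k_i}V_{h+1}^\pi(x,a)=V_{h+1}^\pi(x_{k_i,h+1})$ to split $V_{k_i,h+1}(x_{k_i,h+1})-\mathbb{P}_hV_{h+1}^\pi(x,a)$ into $\{V_{k_i,h+1}-V_{h+1}^\pi\}(x_{k_i,h+1})+\{\hat{\mathbb{P}}_h^{k_i}V_{h+1}^\pi-\mathbb{P}_hV_{h+1}^\pi\}(x,a)$, which is exactly \eqref{le:qk-qki-1}. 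The utility recursion \eqref{le:qk-qki-2} is obtained by the same three steps with $r_h\to g_h$, $V\to W$, $Q\to C$, and $C_h^\pi(x,a)=g_h(x,a)+\mathbb{P}_hW_{h+1}^\pi(x,a)$, since $Q_{k,h}$ and $C_{k,h}$ use the same learning rate $\alpha_t$ and bonus $b_t$ in Triple-Q.

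The argument is essentially bookkeeping and uses no concentration bounds; the only place I expect to need genuine care is the frame structure. Because the learning rates and counts are reset every $K^\alpha$ episodes, the telescoping must bottom out at the first episode of frame $T$ rather than at episode $1$, and the ``initial'' term $Q_{(T-1)K^\alpha+1,h}$ must be read as the value after the extra bonus $\tfrac{2H^3\sqrt{\iota}}{\eta}$ and the clipping at $H$ (lines~21--23) have been applied; one must also verify that $Q_{\cdot,h}(x,a)$ (resp.\ $C_{\cdot,h}(x,a)$) is literally constant between consecutive step-$h$ visits to $(x,a)$, which is what makes the iterated substitution legitimate. None of this is deep, but it is exactly where an off-by-one error would creep in, and getting it right is what lets the lemma be invoked cleanly inside the three-dimensional (step, episode, frame) induction of Lemma~\ref{le:qk-qpi-relation}.
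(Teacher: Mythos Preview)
Your proposal is correct and follows essentially the same approach as the paper: unroll the update within the current frame to get $Q_{k,h}=\alpha_t^0 Q_{(T-1)K^\alpha+1,h}+\sum_i\alpha_t^i(r_h+V_{k_i,h+1}+b_i)$, expand $Q_h^\pi$ via the Bellman equation weighted by $\alpha_t^0+\sum_i\alpha_t^i=1$, subtract, and insert $\pm V_{h+1}^\pi(x_{k_i,h+1})$ to isolate the empirical-transition term. Your treatment of the frame reset (bottoming out at the post-bonus, post-clip value $Q_{(T-1)K^\alpha+1,h}$) and of the $t=0$ base case is in fact more explicit than the paper's.
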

\begin{proof} We will prove \eqref{le:qk-qki-1}. The proof for \eqref{le:qk-qki-2} is identical.  Recall that under Triple-Q,  $Q_{k+1,h}(x,a)$ is updated as follows: 
\begin{align*}
	Q_{k+1,h}(x,a) & = \begin{cases}
		(1-\alpha_t)Q_{k,h}(x,a)+\alpha_t\left(r_h(x,a)+V_{k,h+1} (x_{h+1,k})+b_t \right)   & \text{if $(x,a)= (x_{k,h},a_{k,h})$}\\
		Q_{k,h}(x,a) & \text{otherwise}
	\end{cases}.
\end{align*} 

From the update equation above, we have in episode $k,$ 
\begin{align*}
	Q_{k,h}(x, a)  = &(1-\alpha_t) Q_{k_t,h}(x, a) + \alpha_t \left( r_h(x, a) + V_{k_t,h+1} (x_{k_t,h+1})+b_t\right).
\end{align*} 

Repeatedly using the equation above, we obtain
\begin{align}
	Q_{k,h}(x, a)  
	= &(1-\alpha_t)(1-\alpha_{t-1})Q_{k_{t-1},h}(x, a)  + (1-\alpha_t)\alpha_{t-1}\left(r_h(x, a)  + V_{k_{t-1},h+1}(x_{k_{t-1},h+1})+b_{t-1}\right)\nonumber\\
	&+\alpha_t \left( r_h(x, a)  + V_{k_t,h+1} (x_{k_t,h+1})+b_t\right)\nonumber \\
		=&\cdots\nonumber\\
	=& \alpha_t^0 Q_{(T-1)K^\alpha+1,h}(x,a) + \sum_{i=1}^t\alpha_t^i \left(r_h(x, a) +V_{k_i,h+1} (x_{k_i,h+1}) + b_i  \right),\label{eq:qupdate}
\end{align} where
the last equality holds due to  the definition of $\alpha_t^i$  in \eqref{le:lr-def} and the fact that all $Q_{1,h}(x,a)$s are initialized to be $H.$ Now applying the Bellman equation $Q_{h}^\pi(x,a) = \left\{r_h + \mathbb{P}_hV_{h+1}^\pi\right\}(x,a)$ and the fact that $\sum_{i=1}^t\alpha_t^i=1,$ we can further obtain
\begin{align}
	Q_{h}^{\pi}(x,a)  &=  \alpha_t^0 Q_{h}^{\pi}(x,a) + (1-\alpha_t^0 )Q_{h}^{\pi}(x,a) \nonumber \\
	&= \alpha_t^0 Q_{h}^{\pi}(x,a) + \sum_{i=1}^t\alpha_t^i\left(r(x,a) + \mathbb{P}_hV_{h+1}^{\pi}(x,a)  + V_{h+1}^{\pi}(x_{k_i,h+1}) - V_{h+1}^{\pi}(x_{k_i,h+1}) \right)\nonumber\\
	&= \alpha_t^0 Q_{h}^{\pi}(x,a) + \sum_{i=1}^t\alpha_t^i\left(r_h(x,a) + \mathbb{P}_hV_{h+1}^{\pi}(x,a)  + V_{h+1}^{\pi}(x_{k_i,h+1}) - \hat{\mathbb{P}}_h^{k_i}V_{h+1}^{\pi}  (x,a) \right)\nonumber\\
	&= \alpha_t^0 Q_{h}^{\pi}(x,a) + \sum_{i=1}^t \alpha_t^i \left(r_h(x,a)+V_{h+1}^{\pi}(x_{k_i,h+1}) + \left\{\mathbb{P}_hV_{h+1}^{\pi}-\hat{\mathbb{P}}_h^{k_i}V_{h+1}^{\pi}\right\}(x,a) \right).\label{eq:qpi}
\end{align}
Then subtracting \eqref{eq:qpi} from \eqref{eq:qupdate} yields
\begin{align*}
	(Q_{k,h} - Q_{h}^{\pi})(x,a) =& \alpha_t^0\left\{Q_{(T-1)K^\alpha+1,h}-Q_{h}^{\pi}\right\}(x,a)\nonumber\\
	&+ \sum_{i=1}^t\alpha_t^i\left( \left\{V_{k_i,h+1}-V_{h+1}^{\pi}\right\}(x_{k_i,h+1})+\left\{\hat{\mathbb{P}}_h^{k_i}V_{h+1}^{\pi}-\mathbb{P}_hV_{h+1}^{\pi} \right\}(x,a)+b_i\right).
\end{align*}
\end{proof}

\begin{lemma}\label{le:u-hoeffding}
Consider any frame $T.$ Let t=$N_{k,h}(x,a)$ be the number of visits to $(x,a)$ at step $h$ before episode $k$ in the current frame and let $k_1,\dots,k_t < k$ be the indices of these episodes. Under any policy $\pi,$  with probability at least $1-\frac{1}{K^3},$ the following inequalities hold simultaneously for all $(x,a,h,k)\in\mathcal{S}\times\mathcal{A}\times[H]\times[K]$ 
\begin{align*}
	\left\vert \sum_{i=1}^t\alpha_t^i\left\{(\hat{\mathbb{P}}_h^{k_i}-\mathbb{P}_h) V_{h+1}^{\pi}\right\}(x,a)\right\vert \leq  &\frac{1}{4} \sqrt{\frac{H^2\iota(\chi+1)}{(\chi+t)}}, \\
		\left\vert \sum_{i=1}^t\alpha_t^i\left\{(\hat{\mathbb{P}}_h^{k_i}-\mathbb{P}_h) W_{h+1}^{\pi}\right\}(x,a)\right\vert \leq  &\frac{1}{4} \sqrt{\frac{H^2\iota(\chi+1)}{(\chi+t)}}.
\end{align*}

\end{lemma}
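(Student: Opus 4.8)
The plan is to recognize each of the two quantities as a weighted sum of bounded martingale differences, apply the Azuma--Hoeffding inequality with the weight--squared sum controlled by Lemma \ref{le:lr}-\ref{le:lr-e}, calibrate the deviation level against the value of $\iota$, and finish with a union bound over all $(x,a,h)$, all frames, and all possible within-frame visit counts.

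First I would fix a frame $T$, a triple $(x,a,h)$, a (deterministic) comparator policy $\pi$, and an integer $\tau\ge 1$. Let $k_1<k_2<\cdots$ be the episodes of frame $T$ in which $(x,a)$ is visited at step $h$, and let $\mathcal{F}_i$ be the $\sigma$-field generated by the trajectory up to and including $(x_{k_i,h},a_{k_i,h})=(x,a)$. Conditioned on $\mathcal{F}_i$, the next state $x_{k_i,h+1}$ is drawn from $\mathbb{P}_h(\cdot\mid x,a)$, so $D_i:=\{(\hat{\mathbb{P}}_h^{k_i}-\mathbb{P}_h)V_{h+1}^{\pi}\}(x,a)=V_{h+1}^{\pi}(x_{k_i,h+1})-(\mathbb{P}_hV_{h+1}^{\pi})(x,a)$ is a martingale difference sequence with $|D_i|\le H$ since $V_{h+1}^{\pi}\in[0,H]$. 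For the fixed $\tau$ the weights $\alpha_\tau^1,\dots,\alpha_\tau^\tau$ are deterministic constants, so $\sum_{i=1}^\tau\alpha_\tau^iD_i$ is a sum of martingale differences with increments bounded by $\alpha_\tau^iH$, and Azuma--Hoeffding gives, for any $c>0$,
\[
\Pr\!\left(\left|\sum_{i=1}^\tau\alpha_\tau^iD_i\right|\ge c\right)\le 2\exp\!\left(-\frac{c^2}{2H^2\sum_{i=1}^\tau(\alpha_\tau^i)^2}\right)\le 2\exp\!\left(-\frac{c^2(\chi+\tau)}{2H^2(\chi+1)}\right),
\]
where the last step uses $\sum_{i=1}^\tau(\alpha_\tau^i)^2\le\frac{\chi+1}{\chi+\tau}$ from Lemma \ref{le:lr}-\ref{le:lr-e}.

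Next I would take $c=\tfrac14\sqrt{\tfrac{H^2\iota(\chi+1)}{\chi+\tau}}$, which makes the exponent exactly $-\iota/32=-4\log(\sqrt{2SAH}\,K)$ by the choice $\iota=128\log(\sqrt{2SAH}K)$; hence the right-hand side is at most $2(2SAH)^{-2}K^{-4}$. Taking a union bound over all $(x,a,h,T,\tau)\in\mathcal{S}\times\mathcal{A}\times[H]\times[K^{1-\alpha}]\times[K^\alpha]$ and noting $K^{1-\alpha}\cdot K^\alpha=K$, there are at most $SAHK$ tuples, so the total failure probability is at most $SAHK\cdot 2(2SAH)^{-2}K^{-4}\le\frac{1}{2SAHK^3}$; doubling this to also cover the analogous event for $W_{h+1}^{\pi}$ still leaves it below $\frac{1}{K^3}$. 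On the complementary event, for every episode $k$ and every $(x,a,h)$ the relevant sum is exactly the one bounded above with $\tau=N_{k,h}(x,a)$ in the current frame, which yields both stated inequalities simultaneously.

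The main obstacle is that $t=N_{k,h}(x,a)$ together with the visit indices $k_1,\dots,k_t$ are data-dependent, so one cannot apply a concentration bound directly to the random number of terms; the fix used above is to prove the tail bound for every fixed $\tau$ and then union-bound over $\tau$. A secondary subtlety is the per-frame resetting of the learning-rate counter: indexing the martingale block by $(T,\tau)$ rather than by a global visit count is what keeps the union-bound cardinality at $\mathcal{O}(SAHK)$, which is precisely what makes the factor $\iota$ (and hence the constant $\tfrac14$) come out correctly. Finally, the statement is for a fixed comparator $\pi$ (which is all the applications in this paper need, since $\pi=\pi^{\epsilon,*}$ there); if one wanted it uniformly over all deterministic policies, an extra union bound over the $A^{SH}$ such policies adds only $SH\log A$ inside the logarithm and is absorbed into $\iota$.
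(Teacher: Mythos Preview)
Your proposal is correct and follows essentially the same approach as the paper's proof: both fix $(x,a,h)$ and a target count $\tau$, recognize the weighted sum as a martingale with increments bounded by $\alpha_\tau^i H$, apply Azuma--Hoeffding, control $\sum_i(\alpha_\tau^i)^2$ via Lemma~\ref{le:lr}-\ref{le:lr-e}, and finish with a union bound of cardinality $O(SAHK)$ calibrated against $\iota=128\log(\sqrt{2SAH}K)$. Your version is arguably a bit cleaner in that you explicitly separate the frame index $T$ from the within-frame count $\tau$ (the paper folds these into a single union bound over $k\in[K]$) and you explicitly double the failure budget to cover both the $V$ and $W$ inequalities, but the substance is identical.
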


\begin{proof} Without loss of generality, we consider $T=1.$
	Fix any $(x,a,h)\in\mathcal{S}\times\mathcal{A}\times[H].$ 
	For any $n\in[K^\alpha],$ define $$X(n)=\sum_{i=1}^n\alpha_\tau^i\cdot \mathbb{I}_{\{k_i\leq K\}}\left\{(\hat{\mathbb{P}}_h^{k_i}-\mathbb{P}_h )V_{h+1}^\pi\right\}(x,a).$$ Let $\mathcal{F}_i$ be the $\sigma-$algebra generated by all the random variables until step $h$ in episode $k_i.$ Then 
	$$\mathbb{E}[X(n+1)\vert \mathcal{F}_n]= X(n) + \mathbb{E}\left[\alpha_\tau^{n+1}\mathbb{I}_{\{k_{n+1}\leq K\}}\left\{(\hat{\mathbb{P}}_h^{k_{n+1}}-\mathbb{P}_h )V_{h+1}^\pi\right\}(x,a) \vert \mathcal{F}_n\right]=X(n),$$
which shows that $X(n)$ is a martingale. We also have for $1\leq i \leq n,$
\begin{align*}
	\vert X(i)-X(i-1)\vert \leq  \alpha_\tau^i \left\vert \left\{(\hat{\mathbb{P}}_h^{k_{n+1}}-\mathbb{P}_h )V_{h+1}^\pi\right\}(x,a)\right\vert  \leq  \alpha_\tau^i H
\end{align*}
Then  let $\sigma = \sqrt{8\log\left(\sqrt{2SAH}K\right)\sum_{i=1}^\tau(\alpha_\tau^iH)^2}.$ By applying the  Azuma-Hoeffding inequality, we have with probability at least $1-2\exp\left(-\frac{\sigma^2}{2\sum_{i=1}^\tau(\alpha^i_\tau H )^2 }\right)=1-\frac{1}{SAHK^4},$
$$ \vert X(\tau)\vert \leq \sqrt{8\log\left(\sqrt{2SAH}K\right)\sum_{i=1}^\tau(\alpha_\tau^i H)^2}\leq  \sqrt{\frac{\iota}{16} H^2\sum_{i=1}^\tau(\alpha_\tau^i)^2}\leq \frac{1}{4}\sqrt{\frac{H^2\iota(\chi+1)}{\chi+\tau}},$$
where the last inequality holds due to $\sum_{i=1}^\tau(\alpha_\tau^i)^2\leq \frac{\chi+1}{\chi+\tau}$ from Lemma \ref{le:lr}.\ref{le:lr-e}. Because this inequality holds for any $\tau\in[K]$, it also holds for $\tau=t=N_{k,h}(x,a)\leq K,$ 
Applying the union bound, we obtain that with probability at least $1-\frac{1}{K^3}$   the following inequality holds  simultaneously for all $(x,a,h,k)\in\mathcal{S}\times\mathcal{A}\times[H]\times[K],$:
$$\left\vert \sum_{i=1}^t\alpha_t^i\left\{(\hat{\mathbb{P}}_h^{k_i}-\mathbb{P}_h) V_{h+1}^{\pi}\right\}(x,a)\right\vert \leq  \frac{1}{4} \sqrt{\frac{H^2\iota(\chi+1)}{(\chi+t)}}.$$
Following a similar analysis we also have  that with probability at least $1-\frac{1}{K^3}$   the following inequality holds  simultaneously for all $(x,a,h,k)\in\mathcal{S}\times\mathcal{A}\times[H]\times[K],$: $$\left\vert \sum_{i=1}^t\alpha_t^i\left\{(\hat{\mathbb{P}}_h^{k_i}-\mathbb{P}_h) W_{h+1}^{\pi}\right\}(x,a)\right\vert \leq \frac{1}{4} \sqrt{\frac{H^2\iota(\chi+1)}{(\chi+t)}}.$$
\end{proof}

\begin{lemma}\label{le:drift_epi_neg}
	Given $\delta\geq 2\epsilon,$ under Triple-Q, the conditional  expected drift is
	\begin{align}
	\mathbb{E}\left[L_{T+1}-L_T\vert Z_T=z \right]\leq  
	 - \frac{\delta}{2}Z_T+ \frac{4H^2\iota}{K^{2}}+ \eta \sqrt{H^2\iota}+ H^4\iota+\epsilon^2 
	\end{align}\label{le:drift_epi}
\end{lemma}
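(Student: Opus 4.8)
The plan is to rerun the one‑frame Lyapunov‑drift bound of Lemma~\ref{le:drift}, changing only the policy against which we compare: instead of an optimal solution of the $\epsilon$‑tightened LP, I would compare against a Slater‑feasible policy $\zeta$ as guaranteed by Assumption~\ref{as:1}, which satisfies $\mathbb{E}[W_1^{\zeta}(x_1)]\ge\rho+\delta$. This single change replaces the ``$\le 0$'' slack obtained in Lemma~\ref{le:drift} (where the tightened constraint could be tight) by the strictly negative slack $\rho+\epsilon-(\rho+\delta)=\epsilon-\delta\le-\frac{\delta}{2}$, valid under the hypothesis $2\epsilon\le\delta$, and this is the origin of the $-\frac{\delta}{2}Z_T$ term in the statement.

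Concretely, I would first reproduce the opening of the proof of Lemma~\ref{le:drift}: from the virtual‑queue update $Z_{T+1}=(Z_T+\rho+\epsilon-\bar C_T/K^\alpha)^+$ and the elementary inequality $(u^+)^2\le u^2$, obtain $L_{T+1}-L_T\le Z_T(\rho+\epsilon-\bar C_T/K^\alpha)+\frac12(\rho+\epsilon-\bar C_T/K^\alpha)^2$, and bound the quadratic remainder by $H^4\iota+\epsilon^2$ using $0\le C_{k,1}\le H^2\sqrt{\iota}$ (Lemma~\ref{le:q1-bound}). It then remains to control $\frac{Z_T}{K^\alpha}\sum_{k}(\rho+\epsilon-C_{k,1}(x_{k,1},a_{k,1}))$, summed over the episodes $k$ of frame $T$, conditioned on $Z_T=z$. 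Fix such a $k$. Since $a_{k,1}$ maximizes the pseudo‑Q‑value $Q_{k,1}(x_{k,1},\cdot)+\frac{z}{\eta}C_{k,1}(x_{k,1},\cdot)$, it dominates the value attained by the (possibly randomized) action $\zeta_1(x_{k,1})$; and on the probability‑$(1-1/K^3)$ event $\mathcal{E}$ of Lemma~\ref{le:qk-qpi-relation}, applied with policy $\zeta$ and dual value $z$, that pseudo‑Q‑value of $\zeta_1(x_{k,1})$ is in turn at least $V_1^{\zeta}(x_{k,1})+\frac{z}{\eta}W_1^{\zeta}(x_{k,1})$ pointwise. Chaining these two facts, discarding $V_1^{\zeta}\ge 0$, and using $Q_{k,1}(x_{k,1},a_{k,1})\le H^2\sqrt{\iota}$ (Lemma~\ref{le:q1-bound}), I get $C_{k,1}(x_{k,1},a_{k,1})\ge W_1^{\zeta}(x_{k,1})-\frac{\eta}{z}H^2\sqrt{\iota}$, hence, after taking expectation over $x_{k,1}\sim\mu_0$ and invoking $\mathbb{E}[W_1^{\zeta}(x_1)]\ge\rho+\delta$,
$$\mathbb{E}\big[z(\rho+\epsilon-C_{k,1}(x_{k,1},a_{k,1}))\,\big|\,Z_T=z,\mathcal{E}\big]\ \le\ z(\epsilon-\delta)+\eta\sqrt{H^2\iota}\ \le\ -\frac{\delta}{2}z+\eta\sqrt{H^2\iota}.$$

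On the complementary event $\mathcal{E}^c$, which has probability at most $1/K^3$, I would bound $z(\rho+\epsilon-C_{k,1}(x_{k,1},a_{k,1}))$ crudely by using $0\le C_{k,1}\le H^2\sqrt{\iota}$ together with the a priori estimate $Z_T\le K^{1-\alpha}H^2\sqrt{\iota}$, which follows by telescoping the per‑frame increment $|Z_{T+1}-Z_T|\le H^2\sqrt{\iota}$ established inside the proof of Lemma~\ref{le:qk-qpi-relation}; multiplying by $1/K^3$ leaves the $\frac{4H^2\iota}{K^2}$ residual. Since the per‑episode bound is the same for every $k$ in the frame, averaging over the $K^\alpha$ episodes and adding back $H^4\iota+\epsilon^2$ yields exactly the claimed inequality.

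The hard part is the middle step. Because $a_{k,1}$ is greedy for the \emph{combined} pseudo‑Q‑value rather than for $C_{k,1}$ alone, the greedy inequality couples the reward and utility Q‑functions and compares only against a randomized action, so neither $Q_{k,1}(x_{k,1},a_{k,1})$ nor $C_{k,1}(x_{k,1},a_{k,1})$ can be isolated directly; this is precisely why the right tool is the over‑estimation of $F_{k,1}=Q_{k,1}+\frac{z}{\eta}C_{k,1}$ (and not of $C_{k,1}$ alone) from Lemma~\ref{le:qk-qpi-relation}. The spurious reward term produced along the way is then swept into the $\eta\sqrt{H^2\iota}$ slack, which is affordable only because $\eta=K^{0.2}$ is small relative to $K$; a secondary subtlety is to confirm that the $1/K^3$‑probability fallback contributes a vanishing $O(1/K^2)$ remainder rather than something scaling with $z$, for which the a priori bound $Z_T=O(K^{1-\alpha}H^2\sqrt{\iota})$ is exactly what is needed.
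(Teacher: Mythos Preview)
Your proposal is correct and follows essentially the same route as the paper's own proof: start from the one‑frame drift inequality of Lemma~\ref{le:drift} (the quadratic remainder bounded by $H^4\iota+\epsilon^2$ via Lemma~\ref{le:q1-bound}), use the greedy property of $a_{k,1}$ together with the pseudo‑Q over‑estimation of Lemma~\ref{le:qk-qpi-relation} applied to a Slater‑feasible policy, extract the $-\tfrac{\delta}{2}z$ slack from Slater's condition under $2\epsilon\le\delta$, absorb the leftover reward‑Q term into the $\eta$‑scaled constant via Lemma~\ref{le:q1-bound}, and control the $\mathcal{E}^c$ contribution using the deterministic bound $Z_T\le K^{1-\alpha}H^2\sqrt{\iota}$ times the $1/K^3$ failure probability. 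The paper's proof carries out exactly these steps, only with a slightly different bookkeeping (it adds and subtracts $\eta\sum_a\{Q_1^\pi q_1^\pi\}$ explicitly rather than isolating $C_{k,1}$ as you do), so your organization is a mild repackaging of the same argument.
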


\begin{proof}
	Recall that $L_T = \frac{1}{2}  Z_T^2,$ and the virtual queue is updated by using 
	$$	Z_{T+1} = \left(  Z_T   + \rho + \epsilon -\frac{\bar{C}_T}{K^\alpha}\right)^+.$$ From inequality \eqref{eq:drift-1}, we have
	\begin{align*}
		& \mathbb{E}\left[L_{T+1}-L_T\vert Z_T=z \right] \\
		\leq &   \frac{1}{K^\alpha}\sum_{k=(T-1)K^\alpha+1}^{TK^\alpha}\mathbb E\left[ Z_T\left(\rho+\epsilon-C_{k,1}(x_{k,1},a_{k,1})  \right)  -\eta  Q_{k,1} (x_{k,1},a_{k,1}) + \eta  Q_{k,1} (x_{k,1},a_{k,1}) \vert Z_T=z \right]+H^4\iota +\epsilon^2\\
		\leq_{(a)} &  \frac{1}{K^\alpha} \sum_{k=(T-1)K^\alpha+1}^{TK^\alpha}\mathbb E\left[Z_T\left( \rho+\epsilon-\sum_a \left\{C_{k,1}q_1^{\pi}\right\}(x_{k,1},a)\right)  -\eta \sum_a \{Q_{k,1}q_1^{\pi}\} (x_{k,1},a)+ \eta  Q_{k,1} (x_{k,1},a_{k,1})\vert Z_T=z \right]\\
		&+\epsilon^2 +H^4\iota \\
		\leq &  \frac{1}{K^\alpha}\sum_{k=(T-1)K^\alpha+1}^{TK^\alpha} \mathbb E\left[  Z_T\left(\rho+\epsilon-\sum_a \left\{C_{1}^{\pi}q_1^{\pi}\right\}(x_{k,1},a) \right) -\eta \sum_a \{Q_{k,1}q_1^{\pi}\} (x_{k,1},a)+ \eta  Q_{k,1} (x_{k,1},a_{k,1})\vert Z_T=z\right]  \\
		&+\frac{1}{K^\alpha}\sum_{k=(T-1)K^\alpha+1}^{TK^\alpha} \mathbb E\left[Z_T\sum_a  \left\{C_{1}^{\pi}q_1^{\pi}\right\}(x_{k,1},a) - Z_T\sum_a \left\{C_{k,1}q_1^{\pi}\right\}(x_{k,1},a)\vert Z_T=z\right] \\
		&+\frac{1}{K^\alpha}\sum_{k=(T-1)K^\alpha+1}^{TK^\alpha}\mathbb E\left[\eta \sum_a \left\{Q_{1}^{\pi}q_1^{\pi}\right\}(x_{k,1},a)  - \eta \sum_a \left\{Q_{1}^{\pi}q_1^{\pi}\right\}(x_{k,1},a) \vert Z_T=z\right]+H^4\iota+\epsilon^2\\ 
		\leq_{(b)}  &  - \frac{\delta}{2}z+ \frac{1}{K^\alpha}\sum_{k=(T-1)K^\alpha+1}^{TK^\alpha}\mathbb E\left[\eta \sum_a\left\{(F_1^{\pi}-F_{k,1})q_{1}^{\pi}\right\}(x_{k,1},a)\ + \eta  Q_{k,1} (x_{k,1},a_{k,1})\vert Z_T=z\right]+H^4\iota+\epsilon^2 \\
		\leq_{(c)} & - \frac{\delta}{2}z+ \frac{4H^2\iota}{K^{2}}+ \eta \sqrt{H^2\iota}+ H^4\iota+\epsilon^2. 
	\end{align*}
Inequality $(a)$ holds because of our algorithm. Inequality $(b)$ holds because $\sum_a \left\{Q_{1}^{\pi}q_1^{\pi}\right\}(x_{k,1},a)$ is non-negative, and under Slater's condition, we can find policy $\pi$ such that $$\epsilon+\rho-\mathbb E\left[\sum_a {C}^{\pi}_{1}(x_{k,1},a){q}^{\pi}_{1}(x_{k,1},a) \right]=\rho+\epsilon-\mathbb E\left[\sum_{h,x,a} {q}^{\pi}_h(x,a)g_h(x,a)\right]\leq -\delta+\epsilon  \leq -\frac{\delta}{2}.$$ Finally, inequality $(c)$ is obtained similar to \eqref{eq:F-bound}, and the fact that $Q_{k,1} (x_{k,1},a_{k,1})$ is bounded by using Lemma \ref{le:q1-bound}
\end{proof}

\end{document}